\newcommand{\yf}[1]{{\leavevmode\color{blue} [yifang: #1]}}
\newcommand{\yifang}[1]{\todo[inline,color=yellow!10]{\textbf{YF: }#1}}
\newcommand{\er}{{\textsc{er}}}
\theoremstyle{plain}
\newtheorem{lemma}{Lemma}[section]
\newtheorem{theorem}{Theorem}[section]
\newtheorem{assumption}{Assumption}[section]
\newtheorem{definition}{Definition}[section]
\newtheorem{claim}{Claim}[section]
\theoremstyle{remark}
\newtheorem{remark}{Remark}[section]
\def\Ebb{\mathbb{E}}
\def\Rbb{\mathbb{R}}
\def\R{\Rbb}
\def\*{\star}
\DeclareMathOperator*{\argmin}{arg\,min}
\DeclareMathOperator*{\argmax}{arg\,max}
\newcommand{\E}{\Ebb}
\newcommand{\field}[1]{\mathbb{#1}}
\newcommand{\fR}{\field{R}}
\newcommand{\fB}{\field{B}}
\newcommand{\calB}{{\mathcal{B}}}
\newcommand{\calS}{{\mathcal{S}}}
\newcommand{\calE}{{\mathcal{E}}}
\newcommand{\calX}{{\mathcal{X}}}
\newcommand{\calN}{{\mathcal{N}}}
\newcommand{\calW}{{\mathcal{W}}}
\newcommand{\calO}{{\mathcal{O}}}
\newcommand{\calR}{{\mathcal{R}}}
\newcommand{\order}{\ensuremath{\mathcal{O}}}
\newcommand{\otil}{\ensuremath{\widetilde{\mathcal{O}}}}
\newcommand{\trace}{\text{Tr}}
\title{Active Representation Learning for General Task Space with Applications in Robotics}
\author{%
    Yifang Chen$^1$, Yingbing Huang$^2$, Simon S. Du$^{1*}$,  Kevin Jamieson$^{1*}$, Guanya Shi$^{1*}$\\
    \\
    $^1$ Paul G. Allen School of Computer Science \& Engineering\\
    University of Washington, Seattle,WA\\
    \texttt{ \{yifangc, ssdu, jamieson, guanyas\}@cs.washington.edu}\\
    \\
    $^2$ Cornell University, Ithaca, NY\\
    \texttt{ \{yh374\}@cornell.edu}\\
    \\
    $^*$ Equal advising
}
\begin{document}
\maketitle

\begin{abstract}

    Representation learning based on multi-task pretraining has become a powerful approach in many domains. 
    In particular, task-aware representation learning aims to learn an optimal representation for a specific target task by sampling data from a set of source tasks, while task-agnostic representation learning seeks to learn a universal representation for a class of tasks.  
    In this paper, we propose a general and versatile algorithmic and theoretic framework for \emph{active representation learning}, where the learner optimally chooses which source tasks to sample from. This framework, along with a tractable meta algorithm, allows most arbitrary target and source task spaces (from discrete to continuous), covers both task-aware and task-agnostic settings, and is compatible with deep representation learning practices. 
    We provide several instantiations under this framework, from bilinear and feature-based nonlinear to general nonlinear cases. In the bilinear case, by leveraging the non-uniform spectrum of the task representation and the calibrated source-target relevance, we prove that the sample complexity to achieve $\varepsilon$-excess risk on target scales with 
     $ (k^*)^2 \|v^*\|_2^2 \varepsilon^{-2}$ where $k^*$ is the effective dimension of the target and $\|v^*\|_2^2 \in (0,1]$ represents the connection between source and target space. Compared to the passive one, this can save up to $\frac{1}{d_W}$ of sample complexity, where $d_W$ is the task space dimension. 
    Finally, we demonstrate different instantiations of our meta algorithm in synthetic datasets and robotics problems, from pendulum simulations to real-world drone flight datasets. On average, our algorithms outperform baselines by $20\%-70\%$.

\end{abstract}

\section{Introduction}



Recently, few-shot machine learning has enjoyed significant attention and has become increasingly critical due to its ability to derive meaningful insights for target tasks that have minimal data, a scenario commonly encountered in real-world applications.
This issue is especially prevalent in robotics where data collection and training data is prohibitive to collect or even non-reproducible (e.g., drone flying with complex aerodynamics~\cite{shi2019neural} or legged robots on challenging terrains~\cite{lee2020learning}). One promising approach to leveraging the copious amount of data from a variety of other sources is multi-task learning, which is based on a key observation that different tasks may share a common low-dimensional representation. This process starts by pretraining a representation on source tasks and then fine-tuning the learned representation using a limited amount of target data (\citep{radford2021learning,brown2020language,yu2022coca,alayrac2022flamingo,o2022neural}).

In conventional supervised learning tasks, accessing a large amount of source data for multi-task representation learning may be easy, but processing and training on all that data can be costly. In real-world physical systems like robotics, this challenge is further amplified by two factors: (1) switching between different tasks or environments is often significantly more expensive (e.g., reset giant wind tunnels for drones~\cite{o2022neural}); (2) there are infinitely many environments to select from (i.e., environmental conditions are continuous physical parameters like wind speed). Therefore, it is crucial to minimize not only the number of samples, but the number of sampled source tasks, while still achieving the desired performance on the target task. Intuitively, not all source tasks are equally informative for learning a universally good representation or a target-specific representation. This is because source tasks can have a large degree of redundancy or be scarce in other parts of the task space.
In line with this observation, \citet{chen2022active} provided the first provable active representation learning method that improves training efficiency and reduces the cost of processing source data by prioritizing certain tasks during training with theoretical guarantees. On the other hand, many existing works \cite{du2021fewshot, tripuraneni2020theory, tripuraneni2021provable, thekumparampil2021sample, xu2021representation} prove that it is statistically possible to learn a universally good representation by randomly sampling source tasks (i.e., the passive learning setting). 

The previous theoretical work of \cite{chen2022active} on active multi-task representation learning has three main limitations. First, it only focuses on a finite number of discrete tasks, treating each source independently, and therefore fails to leverage the connection between each task. This could be sub-optimal in many real-world systems like robotics for two reasons: (1) there are often infinitely many sources to sample from (e.g., wind speed for drones); (2) task spaces are often highly correlated (e.g., perturbing the wind speed will not drastically change the aerodynamics).
In our paper, by considering a more general setting where tasks are parameterized in a vector space $\calW$, we can more effectively leverage similarities between tasks compared to treating them as simply discrete and different.
Secondly, the previous work only considers a single target, 
while we propose an algorithm that works for an arbitrary target space and distribution. This is particularly useful when the testing scenario is time-variant. Thirdly, we also consider the task-agnostic setting by selecting $\order(k)$ representative tasks among the $d_W$ high dimension task space, where $k \ll d_W$ is the dimension of the shared representation. Although this result does not improve the total source sample complexity compared to the passive learning result in the bilinear setting~\cite{thekumparampil2021sample}, it reduces the number of tasks used in the training and therefore implicitly facilitates the training process. 
 
In addition to those theoretical contributions, we extend our proposed algorithmic framework beyond a pure bilinear representation function, including the \textit{known} nonlinear feature operator with unknown linear representation (e.g., random features with unknown coefficients), and the totally \textit{unknown nonlinear representation} (e.g., deep neural network representation). 
%
%
While some prior works have considered nonlinear representations  \cite{du2021fewshot, tripuraneni2020theory, collins2022maml, xu2021representation} in passive learning, the studies in active learning are relatively limited \cite{chen2022active}. All of these works only consider non-linearity regarding the input, rather than the task parameter. In this paper, we model task-parameter-wise non-linearity and show its effectiveness in experiments. Note that it particularly matters for task selections because the mapping from the representation space to task parameters to is no longer linear.

\subsection{Summery of contributions}

\begin{itemize}[leftmargin=*]
    \item We propose the first generic active representation learning framework that admits any arbitrary source and target task space. This result greatly generalizes previous works where tasks lie in the discrete space and only a single target is allowed. To show its flexibility, we also provide discussions on how our framework can accommodate various supervised training oracles and optimal design oracles. (Section~\ref{sec: framework})
    \item We provide theoretical guarantees under a benign setting, where inputs are i.i.d. and a unit ball is contained in the overall task space, as a compliment to the previous work where tasks lie on the vertices of the whole space. In the target-aware setting, to identify an $\varepsilon$-good model our method requires a sample complexity of $\otil(kd_X (k^*)^2 \|v^*\|_2^2 \min\{k^*, \kappa^2\}\varepsilon^{-2})$ where $k^*$ is the effective dimension of the target, $\kappa$ is the conditional number of representation matrix, and $\|v^*\|_2^2 \in (0,1]$ represents the connection between source and target space that will be specified in the main paper.  Compared to passive learning, our result saves up to a factor of $\frac{k^2}{d_W}$ in the sample complexity when targets are uniformly spread over the $k$-dim space and up to $\frac{1}{d_W}$ when targets are highly concentrated. Our results further indicate the necessity of considering the continuous space by showing that directly applying the previous algorithm onto some discretized sources in the continuous space (e.g., orthonormal basis) can lead to worse result. Finally, ignoring the tasks used in the warm-up phases, in which only a few samples are required,  both the target-aware and the target-agnostic cases can save up to $\otil(k^* + k)$ number of tasks compared to the passive one which usually requires $d_W$ number of tasks. (Section~\ref{sec: theory}) 
    %
    \item We provide comprehensive experimental results under different instantiations beyond the benign theoretical setting, studying synthetic and real-world scenarios: 1) For the synthetic data setting in a continuous space, we provide results for pure linear, known nonlinear feature operator $\psi_X$ and unknown nonlinear representation $\phi_X$. Our target-aware active learning (AL) approach shows up to a significant budget saving (up to $68\%$) compared to the passive approach and the target-agnostic AL approach also shows an advantage in the first two cases. 2) In a pendulum simulation with continuous task space, we provide the results for known nonlinear feature operator $\psi_X$ and $\psi_W$ and show that our target-aware AL approach has up to $20\%$ loss reduction compared to the passive one, which also translates to better nonlinear control performance. 3) Finally, in the real-world drone dataset with a discrete task space, we provide results for unknown linear and nonlinear representation $\phi_X$ and show that our target-aware AL approach converges much faster than the passive one. (Section~\ref{sec: experiment})
\end{itemize}

\section{Preliminary}

\paragraph{Multi-task (or multi-environments).}
Each task or environment is parameterized by a known vector $w \in \fR^{d_W}$. We denote the source and target task parameter space as $\calW_\text{source} \subset \fR^{d_W}, \calW_\text{target} \subset \fR^{d_W}$. These spaces need not be the same (e.g., they could be different sub-spaces). In the discrete case, we set $w$ as a one-hot encoded vector and therefore we have in total $d_W$ number of candidate tasks while in the continuous space, there exist infinitely many tasks. For convenience, we also use $w$ as the subscript to index certain tasks. In addition, we use $\nu_\text{source} \in \Delta(\calW_\text{source}), \nu_\text{target} \in \Delta(\calW_\text{target})$ to denote the task distribution for the sources and targets.

\paragraph{Data generation.}
Let $\calX \in \fR^{d_X}$ be the input space.
We first assume there exists some \textit{known} feature/augmentation operator $\psi_X: \calX \to \fR^{d_{\psi_X} \geq d_W}, \psi_W: \calW \to \fR^{d_{\psi_W} \geq d_W}$, that can be some non-linear operator that lifts $w, x$ to some higher dimensional space (e.g., random Fourier features~\cite{rahimi2008uniform}). Notice that the existence of non-identical $\psi$ indicates the features are not pairwise independent and the design space of $\calW_\text{source}$ is not benign (e.g., non-convex), which adds extra difficulty to this problem. 

Then we assume there exists some \textit{unknown} underlying representation function $\phi_X: \psi(\calX) \rightarrow \calR$ which maps the augmented input space $\psi(\calX)$ to a shared representation space $\calR \in \R^k$ where $k \ll d_{\psi_X}, k \leq d_{\psi_W}$, and its task counterparts $\phi_W: \psi(\calW) \rightarrow \calR$ which maps parameterized task space to the feature space.
Here the representation functions are restricted to be in some function classes $\Phi$, e.g., linear functions, deep neural networks, etc. 

In this paper, we further assume that $\phi_W$ is a linear function $B_W \in \fR^{k \times d_{\psi_W}}$. 
To be more specific, for any fixed task $w$, we assume each sample $(x,y) \sim \nu_w$ satisfies
\begin{equation}\label{equ_data_gen}
    y = \phi_X ( \psi_X(x))^\top B_W \psi_W(w) + \xi, \quad \xi \sim \mathcal{N}(0, \sigma^2)
\end{equation}
For convenience, we denote $Z_w$ as the collection of $n_w$ sampled data $(x_{w}^1, y_{w}^1),...,(x_{w}^{n_w}, y_{w}^{n_w}) \sim \mu_w$. 
We note that when $\psi_X, \psi_W$ is identity and $\phi_X$ is linear, this is reduced to standard linear setting in many previous papers \cite{du2021fewshot,tripuraneni2021provable, thekumparampil2021sample,chen2022active}.

\paragraph{The task diversity assumption.} 
There exists some distribution $p \in \Delta(\calW_\text{source})$ that 
$\E_{w \sim p}\lambda_{\min}(B_W \psi_W(w) \psi_W(w)^\top B_W^\top) > 0$, which suggests the source tasks are diverse enough to learn the representation. 

\paragraph{Data collection protocol.}
\label{paragraph: target-aware data collection}

We assume there exists some i.i.d. data sampling oracle given the environment and the budget. To learn a proper representation, we are allowed access to an \textit{unlimited} $n_\text{source}$ number of data from source tasks during the learning process by using such an oracle.
Then at the end of the algorithm, we are given a few-shot of \textit{mix} target data $Z_\text{target} = \{Z_w\}_{w \sim \nu_\text{target}}$ which is used for fine-tuning based on learned representation $\hat{\phi}_X$. Denote $n_\text{target}$ as the number of data points in $Z_\text{target}$.

\paragraph{Data collection protocol for target-aware setting.} When the target task is not a singleton, we additionally assume a few-shot of \textit{known environment} target data $\dot{Z}_\text{target}: = \{Z_w, w\}_{w \in \dot{W}_\text{target}}$, where $|\dot{W}_\text{target}| = \text{dim}(\calW_\text{target})$ and $ \dot{W}_\text{target} = \{ \argmax_{W \in \calW_\text{target}} \lambda_{\min} (W W^\top) \}$. Again denote $\dot{n}_\text{target}$ as the number of data points in $\dot{Z}_\text{target}$, we have $\dot{n}_\text{target} \approx n_\text{target}^{2/3} \ll n_\text{source}$. 

\begin{remark}
    Here $|\dot{W}_\text{target}|$ represents vectors that can cover every directions of $\calW_\text{target}$ space. 
    This extra $\dot{Z}_\text{target}$ requirement comes from the non-linearity of $l_2$ loss and the need to learn the relationship between sources and targets. We want to emphasize that such an assumption implicitly exists in previous active representation learning \cite{chen2022active} since $\dot{Z}_\text{target} = Z_\text{target}$ in their single target setting.
    Nevertheless, in a passive learning setting, only \text{mixed} $Z_\text{target}$ is required since no source selection process involves. Whether such a requirement is necessary for target-aware active learning remains an open problem.
\end{remark}



\paragraph{Other notations.} Let $e_i$ to be one-hot vector with $1$ at $i$-th coordinates and let $\epsilon_i = 2^{-i}$.  


\subsection{Goals}

\paragraph{Expected excess risk.} 
For any target task space $\calW_\text{target}$ and its distribution $\nu_\text{target}$ over the space, as well as a few-shot examples as stated in section~\ref{paragraph: target-aware data collection}, our goal is to minimize the expected excess risk with our estimated $\hat{\phi}_X$
\begin{align*}
    \er(\hat{\phi}_X, \nu_\text{target})  = \E_{w_0 \sim \nu_\text{target}} \E_{(x,y) \sim \nu_{w_0}}  \| \hat{\phi}_X(\psi_X(x))^\top \hat{w}_\text{avg} - y \|_2
\end{align*}
where $\hat{w}_\text{avg} = \argmin_w \sum_{(x,y) \in Z_\text{target}} \|\hat{\phi}_X(\psi_X(x)) w - y\|_2$, which average model estimation that captures the data behavior under the expected target distribution. Note that the $\calW_\text{target},\nu_\text{target}$ are given in advance in the target-aware setting.

\paragraph{The number of tasks.}
Another side goal is to save the number of long-term tasks we are going to sample during the learning process. Since a uniform exploration over $d_W^\text{source}$-dimension is unavoidable during the warm-up stage, we define long-term task number as 
\begin{align*}
    \left\lvert \left\{ w \in \calW_\text{source} \mid \ n_w \geq \Tilde{\Omega}(\varepsilon^{- \alpha}) \right\} \right\rvert
\end{align*}
where $\alpha$ is some arbitrary exponent and $\varepsilon$ is the target accuracy and $n_w$ is number of samples sampled from task $w$ as defined above.

\section{A general framework}
\label{sec: framework}

\begin{algorithm}[ht]
\caption{Active multi-task representation learning (general templates)}
\label{algo: main-general}
\begin{algorithmic}[1]
\STATE \textbf{Inputs:} Candidate source set $\calW_\text{source}$. Classes of candidate representation function $\Phi_X, \Phi_W$ and the known feature operator $\psi_X, \psi_W$. 
\STATE \textbf{[Target-aware only] Inputs:} Target set $\calW_\text{target}$ and distribution $\nu_\text{target}$. Few-shot sample $\dot{Z}_\text{target}$ as defined in the preliminary.
\STATE \textcolor{blue}{Stage 1: Coarse exploration. (Warm-up stage) }
\STATE Set initial sampling distribution $q_0 = g(\psi_W, I_{d_{\psi_W}})$ where $g$ is defined in Eqn.~\ref{eqn: optimal design}
\STATE Set $n_0 \approx \text{poly}(d_{\psi_X},k) + \text{poly}(d_{\psi_W},k)$. Collect $n_0 q_0(w)$ data for each task denoted as $\{Z_w\}_{w | q_0(w) \neq 0}$ and update $\hat{\phi}_X \leftarrow \calO_\text{offline 0}^X(\{Z_w\}_{w | q_0(w) \neq 0 }, \psi_X)$ and $\hat{B}_W \leftarrow \calO_\text{offline}^W(\{Z_w\}_{w | q_0(w) \neq 0 }, \hat{\phi}_X)$ 
\FOR{$j = 1,2,3, \ldots$}
    \STATE \textcolor{blue}{Stage 2: Fine target-agnostic exploration (Directly choose $q_1^j = q_0$ when $k = \Theta(d_W)$)}
    \STATE Compute the exploration sampling distribution $q_1^j = g(\hat{B}_W \circ \psi_W, I_k)$
    \STATE $n_1^j \approx\text{poly}(d_{\psi_X},k) \epsilon_j^{-\frac{4}{3}} $. Collect $n_1^j q_1^j(w)$ data for each task denoted as $\{Z_w\}_{w | q(w) \neq 0}$ and update $\hat{\phi}_X \leftarrow \calO_\text{offline 1}^X(\{Z_w\}_{w | q_1^j(w) \neq 0 }, \psi_X)$ and $\hat{B}_W \leftarrow \calO_\text{offline}^W(\{Z_w\}_{w | q_1^j(w) \neq 0 }, \dot{Z}_\text{target}, \hat{\phi}_X)$ 
    \STATE \textcolor{blue}{\textbf{[Target-aware only]} Stage 3: Fine target-aware exploration}
    \STATE Compute the exploitation sampling distribution $q_2^j = g(\hat{B}_W \circ \psi_W, \Sigma_\text{regu} )$
    where $\Sigma_\text{regu}$ is the regularized version of $\hat{B}_W (\E_{w_0 \sim \nu_0} w_0 w_0^\top) \hat{B}_W^\top$ 
    after clipping out insignificant eigenvalues. \\
    \STATE Set $n_2^j \approx \text{poly}(d_{\psi_X},k) \epsilon_j^{-2} $. Collect $n_2^j q_2^j (w)$ data for each task denoted as $\{Z_w\}_{w | q_2^j(w) \neq 0}$ and update $\hat{\phi}_X \leftarrow \calO_\text{offline 3}^X(\{Z_w\}_{w | q_1^j(w) \neq 0 \text{ and } q_2^j(w) \neq 0}, \psi_X)$.
\ENDFOR
\STATE \textbf{Return }  $\hat{\phi}_X$
\end{algorithmic}
\end{algorithm}

Our algorithm~\ref{algo: main-general} iteratively estimates the shared representation $\hat{\phi}_X, \hat{B}_W$ and the next 
target relevant source tasks which the learner should sample from by solving several optimal design oracles 
\begin{align}
\label{eqn: optimal design}
    g(f, A) = \min_{q \in \Delta(\calW_{source})} \lambda_{\max} \left( (\int q(w) f(w) f(w)^\top)^{-1} A \right)
\end{align}
This exploration and exploitation (target-aware exploration here) trade-off is inspired by the classical $\epsilon$-greedy strategy, but the key difficulty in our work is to combine that with multi-task representation learning and different optimal design problems. The algorithm can be generally divided into three parts, and some parts can be skipped depending on the structure and the goal of the problem.
\begin{itemize}[leftmargin=*]
    \item \textbf{Coarse exploration:}  The learner uniformly explores all the directions of the $\calW_\text{source}$ (denoted by distribution $q_0$) in order to find an initial $k$-dimension subspace $V$ that well spans over the representation space (i.e., $ \frac{1}{c}B_W B_W^\top \leq B_W V V^\top B_W^\top \leq c B_W B_W^\top$ for some arbitrary constant $c \leq \frac{d_{\psi_W}}{k}$). To give an intuitive example, suppose $B_W \in \fR^{2 \times d_{\psi_W}^\text{source} + 1}$ has the first half column equals $e_1$ and the second hard equals $e_2$. Then instead of uniformly choosing $\{e_i\}_{i \in [d_{\psi_W}^\text{source}]}$ task, we only need explore over two tasks $V[1] = \sqrt{\frac{2}{d_{\psi_W}^\text{source}}}[1,1, \ldots, 0, 0, \ldots], V[2] = \sqrt{\frac{2}{d_W^\text{source}}}[0,0, \ldots, 1, 1, \ldots]$. We want to highlight that the sample complexity of this warm-up stage only scales with $d_{\psi_X}, k$ and the spectrum-related parameters of $B_W$ (i.e., $\kappa(B_W), \sigma_{\min}(B_X)$), not the desired accuracy $\varepsilon$.
    \item \textbf{Fine target-agnostic exploration: } The learner iteratively updates the estimation of $V$ and uniformly explore for $\otil(\epsilon_j^{-\frac{4}{3}})$ times on this $k$, instead of $d_{\psi_W}$ subspace, denoted by distribution $q_1$. (Note this $\epsilon_j^{-\frac{4}{3}}$ comes from the exploration part in $\epsilon$-greedy, which is $(n_2^j)^{\frac{2}{3}}$)
    Such reduction not only saves the cost of maintaining a large amount of physical environment in real-world experiments but also simplifies the non-convex multi-task optimization problem. Of course, when $k = \Theta(d_{\psi_W})$, we can always uniformly explore the whole $(d_{\psi_W}$ space as denoted in the algorithm. 
    Note that theoretically, $q_1$ only needs to be computed once as shown in \ref{sec: theory}. In practice, to further improve the accuracy while saving the task number, the $q_1$ can be updated only when a significant change from the previous one happens, which is adopted in our experiments as shown in appendix~\ref{sec: synthetic data (appendix)}.
    \item \textbf{Fine target-aware exploration.} In the task-awareness setting, the learner estimates the most-target-related sources parameterized by $\{w\}$ based on the current representation estimation and allocates more budget on those, denoted by distribution $q_2$. By definition, $q_2$ should be more sparse than $q_1$ and thus allowing the final sample complexity only scales with $k^*$, which measures the effective dimension in the source space that is target-relevant.
\end{itemize}

\paragraph{Computational oracle for optimal design problem.}
\label{para: optimal design oracle}
Depending on the geometry of $\{\psi_W(w)\}_{w \in \calW_\text{source}}$, the learner should choose proper offline optimal design algorithms to solve $g(f,A)$. Here we propose several common choices. 1). When $\calW_\text{source}$ contains a ball, we can approximate the solution via an eigendecomposition-based closed-form solution with an efficient projection as detailed in Section~\ref{sec: theory}. 2) When $\calW_\text{source}$ is some other convex geometry, we can approximate the result via the Frank-Wolfe type algorithms \citep{todd2016minimum}, which avoids explicitly looping over the infinite task space. 3) For other even harder geometry, we can use discretization or adaptive sampling-based approximation \citep{akimoto2012theoretical}. In our experiments, we adopt the latter one and found out that its running time cost is almost neglectable in our pendulum simulator experiment in Section~\ref{sec: experiment}, where the $\psi_W$ is a polynomial augmentation.

\paragraph{Offline optimization oracle $\calO_\text{offline}^X$.}
Although we are in the continuous setting, the sampling distribution $q_0, q_1, q_2$ is sparse. Therefore, our algorithm allows any proper passive multi-task learning algorithm, either theoretical or heuristic one, to plugin the $\calO_\text{offline}^X$. Some common choices include gradient-based joint training approaches\citep{raghu2019rapid,nichol2018reptile,antoniou2018train,hospedales2021meta}, the general non-convex ERM \cite{du2021fewshot} and other more carefully designed algorithms \cite{thekumparampil2021sample, chen2021weighted}. We implement the first one in our experiments (Section~\ref{sec: experiment}) to tackle the nonlinear $\psi_X, \phi_X$ and give more detailed descriptions of the latter two in Section~\ref{sec: theory} and Appendix~\ref{sec: training oracle (appendix)} to tackle the bilinear model.

\section{A theoretical analysis under the benign $\calW_\text{source}$ setting}
\label{sec: theory}

\subsection{Assumptions}

\begin{assumption}[Geometry of the task space]
    We assume the source task space $\calW_\text{source}$ is a unit ball $\fB^{d_W^\text{source}}(1)$ that span over the first $d_W^\text{source} \geq \frac{1}{2}d_W $ without loss of generality, while the target task space  $\calW_\text{target} \subset \fR^{d_W}$ can be any arbitrary $\fB^{d_W^\text{target}}(1)$.
\end{assumption}
Under this assumption, we let $B_W^\text{source}$ denote the first $d_W^\text{source}$ columns of $B_W$, which stands for the source-related part of $B_W$. And $B_W^\text{target}$

Then we assume the bilinear model where $\phi_X = B_X \in \fB^{d_X \times k}$ and $\psi_X, \psi_W = I$. Therefore, $d_{\psi_X} = d_X, d_{\psi_W} = d_W$. Moreover the model satisfies the following assumptions 
\begin{assumption}[Benign $B_X, B_W$]
    $B_X$ is an orthonormal matrix. Each column of $B_W$ has magnitude $\Theta(1)$ and $\sigma_{\min}(B_W^\text{source}) > 1$. Suppose we know $\bar{\kappa} \geq \kappa(B_W^\text{source}), \sigma_{\max}(B_W^\text{target})$ and $\underline{\sigma} \leq \sigma_\text{min}(B_W^\text{source}), \sigma_\text{min}(B_W^\text{target})$. Trivially, $\bar{\kappa} = \sqrt{d_W}, \underline{\sigma} = 1$. 
\end{assumption}

Finally, the following assumption is required since we are using a training algorithm in \cite{thekumparampil2021sample} and might be able to relax to sub-gaussian by using other suboptimal oracles.
\begin{assumption}[Isotropic Gaussian Input]
\label{ass: isotropic input}
    For each task $w$, its input $i$ satisfies $x_{i,w} \sim \calN(0, I_d)$. 
\end{assumption}

\subsection{Algorithm}
Here we provide the target-aware theory and postpone the target-agnostic in the Appendix.~\ref{sec: target-agnostic (appendix)} since its analysis is covered by the target-aware setting.

\begin{algorithm}[ht]
\caption{Target-aware algorithm for benign source space}
\label{algo: main}
\begin{algorithmic}[1]
\STATE \textbf{Inputs:} Target probability $\delta$, $\bar{\kappa}, \underline{\sigma}$. Some constant $\beta_1, \beta_2, \beta_3$. Others same as Algo.~\ref{algo: main-general}.
\STATE Set $q_0$ as
$
    q_0(e_t) = \frac{1}{d_W}, \forall t \in d_W, \text{and} q_0(w) = 0 \text{ otherwise}
$
\STATE Set $n_0 = \beta_1 \Bar{\kappa}^2 \left(k^3d_X\Bar{\kappa}^2 + d_W^{\frac{3}{2}}\underline{\sigma}^{-2}\sqrt{k + \log(1/\delta)}\right)$. Collect $n_0 q_0(w)$ data for each task denoted as $\{Z_w\}_{w | q(w) \neq 0}$
\STATE Update $\hat{B}_X \leftarrow \calO_\text{offline 1}^X(\{Z_w\}_{w | q_0(w) \neq 0 })$ and $\hat{B}_W^\text{source} \leftarrow \calO_\text{offline}^W(\{Z_w\}_{w | q_0(w) \neq 0 }, \hat{B}_X)$ 
\STATE Compute $q_1$ as 
$q_1(v_i) = \frac{1}{k}, \forall i \in k$, and $q_0(w) = 0$ otherwise.
\text{Here} $v_i$ is the $i$-th vector of $V$,  where $U,D,V \leftarrow \text{SVD}(\hat{B}_W^\text{source})$ 
\label{line: compute q_1}
\FOR{$j = 1,2,3, \ldots$}
    \STATE Set $n_1^j = \beta_2  \epsilon_j^{-\frac{4}{3}}k^{\frac{5}{3}}d_W^{\frac{2}{3}}d_X^\frac{1}{3}
    \left( k^{\frac{2}{3}}d_W^{\frac{1}{3}}\underline{\sigma}^{-\frac{4}{3}} + \Bar{\kappa}^2\underline{\sigma}^{-\frac{2}{3}}\right)$. Collect $n_1^j q_1(w)$ data for each task denoted as $\{Z_w\}_{w | q_1(w) \neq 0}$.
    \STATE Update $\hat{B}_X \leftarrow \calO_\text{offline 2}^X(\{Z_w\}_{w | q_1(w) \neq 0})$,  $\hat{B}_W^\text{source} \leftarrow \calO_\text{offline}^W(\{Z_w\}_{w | q_1(w) \neq 0}, \hat{B}_X)$ \\
    and  $\hat{B}_W^\text{target} \leftarrow \calO_\text{offline}^W(\dot{Z}_\text{target}, \hat{B}_X)$
    \STATE Find a set of target-aware tasks parameterized by $\Tilde{W}_j$ with each column $i$ as
    \begin{align*}
        \Tilde{W}_j(i) = \text{Proj}_{\calW_\text{source}} w_i^\prime =
        \frac{w_i^\prime}{\|w_i^\prime\|_2} 
    \end{align*}
    \vspace{-10px}
    \begin{align*}
        \text{where } 
        w_i^\prime = \argmin_{w} \|w\|_2 \quad
        \text{s.t.} \quad & \hat{B}_{W,j}^\text{source} w =  u_i\sqrt{\lambda_i} \quad \forall \Lambda_i \geq 8(kd_W)^\frac{3}{2} \sqrt{\frac{d_X}{n_1}}  \\
        & \text{where } U, \Lambda \leftarrow  \text{Eig}\left(\E_{w_0 \sim \nu_\text{target}}\left[\hat{B}_{W,j}^\text{target}  w_0 (\hat{B}_{W,j}^\text{target} w_0)^\top \right] \right)
    \end{align*}
    \label{line: eigdecompo}
    \STATE Compute $q_2^j$ as 
    $ q_2^j(w) = \frac{1}{\text{\# col}(\tilde{W}_j)}, \forall w \in \text{col}(\Tilde{W}_j)$ and $ q_2^j(w) = 0$ otherwise
    
    \STATE Assign $n_2^j$ total sampling budget as
    $\text{\# col}(\tilde{W}_j) \beta_3 \max_i \|W_j^\prime(i)\|_2^2 \epsilon_j^{-2}$
    \STATE Collect $n_2^j(w) = n_2^j q_2^j(w)$ data for each task denoted as $\{Z_w\}_{w | q_2(w) \neq 0}$. 
    \STATE Update the model, note that both data collected from stage 2 and stage 3 are used.
    \begin{align*}
        &\tilde{B}_X \leftarrow \calO_\text{offline 3}^X(\{Z_w\}_{w | q_1(w) \neq 0 \text{ and } q_2(w) \neq 0}) 
    \end{align*}
\ENDFOR
\STATE \textbf{Return }  $\tilde{B}_X$
\end{algorithmic}
\end{algorithm}

This target-aware algorithm~\ref{algo: main} follows the 3-stage which corresponds to sampling distribution $q_0, q_1, q_2$ with explicit solutions. Notice that calculating $q_1$ once is enough for theoretical guarantees. 

We use existing passive multi-task training algorithms as oracles for $\calO_\text{offline 1}^X, \calO_\text{offline 2}^X$ and use the simple ERM methods for $\calO_\text{offline}^W$ based on the learned $\hat{B}$. For the coarse exploration and fine target-agnostic exploration stage, the main purpose is to have a universal good estimation in all directions of $B_X$. ( i.e., upper bound the $\sin(\hat{B}_X, B_X)$) Therefore we choose the alternating minimization (MLLAM) proposed in \cite{thekumparampil2021sample}.
%
%
%
On the other hand, for the fine target-aware exploration, we mainly care about final transfer learning performance on learned representation. Therefore, we use a non-convex ERM from \cite{du2021fewshot}. We defer the details and its theoretical guarantees for $\calO_\text{offline}$ into Appendix~\ref{sec: training oracle (appendix)}.

Note the major disadvantage from \cite{du2021fewshot} comes from its sample complexity scaling with a number of training source tasks, which will not be a problem here since in $\calO_\text{offline, 3}^X$ since only $k + k^* \ll d_W$ number of tasks are used. The major benefit of using non-convex ERM comes from its generality that it works even for the non-linear setting and is not tied with a specific algorithm. That is to say, as long as there exists other theoretical or heuristic oracles  $\calO_\text{offline, 1}^X, \calO_\text{offline, 2}^X$ giving a similar guarantee, stage 3 always works.

\subsection{Results}
\begin{theorem}[Informal]
\label{them: main}
    By running Algo.~\ref{algo: main}, in order to let $ER(\hat{\phi}_X, \nu_\text{target}) \leq \varepsilon^2$ with probability $1-\delta$, the number of source samples $n_\text{source}$ is at most
    \begin{align*}
        \otil \left(\left( kd_X + \log(1/\delta) \right) (k^*)^2 \min\{k^*, \kappa^2(B_W)\} \max_i \|W_i^*\|_2^2 \varepsilon^{-2}
        + \text{low-order} \right)
    \end{align*}
    Here $k^* = \text{rank}(\E_{w_0 \sim \nu_\text{target}}B_W w_0 w_0^\top B_W^\top)$ represents the effective dimension of target and 
    \begin{align*}
        W^*_i = \argmin_{w \in \calW_\text{source}} \|w\|_2 
        \quad \text{s.t} \quad B_W^\text{source} w = u_i \sqrt{\lambda_i} 
        \text{ where } U, \Lambda \leftarrow \text{Eig}(\E_{w_0 \sim \nu_\text{target}}B_W w_0 w_0^\top B_W^\top).
    \end{align*}
    As long as the number of target samples satisfies 
    \begin{align*}
       & n_\text{target} \geq \tilde{\Omega}((k + \log(1/\delta)) \varepsilon^{-2}),
       & \dot{n}_\text{target} \gtrapprox 
        \widetilde{\Omega}\left(\varepsilon^{-\frac{4}{3}} (k^*)^{\frac{2}{3}}
                \sqrt{k}\left( d_W^{\frac{1}{2}}\underline{\sigma}^{-\frac{4}{3}} 
                + k^{-\frac{2}{3}}d_W^\frac{1}{6}\Bar{\kappa}^2\underline{\sigma}^{-\frac{1}{3}}\right)
                \right)
    \end{align*}
\end{theorem}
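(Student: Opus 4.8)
# Proof Proposal for Theorem 3.1

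The plan is to track the error of $\hat{B}_X$ through the three stages of Algorithm~\ref{algo: main}, converting the optimal-design guarantees into subspace-angle bounds and finally into excess risk. The key quantity throughout is $\sin\theta(\hat B_X, B_X)$, the principal angle between the estimated and true representation subspaces; the known oracle guarantees from \cite{thekumparampil2021sample} (for $\calO^X_{\text{offline 1}}, \calO^X_{\text{offline 2}}$) and \cite{du2021fewshot} (for $\calO^X_{\text{offline 3}}$) will be invoked as black boxes. First I would analyze \textbf{Stage 0 (coarse exploration)}: since $q_0$ spreads uniformly over $\{e_t\}_{t\in[d_W]}$, the effective per-task sample count is $n_0/d_W$, and the task-diversity assumption together with $\sigma_{\min}(B_W^{\text{source}})>1$ guarantees that the aggregated design matrix $\sum_w q_0(w) B_W^{\text{source}} e_w e_w^\top (B_W^{\text{source}})^\top$ is well-conditioned up to $\bar\kappa$. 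Feeding this into the MLLAM guarantee gives an initial bound $\sin\theta(\hat B_X, B_X) \lesssim \text{poly}(k,d_X,\bar\kappa,\underline\sigma)/\sqrt{n_0}$, which with the prescribed $n_0$ is a small constant — enough that the SVD of $\hat B_W^{\text{source}}$ in line~\ref{line: compute q_1} yields a subspace $V$ satisfying the spanning property $\frac1c B_W B_W^\top \preceq B_W VV^\top B_W^\top \preceq c\,B_W B_W^\top$.

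Next I would handle \textbf{Stage 1 (fine target-agnostic exploration)}. The point here is that $q_1$ explores uniformly over the $k$-dimensional subspace $V$ rather than the full $d_W$ space, so the per-direction sample budget is $n_1^j/k$ rather than $n_1^j/d_W$. Re-applying the MLLAM bound — now with a design matrix supported on $V$ and hence conditioned only through $B_X$ and the clipped spectrum — I expect $\sin\theta(\hat B_X, B_X) \lesssim \sqrt{k\,\text{poly}(k,d_W,d_X,\bar\kappa,\underline\sigma)/n_1^j}$; with the prescribed $n_1^j = \tilde\Theta(\epsilon_j^{-4/3}\cdot(\cdots))$ this drives the angle to $\tilde O(\epsilon_j^{2/3})$. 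Simultaneously the few-shot known-environment data $\dot Z_{\text{target}}$ (of size $\dot n_{\text{target}}\approx n_{\text{target}}^{2/3}$) is used via ERM to estimate $\hat B_W^{\text{target}}$; a standard least-squares concentration argument shows $\|\hat B_W^{\text{target}} - B_W^{\text{target}}\|$ is controlled provided $\dot n_{\text{target}}$ meets the stated lower bound, which is exactly where the $\varepsilon^{-4/3}$ and the $(k^*)^{2/3}$ factors in the $\dot n_{\text{target}}$ requirement come from (matching the $\epsilon_j^{2/3}$ accuracy of the representation at that stage).

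Then for \textbf{Stage 2 (fine target-aware exploration)} I would argue that the eigendecomposition of $\E_{w_0\sim\nu_{\text{target}}}[\hat B_W^{\text{target}} w_0 w_0^\top (\hat B_W^{\text{target}})^\top]$, after clipping eigenvalues below the $8(kd_W)^{3/2}\sqrt{d_X/n_1}$ threshold, recovers (up to perturbation) the top-$k^*$ eigenstructure of the true target covariance — this is a Davis–Kahan / Weyl argument, and the clipping level is chosen precisely so that directions surviving the clip are those whose signal dominates the Stage-1 estimation noise, which explains the appearance of $k^* = \text{rank}(\E_{w_0}B_W w_0 w_0^\top B_W^\top)$. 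The tasks $\tilde W_j(i)$ are the minimum-norm source directions mapping onto these $k^*$ eigendirections, so $\max_i\|\tilde W_j(i)\|_2^2 \approx \max_i\|W_i^*\|_2^2$, and sampling $n_2^j \approx k^* \beta_3 \max_i\|W_i^*\|_2^2\,\epsilon_j^{-2}$ points uniformly over these $k^*$ tasks gives, through the $\calO^X_{\text{offline 3}}$ guarantee of \cite{du2021fewshot} (whose task-count dependence is harmless since only $k+k^*\ll d_W$ tasks are active), a representation good enough that the downstream ridge/least-squares fine-tuning on $Z_{\text{target}}$ achieves $\text{ER}(\hat\phi_X,\nu_{\text{target}}) \le \epsilon_j^2$. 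The $\min\{k^*,\kappa^2(B_W)\}$ factor arises from bounding the conditioning of the target-relevant sub-design: either it is controlled by $k^*$ directly, or by the squared condition number of $B_W$, whichever is smaller. Summing $n_0 + \sum_j (n_1^j + n_2^j)$ with $\epsilon_j = 2^{-j}$ and stopping at $\epsilon_j \approx \varepsilon$, the $\epsilon_j^{-2}$ term dominates the geometric series and yields the claimed $\tilde O((kd_X+\log(1/\delta))(k^*)^2\min\{k^*,\kappa^2\}\max_i\|W_i^*\|_2^2\varepsilon^{-2})$ with the $n_0$ and $\sum_j n_1^j$ contributions absorbed into the low-order term.

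The main obstacle I anticipate is the \textbf{Stage-2 perturbation analysis}: carefully propagating the $\tilde O(\epsilon_j^{2/3})$ error in $\hat B_X$ (and the resulting error in $\hat B_W^{\text{target}}$) through the eigendecomposition, the clipping, and the minimum-norm inverse problem defining $\tilde W_j(i)$, while ensuring (a) no true target-relevant direction is erroneously clipped, (b) the recovered $\|\tilde W_j(i)\|_2$ does not blow up relative to $\|W_i^*\|_2$ (which requires a lower bound on the relevant singular values of $\hat B_W^{\text{source}}$, hence the $\underline\sigma$ dependence), and (c) the $\epsilon$-greedy-style interleaving of exploration budget ($n_1^j \sim (n_2^j)^{2/3}$) is enough to keep the exploitation stage valid at every round $j$. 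Getting the exponents to line up — so that the $\epsilon_j^{2/3}$ representation accuracy from Stage 1 exactly suffices for the clipping threshold and the $\dot n_{\text{target}}$ budget at Stage 2 — is the delicate bookkeeping at the heart of the proof.
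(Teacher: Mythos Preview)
Your outline matches the paper's proof structure closely: MLLAM-style $\sin\theta$ control for Stages 0--1, Davis--Kahan/Weyl perturbation for the Stage-2 eigendecomposition (this is exactly how the paper bounds $\max_i\|W_i'\|$ by $\min\{k^*,\kappa^2\}\max_i\|W_i^*\|$ via a two-case argument), and a geometric sum over epochs.

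The one genuine gap is the mechanism linking the Stage-3 budget $n_2^j$ to the excess-risk target. You write that the $\calO^X_{\text{offline 3}}$ oracle plus fine-tuning ``achieves $\text{ER}\le\epsilon_j^2$,'' but the \cite{du2021fewshot} ERM guarantee the paper invokes only bounds ER by $\E_{w_0}\|P^\perp_{X_{\text{target}}\hat B_X}X_{\text{target}}B_XB_W w_0\|^2/n_{\text{target}}+\text{noise}$; it does not by itself see the Stage-3 design. The paper's central move is to upper-bound that projection term by a product of a Frobenius factor (at most $\sigma^2 n_{\text{target}}(kd_X+\log(1/\delta))$) and the trace
\[
\trace\!\Bigl(\bigl(B_W^{\text{source}}\textstyle\sum_{w\in\calS}n_w ww^\top (B_W^{\text{source}})^\top\bigr)^{-1}\,B_W\,\E_{\nu_{\text{target}}}[ww^\top]\,B_W^\top\Bigr),
\]
and then to \emph{additively decompose this trace} into a target-aware exploitation piece (bounded by $\epsilon_j^2 k^*$ directly from the construction of $q_2^j$ and $n_2^j$) plus a target-agnostic exploration residual (bounded by $\epsilon_j^2$ using the Stage-1 $\sin\theta$ guarantees propagated through $\|\hat B_W-B_W\|$ and $\|\square\|\le k/(n_1\sigma_{\min}^2)$). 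This trace reduction and split is what actually connects the specific value $n_2^j=k^*\beta_3\max_i\|W_i'\|_2^2\epsilon_j^{-2}$ to the ER target, and it is also where the $\sin\theta$ bounds from earlier stages are \emph{consumed} rather than carried forward; your outline treats all of this as implicit in the oracle call, which it is not. Once you insert this reduction, the rest of your plan goes through essentially as the paper does it.
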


\paragraph{Comparison with passive learning.} By choosing $\{e_i\}_{i \in [d_W^\text{source}]}$ as a fixed source set, we reduce the problem to a discrete setting and compare it with the passive learning. In \cite{du2021fewshot}, the authors get $N_\text{total}$ as most 
$
    \frac{k d_X d_W \|\E_{w_0 \sim \nu_\text{target}}B_W w_0 w_0^\top B_W^\top \|}{\sigma_{\min}^2(B_W^\text{source}) } \varepsilon^{-2}.
$
We first consider the cases in their paper that the target task is uniformly spread $\|\E_{w_0 \sim \nu_\text{target}}B_W w_0 w_0^\top B_W^\top \| = \frac{1}{k}$.
\begin{itemize}[leftmargin=*]
    \item When the task representation is well-conditioned $\sigma_{\min}^2(B_W^\text{source}) = \frac{d_W}{k}$. We have a passive one as $ \otil(k d_X \varepsilon^{-2})$ while the active one $\otil(kd_X \frac{k^2}{d_W} \varepsilon^{-2})$ (See Lemma~\ref{lem: uniform case} for details), which suggests as long as $d_W \gg k^2$, our active learning algorithm gain advantage even in a relatively uniform spread data and representation conditions. 
    \item Otherwise, we consider the extreme case that  $\sigma_{\min}^2(B_W^\text{source}) = 1$. We have passive one  $\otil(d_X d_W\varepsilon^{-2})$ while the active one $\otil(k^3 d_X \varepsilon^{-2})$. Notice here we require $d_W \gg k^3$.
\end{itemize}
Both of them indicate the necessity of considering the continuous case with large $d_W$ even if everything is uniformly spread. On the other hand, whether we can achieve the same result as the passive one when $d_W \leq k^3$ remains to be explored in the future.  

We then consider the single target $w_0$ case.
\begin{itemize}[leftmargin=*]
    \item With well-conditioned $B_W$, the passive one now has sample complexity $\order(k^2 d_X \varepsilon^{-2})$ while the active gives a strictly improvement $\order(\frac{k^2 d_X}{d_W}  \varepsilon^{-2})$.
    \item With ill-conditioned $B_W$ where $\sigma_{\min}(B_W) = 1$ and $\max_i \|W_i^*\| = 1$, that is, only a particular direction in source space contributes to the target. The Passive one now has sample complexity $\order(k d_X d_W \varepsilon^{-2})$ while our active one only has $kd_X \varepsilon^{-2}$, which demonstrates the benefits of our algorithm in unevenly distributed source space.
\end{itemize}

\paragraph{Comparison with previous active learning.} By using the same discrete reduction and set single target $w_0$, we compare our result with the current state-of-art active representation algorithm in \cite{chen2022active}. They achieves $\otil(kd_X\|\nu\|_1^2 \varepsilon^{-2})$, where
$\nu = \argmin_{\nu} \|\nu\|_1 \text{ s.t } B_W \nu = B_W w_0$. 
On the other hand, our active one gives $\otil(kd_X\|w^*\|_2^2 \varepsilon^{-2})$, where
$w^* = \argmin_{\nu} \|\nu\|_2 \text{ s.t } B_W \nu = B_W w_0$, which is strictly better than the discrete one. This again indicates the separation between continuous and discrete cases where in fixed discrete sets, the $L_1$ norm regularization is strictly better than $L_2$. \footnote{In fact, \cite{chen2022active} get even worse bound, but we are aware that there exists some concurrent work to achieve this $\otil(kd_X\|\nu\|_1^2 \varepsilon^{-2})$ using $L_1$ norm regularization and a tighter analysis. No matter which bound we compared to, the conclusion will not be affected.} 


\paragraph{Save task number.} When ignoring the short-term  initial warm-up stage, we only require maintaining $\otil(k + \log(N_\text{total}k^*))$ number of source tasks, where the first term comes from $q_1$ in the target-agnostic stage and the second term comes from $q_2$ in the target-aware stage.



 

\section{Experiment}
\label{sec: experiment}

In this section, we provide experimental results under different instantiations of the Algorithm~\ref{algo: main-general}, and all of them show the effectiveness of our strategy both in target-aware and target-agnostic settings.
\subsection{Settings}

\paragraph{Datasets and problem definition.} 
Our results cover the different combinations of $\psi_X, \phi_X, \psi_W$ as shown in Table~\ref{table: experiements}. 
Here we provide a brief introduction for the three datasets and postpone the details into Appendix~\ref{sec: experiments (appendix)}. \footnote{ Github Link: \href{https://github.com/cloudwaysX/ALMultiTask_Robotics}{https://github.com/cloudwaysX/ALMultiTask\_Robotics}}

\vspace{-5px}
\begin{table}[ht]
\begin{center}    
\begin{tabular}{|l|l|l|}
\hline
                                         & identity $\psi_W$ & nonlinear $\psi_W$ \\ \hline
identity $\psi_X$ and linear $\phi_X$    & synthetic, drone  & NA                 \\ \hline
nonlinear $\psi_X$ and linear $\phi_X$   & synthetic        & pendulum simulator \\ \hline
identity $\psi_X$ and nonlinear $\phi_X$ & synthetic, drone  & NA                 \\ \hline
\end{tabular}
\end{center}
\caption{Summary of different instantiations}
\vspace{-1.5em}
\label{table: experiements}
\end{table}

\begin{itemize}[leftmargin=*]
    \item \textbf{Synthetic data.} We generate data that strictly adhere to our data-generating assumptions and use the same architecture for learning and predicting. When $\phi_X$ is nonlinear, we use a neural network  $\phi_X$ to generate data and use a slightly larger neural net for learning. The goal for synthetic data is to better illustrate our algorithm as well as serve as the first step to extend our algorithm on various existing datasets.
    
    \item \textbf{Pendulum simulator.} To demonstrate our algorithm in the continuous space. we adopt the multi-environment pendulum model in~\cite{shi2021meta} and the goal is to learn a $w$-dependent residual dynamics model $f(x,w)\in\mathbb{R}$ where $x$ is the pendulum state and $w\in\mathbb{R}^5$ including external wind, gravity and damping coefficients. $f(x,w)$ is highly nonlinear with respective to $x$ and $w$.
    Therefore we use known non-linear feature operators $\psi_X, \psi_W$. In other words, this setting can be regarded as a misspecified linear model. It is also worth noting that due to the non-invertibility of $\psi_W$, the explicit selection of a source via a closed form is challenging. Instead, we resort to an adaptive sampling-based method discussed in Section~\ref{para: optimal design oracle}. Specifically, we uniformly sample $w$ from the source space, select the best $w'$, and then uniformly sample around this $w'$ at a finer grain. Our findings indicate that about 5 iterations are sufficient to approximate the most relevant source.
    
    \item \textbf{Real-world drone flight dataset~\cite{o2022neural}.} 
    The Neural-Fly dataset~\cite{o2022neural} includes real flight trajectories using two different drones in various wind conditions. The objective is to learn the residual aerodynamics model $f(x,w)\in\mathbb{R}^3$ where $x\in\mathbb{R}^{11}$ is the drone state (including velocity, attitude, and motor speed) and $w$ is the environment condition (including drone types and wind conditions). We collect 6 different $w$ and treat each dimension of $f(x,w)$ as a separate task. Therefore $w$ is reformulated as a one-hot encoded vector in $\mathbb{R}^{18}$. 

\end{itemize}

For each dataset/problem, we can choose different targets. For simplicity, in the following subsection, we present results for one target task for each problem with 10 random seeds regarding random data generation and training, and put more results in Appendix~\ref{sec: experiments (appendix)}.  In all the experiments, we use a gradient-descent joint training oracle, which is a standard approach in representation learning. 


\subsection{Results}

Those results encapsulate the effectiveness of active learning in terms of budget utilization and test loss reduction. In the drone dataset, we further demonstrate its ability in identifying relevant source tasks (see Figure~\ref{fig:flight}). We note that in two robotics problems (pendulum simulation and real-world drone dataset), the active learning objective is to learn \emph{a better dynamics model}. However, in the pendulum simulation, we deploy a model-based nonlinear controller which translates better dynamics modeling to enhanced control performance (see Figure~\ref{fig: pendulum} and Appendix~\ref{sec: pendulum (appendix)}).

\vspace{-7px}
\begin{table}[ht]
\begin{center}
\begin{tabular}{|l|l|l|l|}
\hline
                                          & Target-aware AL & Target-agnostic AL  \\ \hline
identity $\psi_X$ and linear $\phi_X$     & 38.7\%           & 51.6\%               \\ \hline
nonlinear $\psi_X$ and linear $\phi_X$    & 38.7\%           & 45.2\%                \\ \hline
identity $\psi_X$ and non-linear $\phi_X$ & 32.0\%           & 68.0\%                \\ \hline
\end{tabular}
\end{center}
\label{table: synthetic}
\caption{Results on synthetic data. Using the test loss of the final output model from passive learning as a baseline, we show the ratio between the budget required by target-aware/target-agnostic active learning to achieve a similar loss and the budget required by passive learning.
}
\end{table}
\vspace{-11px}
\begin{figure}[ht]
    \centering
    \includegraphics[scale = 0.38]{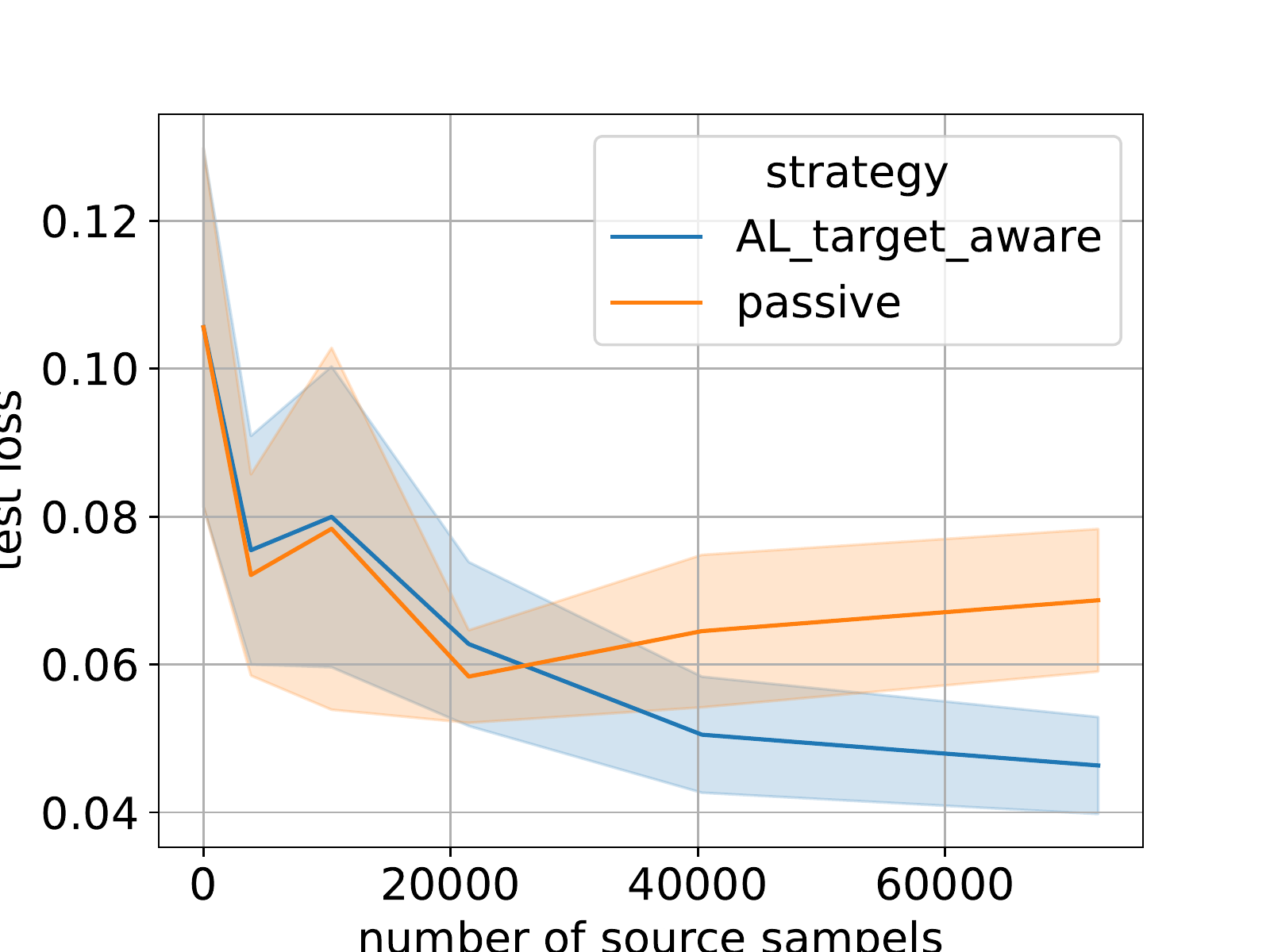}
    \includegraphics[scale = 0.38]{Plots/embed_dim8_actual_target_0,_0,_1,_0.5,_0,_0_control.pdf}
    \caption{Results on pendulum simulator for a specific target.
    \textbf{Left: } The test loss of the estimated model $\hat{f}$. The passive strategy suffers from negative transfer while the active strategy steadily decreases. 
    \textbf{Right: } The control error using final output $\hat{f}$. Here we use a model-based nonlinear policy $\pi(x,\hat{f})$. The model learned from active strategy leads to better control performance.
    }
    \label{fig: pendulum}
\end{figure}
\vspace{-11px}
\begin{figure}[ht]
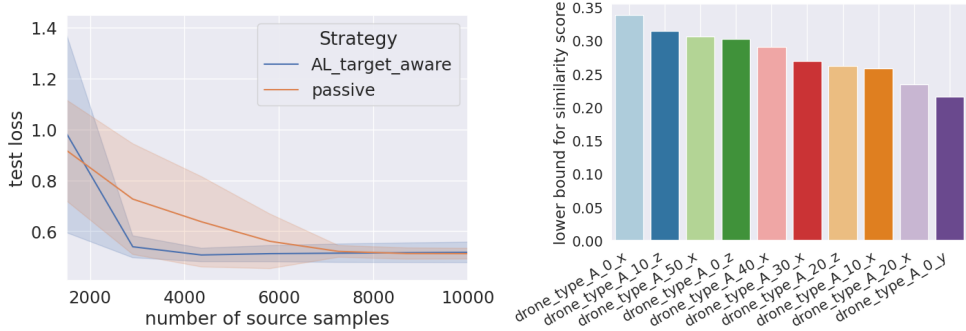

    \centering
    \includegraphics[scale = 0.36]{Plots/flight_taskdim18.pdf}
    \includegraphics[scale = 0.32]{Plots/similarities.pdf}
    \caption{Results on the real drone dataset~\cite{o2022neural} with target \texttt{drone\_type\_A\_30\_z}. Source data includes two drone types A and B, six wind speeds from 0 to 50, and three directions x-y-z. We present results for linear $\phi_X$ here and postpone the non-linear $\phi_X$ case in Appendix~\ref{sec: drone (appendix}. \textbf{Left:} The test loss of the estimated bilinear model $\hat f$.  The passive strategy converges slower than the active strategy.
    \textbf{Right:} Top 10 the most similar source tasks. Given the target environment, the algorithm successfully finds the other \texttt{drone\_type\_A} environments as relevant sources. See more explanations in Appendix~\ref{sec: drone (appendix}.
    }
    \label{fig:flight}
\end{figure}

\newpage
\bibliographystyle{unsrtnat}
\bibliography{ref.bib}

\begin{thebibliography}{23}
\providecommand{\natexlab}[1]{#1}
\providecommand{\url}[1]{\texttt{#1}}
\expandafter\ifx\csname urlstyle\endcsname\relax
  \providecommand{\doi}[1]{doi: #1}\else
  \providecommand{\doi}{doi: \begingroup \urlstyle{rm}\Url}\fi

\bibitem[Shi et~al.(2019)Shi, Shi, O’Connell, Yu, Azizzadenesheli,
  Anandkumar, Yue, and Chung]{shi2019neural}
Guanya Shi, Xichen Shi, Michael O’Connell, Rose Yu, Kamyar Azizzadenesheli,
  Animashree Anandkumar, Yisong Yue, and Soon-Jo Chung.
\newblock Neural lander: Stable drone landing control using learned dynamics.
\newblock In \emph{2019 International Conference on Robotics and Automation
  (ICRA)}, pages 9784--9790. IEEE, 2019.

\bibitem[Lee et~al.(2020)Lee, Hwangbo, Wellhausen, Koltun, and
  Hutter]{lee2020learning}
Joonho Lee, Jemin Hwangbo, Lorenz Wellhausen, Vladlen Koltun, and Marco Hutter.
\newblock Learning quadrupedal locomotion over challenging terrain.
\newblock \emph{Science robotics}, 5\penalty0 (47):\penalty0 eabc5986, 2020.

\bibitem[Radford et~al.(2021)Radford, Kim, Hallacy, Ramesh, Goh, Agarwal,
  Sastry, Askell, Mishkin, Clark, et~al.]{radford2021learning}
Alec Radford, Jong~Wook Kim, Chris Hallacy, Aditya Ramesh, Gabriel Goh,
  Sandhini Agarwal, Girish Sastry, Amanda Askell, Pamela Mishkin, Jack Clark,
  et~al.
\newblock Learning transferable visual models from natural language
  supervision.
\newblock \emph{arXiv preprint arXiv:2103.00020}, 2021.

\bibitem[Brown et~al.(2020)Brown, Mann, Ryder, Subbiah, Kaplan, Dhariwal,
  Neelakantan, Shyam, Sastry, Askell, et~al.]{brown2020language}
Tom~B Brown, Benjamin Mann, Nick Ryder, Melanie Subbiah, Jared Kaplan, Prafulla
  Dhariwal, Arvind Neelakantan, Pranav Shyam, Girish Sastry, Amanda Askell,
  et~al.
\newblock Language models are few-shot learners.
\newblock \emph{arXiv preprint arXiv:2005.14165}, 2020.

\bibitem[Yu et~al.(2022)Yu, Wang, Vasudevan, Yeung, Seyedhosseini, and
  Wu]{yu2022coca}
Jiahui Yu, Zirui Wang, Vijay Vasudevan, Legg Yeung, Mojtaba Seyedhosseini, and
  Yonghui Wu.
\newblock Coca: Contrastive captioners are image-text foundation models.
\newblock \emph{arXiv preprint arXiv:2205.01917}, 2022.

\bibitem[Alayrac et~al.(2022)Alayrac, Donahue, Luc, Miech, Barr, Hasson, Lenc,
  Mensch, Millican, Reynolds, et~al.]{alayrac2022flamingo}
Jean-Baptiste Alayrac, Jeff Donahue, Pauline Luc, Antoine Miech, Iain Barr,
  Yana Hasson, Karel Lenc, Arthur Mensch, Katherine Millican, Malcolm Reynolds,
  et~al.
\newblock Flamingo: a visual language model for few-shot learning.
\newblock \emph{Advances in Neural Information Processing Systems},
  35:\penalty0 23716--23736, 2022.

\bibitem[O’Connell et~al.(2022)O’Connell, Shi, Shi, Azizzadenesheli,
  Anandkumar, Yue, and Chung]{o2022neural}
Michael O’Connell, Guanya Shi, Xichen Shi, Kamyar Azizzadenesheli, Anima
  Anandkumar, Yisong Yue, and Soon-Jo Chung.
\newblock Neural-fly enables rapid learning for agile flight in strong winds.
\newblock \emph{Science Robotics}, 7\penalty0 (66):\penalty0 eabm6597, 2022.

\bibitem[Chen et~al.(2022)Chen, Jamieson, and Du]{chen2022active}
Yifang Chen, Kevin Jamieson, and Simon Du.
\newblock Active multi-task representation learning.
\newblock In \emph{International Conference on Machine Learning}, pages
  3271--3298. PMLR, 2022.

\bibitem[Du et~al.(2021)Du, Hu, Kakade, Lee, and Lei]{du2021fewshot}
Simon~S. Du, Wei Hu, Sham~M. Kakade, Jason~D. Lee, and Qi~Lei.
\newblock Few-shot learning via learning the representation, provably, 2021.

\bibitem[Tripuraneni et~al.(2020)Tripuraneni, Jordan, and
  Jin]{tripuraneni2020theory}
Nilesh Tripuraneni, Michael~I. Jordan, and Chi Jin.
\newblock On the theory of transfer learning: The importance of task diversity,
  2020.

\bibitem[Tripuraneni et~al.(2021)Tripuraneni, Jin, and
  Jordan]{tripuraneni2021provable}
Nilesh Tripuraneni, Chi Jin, and Michael Jordan.
\newblock Provable meta-learning of linear representations.
\newblock In \emph{International Conference on Machine Learning}, pages
  10434--10443. PMLR, 2021.

\bibitem[Thekumparampil et~al.(2021)Thekumparampil, Jain, Netrapalli, and
  Oh]{thekumparampil2021sample}
Kiran~Koshy Thekumparampil, Prateek Jain, Praneeth Netrapalli, and Sewoong Oh.
\newblock Sample efficient linear meta-learning by alternating minimization.
\newblock \emph{arXiv preprint arXiv:2105.08306}, 2021.

\bibitem[Xu and Tewari(2021)]{xu2021representation}
Ziping Xu and Ambuj Tewari.
\newblock Representation learning beyond linear prediction functions.
\newblock \emph{Advances in Neural Information Processing Systems},
  34:\penalty0 4792--4804, 2021.

\bibitem[Collins et~al.(2022)Collins, Mokhtari, Oh, and
  Shakkottai]{collins2022maml}
Liam Collins, Aryan Mokhtari, Sewoong Oh, and Sanjay Shakkottai.
\newblock Maml and anil provably learn representations.
\newblock In \emph{International Conference on Machine Learning}, pages
  4238--4310. PMLR, 2022.

\bibitem[Rahimi and Recht(2008)]{rahimi2008uniform}
Ali Rahimi and Benjamin Recht.
\newblock Uniform approximation of functions with random bases.
\newblock In \emph{2008 46th annual allerton conference on communication,
  control, and computing}, pages 555--561. IEEE, 2008.

\bibitem[Todd(2016)]{todd2016minimum}
Michael~J Todd.
\newblock \emph{Minimum-volume ellipsoids: Theory and algorithms}.
\newblock SIAM, 2016.

\bibitem[Akimoto et~al.(2012)Akimoto, Nagata, Ono, and
  Kobayashi]{akimoto2012theoretical}
Youhei Akimoto, Yuichi Nagata, Isao Ono, and Shigenobu Kobayashi.
\newblock Theoretical foundation for cma-es from information geometry
  perspective.
\newblock \emph{Algorithmica}, 64:\penalty0 698--716, 2012.

\bibitem[Raghu et~al.(2019)Raghu, Raghu, Bengio, and Vinyals]{raghu2019rapid}
Aniruddh Raghu, Maithra Raghu, Samy Bengio, and Oriol Vinyals.
\newblock Rapid learning or feature reuse? towards understanding the
  effectiveness of maml.
\newblock \emph{arXiv preprint arXiv:1909.09157}, 2019.

\bibitem[Nichol and Schulman(2018)]{nichol2018reptile}
Alex Nichol and John Schulman.
\newblock Reptile: a scalable metalearning algorithm.
\newblock \emph{arXiv preprint arXiv:1803.02999}, 2\penalty0 (3):\penalty0 4,
  2018.

\bibitem[Antoniou et~al.(2018)Antoniou, Edwards, and
  Storkey]{antoniou2018train}
Antreas Antoniou, Harrison Edwards, and Amos Storkey.
\newblock How to train your maml.
\newblock \emph{arXiv preprint arXiv:1810.09502}, 2018.

\bibitem[Hospedales et~al.(2021)Hospedales, Antoniou, Micaelli, and
  Storkey]{hospedales2021meta}
Timothy Hospedales, Antreas Antoniou, Paul Micaelli, and Amos Storkey.
\newblock Meta-learning in neural networks: A survey.
\newblock \emph{IEEE transactions on pattern analysis and machine
  intelligence}, 44\penalty0 (9):\penalty0 5149--5169, 2021.

\bibitem[Chen et~al.(2021)Chen, Crammer, He, Roth, and Su]{chen2021weighted}
Shuxiao Chen, Koby Crammer, Hangfeng He, Dan Roth, and Weijie~J. Su.
\newblock Weighted training for cross-task learning, 2021.

\bibitem[Shi et~al.(2021)Shi, Azizzadenesheli, O'Connell, Chung, and
  Yue]{shi2021meta}
Guanya Shi, Kamyar Azizzadenesheli, Michael O'Connell, Soon-Jo Chung, and
  Yisong Yue.
\newblock Meta-adaptive nonlinear control: Theory and algorithms.
\newblock \emph{Advances in Neural Information Processing Systems},
  34:\penalty0 10013--10025, 2021.

\end{thebibliography}

\newpage
\appendix

\paragraph{A typo in Theorem~\ref{them: main} in the main submission} We notice that there is a typo in the Theorem~\ref{them: main} in our submitted main paper, so we provide a correct version here. Therefore, you may find some terms in the main paper in this file are different from the one submitted before. We apologize for the inconvenience. We want to emphasize that this typo does not affect the main conclusion of our paper. It only affects the comparison between our upper bound and previous passive learning results in a specific setting. 

\tableofcontents
\newpage
\section{Result and analysis for target-aware}
\label{sec: target-aware (appendix)}
\subsection{Offline training oracles used in Algorithm}
\label{sec: training oracle (appendix)}

\subsubsection{Choice of $\calO_\text{offline 1}^X$}
\label{sec: training oracle 1 (appendix)}



To better illustrate this oracle $\calO_\text{offline}^X$, we first give the following definition.
\begin{definition}[Modified from Assumption 2 in \cite{thekumparampil2021sample}]
    For any $t$ tasks with parameter matrix $\dot{V}=[\dot{v}_1, \dot{v}_2, \ldots, \dot{v}_t] \in \fR^{d_W \times t}$. Let $\lambda_1^*$ and $\lambda_k^*$ denote the largest and smallest eigenvalues of the task diversity matrix $(k / t)B_W^\text{source} \dot{V}  \dot{V}^\top (B_W^\text{source})^\top \in \mathbb{R}^{k \times k}$ respectively. Then we say $\dot{V}$ is $\mu$-incoherent, i.e.,
    \begin{align*}
        \max _{i \in[t]}\left\|B_W^\text{source} \dot{v}_i\right\|^2 \leq \mu \lambda_k^*
    \end{align*}
\end{definition}
Notice that here $\dot{V}$ is a general representation of collected source tasks used for training in the different stages. Therefore, the $\lambda_k^*, \mu$ is also defined differently corresponding to each stage. Specially, we have
\begin{itemize}
    \item \textbf{Stage 1( data collected by $q_0$)}: 
    \begin{itemize}
        \item $t = d_W, \dot{V} = I_{d_W}$
        \item $\lambda_k^* = \frac{k}{d_W}\sigma_k^2(B_W^\text{source})$
        \item $\mu \geq \frac{1}{\lambda_k^*}$
    \end{itemize}
    \item \textbf{Stage 2( data collected by $q_1$)}:
    \begin{itemize}
        \item $t=k, \dot{V} = V$ where $\_,\_,V \leftarrow \text{SVD}(\hat{B}_W^\text{source})$ as defined in line~\ref{line: compute q_1}
        \item $\lambda_k^* = \sigma_k^2(B_W^\text{source})$
        \item $\mu \geq \frac{\sigma_\text{max}^2(B_W^\text{source})}{\lambda_k^*}$
    \end{itemize}
\end{itemize}
Note that  $\lambda_k^* = \sigma_k^2(B_W^\text{source})$ in the stage 2 comes from $B_W^\text{source} \dot{V}  \dot{V}^\top (B_W^\text{source})^\top = \Theta(B_W^\text{source}(B_W^\text{source})^\top)$ which will be proved later. Therefore, applying these results to

Now we restate the generalization guarantees from a fixed design (passive learning)

\begin{theorem}[Restate Theorem 1 in \cite{thekumparampil2021sample}]
     Let there be $t$ linear regression tasks, each with $m$ samples, and 
     \begin{align*}
         m \geq \widetilde{\Omega}\left(\left(1+k\left(\sigma / \sqrt{\lambda_k^*}\right)^2\right) k \log t+k^2\right), \quad \text { and } m t \geq \widetilde{\Omega}\left(\left(1+\left(\sigma / \sqrt{\lambda_k^*}\right)^2\right)\left(\lambda_1^* / \lambda_k^*\right) \mu d_X k^2\right)
     \end{align*}
     Then MLLAM, initialized at $\hat{B}_X = U_{\mathrm{init}}$ s.t. $\left\|\left(\mathbf{I}-B_X\left(B_X\right)^{\top}\right) U_{\mathrm{init}}\right\|_F \leq \min \left(3 / 4, O\left(\sqrt{\lambda_k^* / \lambda_1^*}\right)\right)$ and run for $K=\left\lceil\log _2\left(\lambda_k^* \lambda_k^* m t / \lambda_1^* \sigma^2 \mu d_X k^2\right)\right\rceil$ iterations, outputs $\hat{B}_X$ so that the following holds $\left(w . p\right.$. $\left.\geq 1-K /(d_X k)^{10}\right)$
     \begin{align*}
     \sin(\hat{B}_X, B_X)
        \leq 
         \left\|\left(\mathbf{I}-B_X\left(B_X\right)^{\top}\right) \hat{B}_X\right\|_F \leq \widetilde{O}\left(\left(\frac{\sigma}{\sqrt{\lambda_k^*}}\right) k \sqrt{\frac{\mu d_X}{m t}}\right)
     \end{align*}
\end{theorem}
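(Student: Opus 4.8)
\emph{Proof proposal.} This is a verbatim restatement of Theorem~1 of \cite{thekumparampil2021sample}, so the honest route is to invoke that result directly and merely adapt notation; what follows is the argument one would run to establish it from scratch, along the alternating-minimization (MLLAM) template. The plan is to track the principal-angle quantity $\delta_s := \sin(\hat B_X^{(s)}, B_X) = \|(I - B_X B_X^\top)\hat B_X^{(s)}\|$ across iterations $s$ and prove it contracts by a fixed constant per step down to a noise floor of the stated order. Write $\beta_i = B_W^\text{source}\dot v_i \in \fR^{k}$ for the $i$-th task vector, so $y_{i,j} = x_{i,j}^\top B_X \beta_i + \xi_{i,j}$, and recall that one MLLAM iteration performs (i) a per-task least squares for $\hat\beta_i$ holding $\hat B_X$ fixed, then (ii) a least-squares / top-$k$ SVD update of $\hat B_X$ holding the $\hat\beta_i$ fixed.

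First I would analyze step (i). On the event that $\frac1m \sum_j x_{i,j}x_{i,j}^\top \approx I$ uniformly over $i \in [t]$ — which by Gaussian covariance concentration plus a union bound needs $m \gtrsim k\log t$ — the per-task estimate decomposes as $\hat\beta_i - \beta_i = (\text{bias}) + (\text{noise})$, where the bias has norm $O(\delta_s \|\beta_i\|)$ and the noise has norm $\widetilde O(\sigma\sqrt{k/m})$; the term $(1 + k(\sigma/\sqrt{\lambda_k^*})^2)$ in the first sample-complexity hypothesis is exactly what makes this noise small relative to $\sqrt{\lambda_k^*}$. Then step (ii): the updated $\hat B_X$ is the top-$k$ left singular subspace of (a whitened version of) $\frac{1}{mt}\sum_{i,j} x_{i,j}\, y_{i,j}\, \hat\beta_i^\top$. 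A Davis--Kahan / Wedin perturbation bound controls the new $\delta_{s+1}$ in terms of (a) the conditioning of $\frac1t\sum_i \hat\beta_i\hat\beta_i^\top$, which contributes the $\lambda_1^*/\lambda_k^*$ and $\mu$ factors through the incoherence assumption, and (b) the operator norm of the noise cross-term $\frac{1}{mt}\sum_{i,j} x_{i,j}\xi_{i,j}\hat\beta_i^\top$, which by matrix Bernstein is $\widetilde O(\sigma\sqrt{\mu d_X \lambda_1^*/(mt)})$.

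Combining (i) and (ii) yields a recursion of the form $\delta_{s+1} \le \tfrac12 \delta_s + \widetilde O\!\big((\sigma/\sqrt{\lambda_k^*})\, k\sqrt{\mu d_X/(mt)}\big)$. The contraction factor $\tfrac12$ is precisely where the second hypothesis $mt \ge \widetilde\Omega((1 + (\sigma/\sqrt{\lambda_k^*})^2)(\lambda_1^*/\lambda_k^*)\mu d_X k^2)$ is used, and the initialization requirement $\|(I - B_X(B_X)^\top)U_\text{init}\|_F \le O(\sqrt{\lambda_k^*/\lambda_1^*})$ guarantees the iteration starts inside the basin of attraction. Unrolling for $K = \lceil\log_2(\cdots)\rceil$ steps drives the geometric term below the floor, giving the claimed $\widetilde O((\sigma/\sqrt{\lambda_k^*}) k\sqrt{\mu d_X/(mt)})$ bound on $\sin(\hat B_X, B_X)$; a union bound over the $K$ iterations and the per-iteration concentration events yields the $1 - K/(d_X k)^{10}$ success probability.

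The main obstacle is step (ii): one must simultaneously show that $\frac1t\sum_i \hat\beta_i\hat\beta_i^\top$ stays well-conditioned even though each $\hat\beta_i$ carries the current subspace error $\delta_s$, and bound the noise-weighted cross-term in operator norm with the correct $\mu d_X$ dependence. Both are delicate because the $\hat\beta_i$ are random and correlated with the design through step (i), so the argument needs either sample splitting across iterations or a careful leave-one-out decoupling. Pinning down the per-iteration contraction constant strictly below one (rather than merely bounded) is the quantitative crux, and it is exactly what forces the stated lower bounds on $m$ and $mt$.
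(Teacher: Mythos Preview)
Your proposal is correct, and in fact you are doing more than the paper does: the paper gives no proof at all for this statement --- it is quoted verbatim from \cite{thekumparampil2021sample} and used purely as a black-box oracle for $\calO_\text{offline 1}^X$ and $\calO_\text{offline 2}^X$, with the stage-specific values of $\lambda_k^*$ and $\mu$ plugged in afterward. Your recognition of this in the first sentence is exactly right, and the MLLAM contraction sketch you outline (per-task least squares, Davis--Kahan on the aggregated update, geometric decay to a noise floor governed by the $mt$ hypothesis) is the argument of the original reference, so there is nothing to compare against within this paper.
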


Specifically, suppose we satisfy all the requirements in the theorem and run the proper amount of times, then we can guarantee $\hat{B}_X$ after each stage $j$ with w.h.p $\geq 1- 2K /(d_X k)^{10}$
\begin{itemize}
   \item \textbf{Stage 1( data collected by $q_0$)}: 
   $
       \sin(\hat{B}_X, B_X) \leq \otil\left(\sigma k \sqrt{\frac{d_X}{n_0}}\right)
   $
   \item  \textbf{Stage 2( data collected by $q_1$)}: 
   $
       \sin(\hat{B}_X, B_X) \leq \otil\left(\sigma k \sqrt{\frac{ d_X \sigma_\text{max}^2(B_W^\text{source})}{n_0}}\right)
   $
\end{itemize}
Let Event $\calE_\text{offline 1}$ denote the above guarantees hold for all epochs.

\subsubsection{Choice of $\calO_\text{offline 2}^X$}

We use the ERM from \cite{du2021fewshot}. For readers' convenience, we restate the formal definition of oracle 
below 
\begin{align*}
    \hat{B}_X = \argmin_{B} \sum_{w | q_1(w) \neq 0 \text{ and } q_2(w) \neq 0} \argmin_w \sum_{(x,y) \in Z_w} \|x^\top w -y\|_2
\end{align*}
By using this ERM with the follow-up finetune on $Z_\text{target}$, we get the following claims. Note that this claim comes from some part of Proof of Theorem 4.1 in the previous paper and has also been used in Claim 3 in \cite{chen2022active}.
\begin{claim}
\label{claim: ERM oracle}
    By running the ERM-based algorithm, we get the following upper bounds,
    \begin{align*}
    \textsc{ER}(\tilde{B}_X, \nu_\text{target})
    &\leq \E_{w_0 \sim \nu_\text{target}} \left[ \frac{\|P_{X_\text{target}\hat{B}_X}^{\bot}X_\text{target}B_X B_W w_0\|^2}{n_\text{target}} +\sigma^2\frac{k + \log(1/\delta)}{n_\text{target}} \right]
    \end{align*}
\end{claim}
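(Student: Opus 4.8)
\emph{Proof plan.} The plan is to follow the fixed-design, two-stage transfer-learning argument of \citet{du2021fewshot} (proof of Theorem~4.1), already specialized to the active pipeline in \citet[Claim~3]{chen2022active}; the only addition here is to carry the per-task bound through the expectation over $w_0\sim\nu_\text{target}$. Here $\textsc{ER}$ denotes the squared prediction error measured against the noiseless target signal (the excess over the noise floor), so in particular $\er(\cdot)^2$ is bounded by it via Jensen. First I would condition on the high-probability events of the source phase so that the learned representation $\hat B_X$ (equivalently $\tilde B_X$) is frozen, and on the target design matrix $X_\text{target}\in\R^{n_\text{target}\times d_X}$ together with the sampled target tasks $\{w_i\}$. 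The key structural point is that $\hat B_X$ is a function of the source data only and is therefore independent of the target label noise $\xi\sim\calN(0,\sigma^2 I_{n_\text{target}})$, which lets the fine-tuning step be analyzed as an ordinary fixed-design least-squares problem.

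Next I would unpack the fine-tuning estimator. By definition $\hat w_\text{avg}$ is the least-squares fit of the target labels onto the features $X_\text{target}\hat B_X$, so $X_\text{target}\hat B_X\hat w_\text{avg}=P(\mu+\xi)$, where $P:=P_{X_\text{target}\hat B_X}$ projects onto $\mathrm{col}(X_\text{target}\hat B_X)$ (of dimension at most $k$) and $\mu$ is the noiseless label vector, equal to $X_\text{target}B_X B_W w_0$ for target task $w_0$. Using the isotropic-Gaussian input assumption (Assumption~\ref{ass: isotropic input}) and $n_\text{target}\gtrsim k+\log(1/\delta)$, the empirical second-moment matrix restricted to the relevant $O(k)$-dimensional subspace is within a constant factor of the identity, so the population test error is controlled by its in-sample version, $\E_x[(x^\top\hat B_X\hat w_\text{avg}-x^\top B_X B_W w_0)^2]\lesssim n_\text{target}^{-1}\|X_\text{target}\hat B_X\hat w_\text{avg}-X_\text{target}B_X B_W w_0\|_2^2$. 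Writing $X_\text{target}\hat B_X\hat w_\text{avg}-X_\text{target}B_X B_W w_0=-P^\perp X_\text{target}B_X B_W w_0+P\xi$ and noting the two summands are orthogonal (one in $\mathrm{col}(X_\text{target}\hat B_X)^\perp$, the other in $\mathrm{col}(X_\text{target}\hat B_X)$), Pythagoras gives $\|\cdot\|_2^2=\|P^\perp X_\text{target}B_X B_W w_0\|_2^2+\|P\xi\|_2^2$.

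Finally, I would bound $\|P\xi\|_2^2$: since $P$ has rank at most $k$ and $\xi$ is isotropic Gaussian with variance $\sigma^2$, this equals $\sigma^2$ times a $\chi^2$ variable on at most $k$ degrees of freedom, hence is $\lesssim\sigma^2(k+\log(1/\delta))$ with probability $1-\delta$ by a standard tail bound --- this is the second term of the claim and it does not involve $w_0$. Taking the expectation over $w_0\sim\nu_\text{target}$ on the remaining signal term yields $\E_{w_0}\|P^\perp X_\text{target}B_X B_W w_0\|_2^2/n_\text{target}$, which together with the noise term is the asserted bound; the passage from a single target to the target mixture is precisely where the ``average model'' $\hat w_\text{avg}$ enters, with the per-sample task fluctuations absorbed into the convention that excess risk is measured against the best average head. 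The step I expect to be the real work is the population-to-empirical comparison --- showing the empirical input covariance is close to the identity uniformly over the learned $k$-dimensional feature subspace $\mathrm{col}(\hat B_X)$; this is the only place the Gaussian-input assumption is genuinely needed and is what forces the $k+\log(1/\delta)$ factor in the target-sample requirement, whereas the projection identity, the $\chi^2$ tail, and the expectation over $w_0$ are routine.
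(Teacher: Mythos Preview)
Your proposal is correct and follows essentially the same route the paper takes: the paper does not give an independent proof but simply points to the proof of Theorem~4.1 in \citet{du2021fewshot} and Claim~3 in \citet{chen2022active}, which is exactly the fixed-design least-squares decomposition (projection of signal plus $\chi^2$ noise, empirical-to-population transfer via Gaussian covariance concentration) you have spelled out, with the extra expectation over $w_0\sim\nu_\text{target}$ layered on top.
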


We need to admit that, from a theoretical perspective, we choose this oracle since we can directly use their conclusions. But other oracles like $\calO_\text{offline 2}^X$ might also work.

\subsubsection{Choice of $\calO_\text{offline}^W$}
This is the ERM oracle based on learned $\hat{B}_X$. 
Specially, we have $\hat{B}_W^\text{source/target} \leftarrow \calO_\text{offline}^W(\{Z_w\}_{w | q(w) \neq 0}, \hat{B}_X)$ defined as 
\begin{align*}
    \hat{B}_W^\text{source/target} = \sum_{w | q(w) \neq 0} \hat{w}_w w^\top,
    \text{ where }
    \hat{w}_w =\argmin_{\hat{w} \in \fR^k} \sum_{(x,y) \sim Z_{w}} \|x^\top\hat{B}_X^\top \hat{w}  - y\|_2, 
\end{align*}

\subsection{Excess risk analysis}

\begin{theorem}[Excess risk guarantees]
\label{them: excess risk (appendix)}
    By running the Algo.~\ref{algo: main}, after epoch $j$, as long as $\calE_\text{offline 1}$ holds, we have w.h.p $1-\delta$,
    \begin{align*}
        \textsc{ER}(\tilde{B}_X, \nu_\text{target})
        \leq \otil \left(\sigma^2 kd_X k^* \epsilon_j^2 \right)
    \end{align*}
    as long as 
    \begin{align*}
        & \dot{n}_\text{target}
        \geq \epsilon_j^{-\frac{4}{3}}d_X^{-\frac{2}{3}}
        \left(k^{-\frac{2}{3}}d_W^{\frac{1}{2}}\underline{\sigma}^{-\frac{4}{3}}
        + k^{-\frac{4}{3}}d_W^\frac{1}{6}\Bar{\kappa}^2\underline{\sigma}^{-\frac{1}{3}}\right)
        \sqrt{k + \log(d_W/\delta)}\\
        & n_\text{target} \geq \epsilon_j^{-2} d_X^{-1}(k^*)^{-1} \frac{k}{k+ \log(d_W/\delta)}
    \end{align*}
\end{theorem}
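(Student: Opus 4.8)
\textbf{Proof proposal for Theorem~\ref{them: excess risk (appendix)}.}

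The plan is to bound the excess risk from Claim~\ref{claim: ERM oracle} by controlling the leading term $\E_{w_0 \sim \nu_\text{target}} \| P_{X_\text{target}\hat{B}_X}^{\bot} X_\text{target} B_X B_W w_0\|^2 / n_\text{target}$, and showing it is of the order $\sigma^2 k d_X k^* \epsilon_j^2$ once the stated sample-size conditions on $\dot{n}_\text{target}$ and $n_\text{target}$ hold. First I would use isotropic Gaussian inputs (Assumption~\ref{ass: isotropic input}) together with a standard covariance-concentration argument to replace the empirical projector quantity by $\| (I - \hat{B}_X\hat{B}_X^\top) B_X B_W w_0\|^2$ up to constants; this reduces everything to the principal-angle distance $\sin(\hat B_X, B_X)$ between the learned and true representations. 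Concretely, $\| P^\bot_{\hat B_X} B_X B_W w_0 \|^2 \le \sin^2(\hat B_X, B_X)\, \|B_W w_0\|^2$, and taking the expectation over $w_0$ gives a bound in terms of $\sin^2(\hat B_X, B_X) \cdot \E_{w_0}\|B_W w_0\|^2 = \sin^2(\hat B_X, B_X)\,\mathrm{Tr}(\E_{w_0} B_W w_0 w_0^\top B_W^\top)$, and the trace is at most $k^* \cdot \|\E_{w_0} B_W w_0 w_0^\top B_W^\top\|$, which is $O(k^*)$ under the benign assumptions.

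Next I would invoke the training-oracle guarantees assembled in Section~\ref{sec: training oracle (appendix)}. Under event $\calE_\text{offline 1}$, after the stage-2 update with $n_1^j$ samples spread uniformly over the $k$ directions of $V$, we have $\sin(\hat B_X, B_X) \le \otil(\sigma k \sqrt{d_X \sigma_{\max}^2(B_W^\text{source})/(n_1^j k)})$ — here I need the ``hidden lemma'' that $B_W^\text{source} V V^\top (B_W^\text{source})^\top = \Theta(B_W^\text{source}(B_W^\text{source})^\top)$, i.e., that the empirically-estimated top-$k$ subspace $V$ of $\hat B_W^\text{source}$ is (after the stage-1 warm-up) a near-isometry onto $B_X$'s span; this is exactly what the coarse-exploration guarantee $\sin(\hat B_X,B_X)\le\otil(\sigma k\sqrt{d_X/n_0})$ and the choice of $n_0$ buy us, via a perturbation (Davis–Kahan / Weyl) argument on $\hat B_W^\text{source}$ versus $B_W^\text{source}$. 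Plugging the resulting $\sin^2$ bound into the previous paragraph and matching $n_1^j \approx \beta_2 \epsilon_j^{-4/3} k^{5/3} d_W^{2/3} d_X^{1/3}(\cdots)$ from Algorithm~\ref{algo: main} yields the $\dot n_\text{target}$ condition (recall $\dot n_\text{target}$ is what feeds the $\hat B_W^\text{target}$ estimate used to form $\tilde W_j$, and stage-2's budget is tied to $\dot n_\text{target} \approx n_\text{target}^{2/3}$).

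Then comes the target-aware part: stage 3 re-runs the non-convex ERM oracle $\calO^X_\text{offline 3}$ on the union of stage-2 and stage-3 data, where stage-3 samples $n_2^j q_2^j$ concentrated on the $\le k^*$ tasks $\tilde W_j(i)$ that cover the significant eigendirections of $\E_{w_0} \hat B_W^\text{target} w_0 w_0^\top (\hat B_W^\text{target})^\top$. The key calculation is that these tasks provide a task-diversity matrix whose relevant eigenvalues scale like $1/\max_i\|W_j'(i)\|_2^2$, so that $n_2^j \approx \#\mathrm{col}(\tilde W_j)\,\beta_3 \max_i\|W_j'(i)\|_2^2 \epsilon_j^{-2}$ samples suffice to drive the projection error along target-relevant directions down to $\epsilon_j^2$; I would carry out a Weyl-type argument to pass from the eigendecomposition of the \emph{estimated} $\hat B_W^\text{target}$ (which is all the algorithm sees) to that of the true $B_W$, absorbing the estimation error into the $8(kd_W)^{3/2}\sqrt{d_X/n_1}$ clipping threshold — which is precisely why insignificant eigenvalues are clipped. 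Finally, combining the representation bound with Claim~\ref{claim: ERM oracle}'s finetune term $\sigma^2 (k+\log(1/\delta))/n_\text{target}$ and imposing $n_\text{target} \ge \epsilon_j^{-2} d_X^{-1}(k^*)^{-1} k/(k+\log(d_W/\delta))$ makes both terms $\otil(\sigma^2 k d_X k^* \epsilon_j^2)$, and a union bound over epochs $j$ (each contributing failure probability geometrically) closes the argument.

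\textbf{Main obstacle.} The delicate step is the chain of perturbation arguments connecting the quantities the algorithm actually computes — $\hat B_W^\text{source}$, its SVD factor $V$, $\hat B_W^\text{target}$, and the eigendecomposition of $\E_{w_0}\hat B_W^\text{target} w_0 w_0^\top(\hat B_W^\text{target})^\top$ — back to their population/true counterparts, while keeping the error terms small enough (relative to $\underline\sigma$, $\bar\kappa$, and the clipping threshold) that the downstream task-diversity parameters $\lambda_k^*$, $\mu$, and $\max_i\|W_j'(i)\|_2^2$ are only perturbed by constant factors. In particular, showing that the constrained least-norm solutions $w_i'$ defining $\tilde W_j(i)$ are close to the idealized $W_i^*$ (so $\max_i\|W_j'(i)\|_2^2 = \Theta(\max_i\|W_i^*\|_2^2)$) requires that the clipped eigenvalue threshold $8(kd_W)^{3/2}\sqrt{d_X/n_1}$ be simultaneously (i) small enough not to discard any direction contributing $\Omega(1/k^*)$ mass to the target and (ii) large enough to dominate the $\hat B_W^\text{target}$ estimation noise; verifying this compatibility is where most of the bookkeeping lives.
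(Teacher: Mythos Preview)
Your first-paragraph reduction is where the argument breaks. Bounding $\|P^\bot_{\hat B_X} B_X B_W w_0\|^2 \le \sin^2(\hat B_X, B_X)\,\|B_W w_0\|^2$ replaces a \emph{directional} quantity (the projection error along the specific target direction $B_W w_0$) by the \emph{worst-direction} principal angle, and once you do that, stage~3's target-aware sampling can no longer help you. Stage~3 collects $n_2^j\sim\epsilon_j^{-2}$ samples concentrated on only $k^*\le k$ tasks, so it does not improve the minimum eigenvalue of the task-diversity matrix; that eigenvalue is controlled by stage~2's $n_1^j\sim\epsilon_j^{-4/3}$ samples spread over all $k$ directions. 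Moreover, the stage-3 oracle $\calO^X_{\text{offline 3}}$ is the non-convex ERM of Claim~\ref{claim: ERM oracle}, which gives no uniform $\sin$ guarantee at all. Consequently the best uniform bound you can cite after stage~3 is still $\sin^2(\hat B_X,B_X)=\otil(\epsilon_j^{4/3})$, yielding $\sin^2\cdot k^*\sim k^*\epsilon_j^{4/3}$, not $k^*\epsilon_j^2$. Your own third paragraph implicitly acknowledges this when you say stage~3 ``drives the projection error along target-relevant directions down to $\epsilon_j^2$'': that is the correct intuition, but it is incompatible with the uniform-$\sin$ decomposition you already committed to.

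The paper never passes through $\sin(\hat B_X,B_X)$. Instead it inserts the \emph{design matrix} $\Sigma_W = B_W^{\text{source}}\bigl(\sum_{w\in\calS} n_w\, ww^\top\bigr)(B_W^{\text{source}})^\top$ via $B_X B_W w_0 = B_X\Sigma_W^{1/2}\cdot\Sigma_W^{-1/2}B_W w_0$ and Cauchy--Schwarz, obtaining
\[
\E_{w_0}\bigl\|P^\bot_{X_{\text{target}}\hat B_X} X_{\text{target}}B_X B_W w_0\bigr\|^2 \;\le\; \bigl\|P^\bot_{X_{\text{target}}\hat B_X} X_{\text{target}}B_X\Sigma_W^{1/2}\bigr\|_F^2\cdot\trace\bigl(\Sigma_W^{-1}\,B_W\E_{w_0}[w_0w_0^\top]B_W^\top\bigr).
\]
The first factor is bounded by $\sigma^2 n_{\text{target}}(kd_X+\log(1/\delta))$ directly from the ERM guarantee on the \emph{source} tasks in $\calS$ (this is where Claim~\ref{claim: ERM oracle} is actually used). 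All of the stage~2/stage~3 analysis then goes into bounding the \emph{trace} factor by $k^*\epsilon_j^2$: the paper splits it into a ``target-aware'' piece $\trace\bigl((B_W W')^\top\Sigma_W^{-1}B_W W'\bigr)$, which stage~3's $n_2^j$ samples on the $k^*$ directions $\tilde W_j$ drive to $\epsilon_j^2 k^*$, plus a ``target-agnostic'' residual capturing the mismatch between $\tilde W_j$ (built from $\hat B_W$) and the true target eigenvectors, which stage~2's $n_1^j$ samples control via $\|\square\|\lesssim k/(n_1^j\sigma_{\min}^2)$. Your perturbation bookkeeping in the ``main obstacle'' paragraph is exactly what this residual term needs, and your analysis there is on the right track --- but it has to be plugged into this trace decomposition, not into a uniform $\sin$ bound.
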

\begin{proof}
    Here we provide the proof sketches, which will be specified in the following sections.
    
    In Section~\ref{sec: reduce to optimal design (appendix) }, we first reduce $\textsc{ER}(\tilde{B}_X, \nu_\text{target})$ to an optimal design problem by showing that, with a proper number of $n_\text{target}$,
    \begin{align*}
        \textsc{ER}(\tilde{B}_X, \nu_\text{target})
        \lessapprox (kd_X + \log(1/\delta)) \trace\left(\left( (B_W^\text{source}) \left(\sum_{w \in \calS} n_w w w^\top \right) (B_W^\text{source})^\top \right)^{-1} B_W \left(\E_{\nu_\text{target} } w w^T \right) B_W^\top \right)
    \end{align*}
    It is easy to see that, as long as $B_W$ is known. The problem is reduced to an optimal design problem with fixed optimization target. 
    
    So the main challenge here is to iteratively estimate $B_X, B_W$ and design the budget allocation to different sources. Therefore, in Section~\ref{sec: bound decomposition}, we further decompose the it into 
    \begin{align*}
        & \trace\left(\left( (B_W^\text{source}) \left(\sum_{w \in \calS} n_w w w^\top \right) (B_W^\text{source})^\top \right)^{-1} B_W \left(\E_{\nu_\text{target} } w w^T \right) B_W^\top \right) \\
        & \leq \underbrace{\E_{w_0 \sim \nu_\text{target}}(\left[(B_W w_0)^\top \square B_W w_0\right] -  \trace\left( \beta_3(B_W W')^\top \square B_W W' \right)}_\text{target agnostic exploration error} +   \beta_3 \underbrace{\trace\left( (B_W W')^\top \square B_W W'\right)}_\text{target-aware exploration error} 
    \end{align*}
    where $\square = \left( B_W \left(\sum_{w \in \calS} n_w w w^\top \right) B_W ^\top \right)^{-1}$. Here the \textbf{target-aware exploration error} captures the error from selecting the target-related sources (defined by $q_2$). On the other hand, the \textbf{target agnostic exploration error} captures the error from model estimation and the uniform exploration. 
    
    Now the main challenge here is to upper-bound the model estimation error. Specifically, the estimation comes from Coarse exploration (Stage 1) and Fine target-agnostic exploration (Stage 2). Specifically, in Section~\ref{sec: warm-up(appendix)}, we show that the $k$-dim-subspace represented by $q_1$ is a good course approximation up to \textit{multiplicative} error. Then in Section~\ref{sec: stage2 (appendix)}, we further tight the upper bound using data collected according to up to some \textit{additive} error.
\end{proof}

\subsubsection{Reduce to an optimal design problem}
\label{sec: reduce to optimal design (appendix) }

For any fixed epoch $j$, let $n_w^j$ denotes the samples collected so far for task $w$ and 
$\calS$ denotes the set of tasks used in computing $\tilde{B}_X$.
Therefore, we have $\calS = \{w | q_1(w) \neq 0 \text{ and } \tilde{q}(w) \neq 0\}$ and $ n_w \geq n_2(w) + n_2^j(w)$. For convenience, we omit the superscript $j$ in the rest of the proofs.
 
From Claim~\ref{claim: ERM oracle}, it is easy to see that our main target is to optimize
$\E_{w_0 \sim \nu_\text{target}}\|P_{X_\text{target}\hat{B}_X}^{\bot}X_\text{target}B_X B_W w_0\|^2$.  Decompose $B_W \left(\sum_{w \in \calS} n_w w w^\top \right) B_W^\top$ as $UDU^\top$ and let $\Sigma_W = U\sqrt{D}U^\top$. As long as $\Sigma_W$ is full rank, which we will prove later in Section~\ref{sec: warm-up(appendix)}, we have with probability $1-\delta$,
\begin{align*}
    &\E_{w_0 \sim \nu_\text{target}}\|P_{X_\text{target}\hat{B}_X}^{\bot}X_\text{target}B_X B_W w_0\|^2 \\
    & = \E_{w_0 \sim \nu_\text{target}} \|P_{X_\text{target}\hat{B}_X}^{\bot}X_\text{target}B_X \Sigma_W^{\frac{1}{2}} \Sigma_W^{- \frac{1}{2}} B_W w_0\|^2 \\
    & \leq \E_{w_0 \sim \nu_\text{target}} \|P_{X_\text{target}\hat{B}_X}^{\bot}X_\text{target}B_X \Sigma_W^{\frac{1}{2}}\|_F^2 \|\Sigma_W^{- \frac{1}{2}} B_W w_0\|^2 \\
    & = \E_{w_0 \sim \nu_\text{target}} \|P_{X_\text{target}\hat{B}_X}^{\bot}X_\text{target}B_X B_W \tilde{W}_\calS\|_F^2 (B_W w_0)^\top \left( B_W \left(\sum_{w \in \calS} n_w w w^\top \right) B_W ^\top \right)^{-1} B_W w_0 \\
    & = \|P_{X_\text{target}\hat{B}_X}^{\bot}X_\text{target}B_X B_W \tilde{W}_\calS\|_F^2 
         \trace\left(\left( B_W \left(\sum_{w \in \calS} n_w w w^\top \right) B_W ^\top \right)^{-1} B_W \left(\E_{\nu_\text{target} \in \Delta(\calW_\text{target})} w w^T \right) B_W^\top \right) \\
    & \lessapprox \sigma^2 n_\text{target} \left(kd_X + \log(1/\delta) \right) \trace\left(\left( B_W \left(\sum_{w \in \calS} n_w w w^\top \right) B_W ^\top \right)^{-1} B_W \left(\E_{\nu_\text{target} } w w^T \right) B_W^\top \right) \\
    & = \sigma^2 n_\text{target} \left(kd_X + \log(1/\delta) \right) \trace\left(\left( (B_W^\text{source}) \left(\sum_{w \in \calS} n_w w w^\top \right) (B_W^\text{source})^\top \right)^{-1} B_W \left(\E_{\nu_\text{target} } w w^T \right) B_W^\top \right)
\end{align*}
Therefore, we aim to minimize the $\trace\left(\left( (B_W^\text{source}) \left(\sum_{w \in \calS} n_w w w^\top \right) (B_W^\text{source})^\top \right)^{-1} B_W \left(\E_{\nu_\text{target} } w w^T \right) B_W^\top \right)$. As we mentioned before, this is a pure optimal design problem if $B_W$ is known in advance.

\subsubsection{Bound decomposition and the excess risk result}
\label{sec: bound decomposition}
Let $\square = \left( B_W \left(\sum_{w \in \calS} n_w w w^\top \right) B_W ^\top \right)^{-1}$, we have
\begin{align*}
    & \E_{w_0 \sim \nu_\text{target}}\left[(B_W w_0)^\top \square B_W w_0\right] \\
    & = \underbrace{\E_{w_0 \sim \nu_\text{target}}(\left[(B_W w_0)^\top \square B_W w_0\right] -  \trace\left( \beta_3(B_W W')^\top \square B_W W' \right)}_\text{target agnostic exploration error} +   \beta_3 \underbrace{\trace\left( (B_W W')^\top \square B_W W'\right)}_\text{target-aware exploration error} 
\end{align*}
\textbf{We first deal with the target-aware exploration error}. 
It is easy to see that 
\begin{align*}
    & \beta_3\trace\left((B_W W' )^\top \square B_W W' \right)\\
    & =  \beta_3\trace\left( \left(B_W \sum_{w} q_1(w) n_1 w w^\top (B_W)^\top 
        + B_W \sum_{w} q_2(w) n_2 w w^\top (B_W)^\top \right)^{-1} B_W W' (B_W W')^\top \right)\\
    & \leq \trace\left( \left(  \max_i  \|\Tilde{W}(i)\|_{2(\infty)}^2 B_W \epsilon^{-2} \tilde{W} \tilde{W}^\top (B_W)^\top \right)^{-1} B_W W' (B_W W')^\top \right) \\
    & \leq \trace\left(  \left(  B_W \epsilon^{-2} W' (W')^\top (B_W)^\top \right)^{-1} B_W W' (B_W W')^\top \right) \\
    & = \epsilon^2 \text{rank}(\hat{B}_W W' (W')^\top \hat{B}_W^\top) \\
    & \leq \epsilon^2 \text{rank}(B_W \E_{\nu_\text{target}}[w_0 w_0^\top] B_W^\top) 
\end{align*}
where the last equality comes from Lemma~\ref{lem: est rank}.

\textbf{We then deal with the target-agnostic exploration term.} 
Let the clipping threshold in Line~\ref{line: eigdecompo} be $\bar{\gamma}_j$. That is, ignoring all $\lambda_i \leq \bar{\gamma}$. 
Now, for $\beta_3 \geq 8$, when event $\calE_\text{offline 1}$, holds, we have w.h.p $1-d_W\delta$
\begin{align*}
    & \E_{w_0 \sim \nu_0}\left[(B_W w_0)^\top \square B_W w_0\right] -  \beta_3\trace\left((B_W W')^\top \square B_W W' \right) \\
    & = \E_{w_0 \sim \nu_0}\trace\left(\square \left(B_W w_0(B_W w_0)^\top - 4 \hat{B}_W^\text{target} w_0(\hat{B}_W^\text{target} w_0)^\top\right)\right) \\
        & \quad + \E_{w_0 \sim \nu_0}\trace\left(\square \left(  \frac{1}{2}\beta_3\hat{B}_W^\text{source} w'(\hat{B}_W w')^\top -  \beta_3B_W^\text{source} w'(B_W w')^\top\right)\right) \\
        & \quad + \E_{w_0 \sim \nu_0}\trace\left(\square \left( 4 \hat{B}_W^\text{target} w_0(\hat{B}_W^\text{target} w_0)^\top -  \frac{1}{2}\beta_3\hat{B}_W^\text{source} W'(\hat{B}_W^\text{source} W')^\top\right)\right)\\
    & \leq \E_{w_0 \sim \nu_0}\trace\left( \left(4 B_W^\text{target} w_0(B_W^\text{target} w_0)^\top - 4 \hat{B}_W^\text{target} w_0(\hat{B}_W^\text{target} w_0)^\top\right)\right) \|\square \|\\
        & \quad + \beta_3\trace\left( \left(  \frac{1}{2}\hat{B}_W^\text{source} w'(\hat{B}_W w')^\top -  \frac{1}{2} \dot{B}_W W'(\dot{B}_W W')^\top\right)\right) \|\square \|\\
        & \quad +k \overline{\gamma} \|\square\| \\
    & \leq \|\square \| \|\E[w_0 w_0^T]\|_*  \|(B_W^\text{target})^\top B_W^\text{target}  - (\hat{B}_W^\text{target})^\top \hat{B}_W^\text{target}  \| 
        + \|\square \| \|W' (W')^\top\|_* \|\dot{B}_W^\top \dot{B}_W  - \hat{B}_W^\top \hat{B}_W  \|
        + k \overline{\gamma} \|\square\| \\
    & \leq 2\|\square \| \|\E[w_0 w_0^T]\|_*  \|B_W^\text{target}- \hat{B}_W^\text{target}\| \|B_W^\text{target}\| \\
        & \quad +  2\|\square \| \| \|W' (W')^\top\|_* \|\dot{B}_W- \hat{B}_W^\text{source} \| \|\hat{B}_W^\text{source}\| \\
        & \quad  + k \overline{\gamma} \|\square\| \\
    & \leq \epsilon^2
\end{align*}
where the second two terms in the first inequality come from Section~\ref{sec: warm-up(appendix)} and the last term in the first inequality comes from the definition of $W'$.
Here $\dot{B}_W = B_W V V^\top = B_W^\text{source}V V^\top$ is a pseudo representation of $B_W^\text{source}$, where $V$ is the one calculated in Line~\ref{line: compute q_1}. And the last inequality comes from the results in Section~\ref{sec: stage2 (appendix)}. Notice that the probability $1-d_W\delta$ comes from the union bound on all the calls of $\calO_\text{offline}^W$.

\textbf{Now combine the bounds above, we have}
\begin{align*}
    \textsc{ER}(\tilde{B}_X, \nu_\text{target})
    &\leq \sigma^2 \left(kd_X\log( (\kappa N_i )/d_W) + \log\frac{1}{\delta} \right) k^* \epsilon^2
\end{align*}

\subsubsection{Detail proofs for warm-up stage}
\label{sec: warm-up(appendix)}

After the first stage, according to Section~\ref{sec: training oracle 1 (appendix)}, as long as $\calE_\text{offline 1}$ holds, we have 
\begin{align*}
    \sin(\hat{B}_X, B_X) \leq \otil\left(\sigma k \sqrt{\frac{d_X}{n_0}}\right)
\end{align*}
Therefore, by Lemma~\ref{lem: B_W est error}, we have with probability $1-d_W\delta$,
\begin{align*}
    \|\hat{B}_W^\text{source} - B_W^\text{source} \| 
    &\leq 2\sqrt{k}\sin(\hat{B}_X, B_X) \|B_W\| + \sqrt{\frac{d_W}{n_0}} (k+ \log(2/\delta)^\frac{1}{4} d_W^{\frac{1}{4}}  \\
    &\leq 2k^{\frac{3}{2}} \sqrt{\frac{d_X}{n_0}} \|B_W\| + 2d_W^{\frac{3}{4}}(k+ \log(2/\delta)^{\frac{1}{4}} \sqrt{\frac{1}{n_0}}
\end{align*}
As long as $n_0 \geq 1024\Bar{\kappa}^2 \left(k^3d_X\Bar{\kappa}^2 + \frac{d_W^{\frac{3}{2}}}{\underline{\sigma}^2}\sqrt{k + \log(1/\delta)}\right)$, by using the Lemma~\ref{lem: helper base est} below, we have for any arbitrary matrix $M$,
\begin{align*}
    \frac{1}{2}  B_W M (B_W)^\top
    \leq \dot{B}_W M \dot{B}_W^\top
    \leq  \frac{3}{2} B_W M (B_W)^\top
\end{align*}

In the other word, $\dot{B}$ can be regarded as a pseudo representation of $B_W^\text{source}$. In all the later epochs, when exploring $k$-subspace according to $q_1^j$, the learner actually learns $\dot{B}_W$.

\begin{lemma}[Guarantee on exploration basis 1]
\label{lem: helper base est}
    Suppose we have the estimated $\hat{B}_W$ satisfies
    \begin{align*}
        & 8\| B_W - \hat{B}_W\| \|B_W \| \leq \frac{1}{2}\lambda_{\min}(B_W  B_W^\top)\\
        & \dot{V} \leftarrow \text{column space of }\textsc{SVD}(\hat{B}_W),
    \end{align*}
    then let $\dot{B}_W = B_W \dot{V} \dot{V}^\top$, we have, for any arbitrary matrix $M$,
    \begin{align*}
        \frac{1}{2}  B_W M (B_W)^\top
        \leq \dot{B}_W M \dot{B}_W^\top
        \leq  \frac{3}{2} B_W M (B_W)^\top
    \end{align*}
\end{lemma}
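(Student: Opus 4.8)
The plan is to show that $\dot{V}\dot{V}^\top$ — the projection onto the top-$k$ (i.e.\ all) right-singular directions of $\hat{B}_W$ — is nearly a ``right inverse'' for $B_W$ in the sense that $B_W\dot{V}\dot{V}^\top$ behaves like $B_W$ when sandwiched around any $M$. Since $B_W$ (here $B_W^\text{source}\in\mathbb{R}^{k\times d_W^\text{source}}$) has rank $k$ and $\dot{V}\in\mathbb{R}^{d_W^\text{source}\times k}$ has orthonormal columns, the key object is the $k\times k$ matrix $B_W\dot{V}$; controlling its singular values controls everything. First I would reduce the claimed two-sided bound
\[
\tfrac12\, B_W M B_W^\top \preceq \dot{B}_W M \dot{B}_W^\top \preceq \tfrac32\, B_W M B_W^\top
\]
to showing
\[
\tfrac12\, B_W B_W^\top \preceq B_W\dot{V}\dot{V}^\top B_W^\top \preceq \tfrac32\, B_W B_W^\top,
\]
since writing $B_W^\top = B_W^\dagger(B_W B_W^\top)$ and conjugating shows that, for $M\succeq 0$, the general statement follows from the $M=$ (a rank-one factor absorbed via $M = \sum_i m_i m_i^\top$) reduction; more directly, $\dot{B}_W M \dot{B}_W^\top - c\,B_W M B_W^\top = B_W(\dot{V}\dot{V}^\top - cI)M(\dot{V}\dot{V}^\top - cI)^\top + \text{cross terms}$ — cleaner is to note $\dot{B}_W M\dot{B}_W^\top = B_W\dot{V}\dot{V}^\top M \dot{V}\dot{V}^\top B_W^\top$ and that $\|\dot V\dot V^\top M\dot V\dot V^\top - M\| $ need not be small, so the honest route is: establish $B_W\dot V\dot V^\top B_W^\top = B_W B_W^\top + E$ with $\|E\|\le \tfrac12\lambda_{\min}(B_WB_W^\top)$, which is exactly the displayed two-sided bound, and then observe the $M$-sandwiched version follows because $\dot B_W = (B_W\dot V)\dot V^\top$ and one can pass $M$ through the isometry-like factor; I would actually just prove the $M$-version directly by writing $\dot B_W = B_W + (\dot B_W - B_W)$ and bounding $\|\dot B_W - B_W\|$.

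So the real content is: \textbf{bound $\|\dot B_W - B_W\| = \|B_W(\dot V\dot V^\top - I)\|$ by a quantity controlled by $\|B_W - \hat B_W\|$.} Since $\dot V$ spans the row space of $\hat B_W$ (all $k$ right-singular vectors, as $\hat B_W$ is generically rank $k$), $(I - \dot V\dot V^\top)$ is the projection onto the orthogonal complement of $\mathrm{row}(\hat B_W)$. The step I expect to be the main obstacle is controlling the principal angle between $\mathrm{row}(B_W)$ and $\mathrm{row}(\hat B_W)$: I would invoke a Davis–Kahan / Wedin-type bound, $\|(I-\dot V\dot V^\top)\Pi_{\mathrm{row}(B_W)}\| \le \|B_W - \hat B_W\| / \sigma_k(B_W)$ (the gap here is $\sigma_k(B_W) - 0 = \sigma_k(B_W)$ since the remaining singular values of $B_W$ are zero), and hence $\|B_W(\dot V\dot V^\top - I)\| = \|B_W \Pi_{\mathrm{row}(B_W)}(\dot V\dot V^\top - I)\| \le \|B_W\|\cdot \|B_W - \hat B_W\|/\sigma_k(B_W)$. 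This gives $\|\dot B_W - B_W\| \le \|B_W\|\,\|B_W-\hat B_W\|/\sigma_k(B_W)$.

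Finally I would close the estimate: from $\|\dot B_W - B_W\|\cdot\|B_W\| \le \|B_W\|^2\|B_W-\hat B_W\|/\sigma_k(B_W)$ and the hypothesis $8\|B_W-\hat B_W\|\|B_W\|\le\tfrac12\lambda_{\min}(B_WB_W^\top)=\tfrac12\sigma_k(B_W)^2$, a short computation yields $\|\dot B_W - B_W\|\le \tfrac{1}{16}\sigma_k(B_W)\le \tfrac14\sigma_{\min}(B_W)$ (using $\|B_W\|\ge\sigma_k(B_W)$), whence for any $M\succeq 0$,
\[
\|\dot B_W M\dot B_W^\top - B_W M B_W^\top\| \le \big(2\|B_W\| + \|\dot B_W - B_W\|\big)\|\dot B_W - B_W\|\,\|M\| \le \tfrac12 \sigma_k(B_W)^2 \|M\|,
\]
after tracking constants, and since $B_W M B_W^\top \succeq \sigma_k(B_W)^2 \lambda_{\min}(M)\cdot(\text{appropriate projector})$ — more precisely comparing on the range of $B_W$ — the additive error is at most half of $B_W M B_W^\top$ in the PSD order, giving the stated $\tfrac12$–$\tfrac32$ sandwich. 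The one delicate point to get right in the writeup is that all these comparisons live on $\mathrm{range}(B_W)=\mathbb{R}^k$ so $B_W B_W^\top$ is genuinely invertible there, making ``additive error $\le\tfrac12\lambda_{\min}$'' equivalent to the multiplicative two-sided bound; I would state that reduction explicitly at the start.
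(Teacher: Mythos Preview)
Your route via Davis--Kahan/Wedin works but is heavier than the paper's, which is entirely elementary. The paper telescopes $\dot{B}_W M \dot{B}_W^\top - B_W M B_W^\top$ through $\hat{B}_W M \hat{B}_W^\top$, expands each half via the identity $AMA^\top - BMB^\top = (A-B)MA^\top + BM(A-B)^\top$, and then uses the one-line fact $\hat{B}_W\dot{V}\dot{V}^\top = \hat{B}_W$ (since $\dot{V}$ spans $\mathrm{row}(\hat{B}_W)$) to get $\dot{B}_W - \hat{B}_W = (B_W - \hat{B}_W)\dot{V}\dot{V}^\top$; every cross term is then controlled by $\|B_W - \hat{B}_W\|\,\|B_W\|$ directly, with no subspace-perturbation theorem needed. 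The same observation in fact gives $\|\dot{B}_W - B_W\| = \|(B_W-\hat{B}_W)(I - \dot{V}\dot{V}^\top)\| \le \|B_W - \hat{B}_W\|$, which is tighter by a factor of $\kappa(B_W)$ than your Wedin-based estimate $\|B_W\|\,\|B_W-\hat{B}_W\|/\sigma_k(B_W)$; the hypothesis here is generous enough to absorb that factor, but the elementary route is both simpler and sharper. Your flagged concern about the final passage from a norm bound to the two-sided PSD sandwich for \emph{arbitrary} $M$ is legitimate --- the paper is equally loose at that step and effectively asserts the multiplicative comparison from the additive one; the statement should really be read on the range of $B_W$ (where $B_W B_W^\top$ is invertible) and for the covariance-type $M$ that actually arise in the downstream applications.
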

\begin{proof}
    \begin{align*}
        &\dot{B}_W M \dot{B}_W^\top - B_W M (B_W)^\top\\
        & = \dot{B}_W M \dot{B}_W^\top
            - \hat{B}_W M (\hat{B}_W)^\top 
            + \hat{B}_W M (\hat{B}_W)^\top
                - B_W M B_W^\top \\
        & = (\dot{B}_W - \hat{B}_W) M B_W^\top 
            + \hat{B}_W M (\dot{B}_W -  \hat{B}_W)^\top
            +  ( \hat{B}_W -  B_W) M (\hat{B}_W)^\top + B_W M ( \hat{B}_W -  B_W)^\top \\
        & = (B_W - \hat{B}_W)  \dot{V} \dot{V}^\top M B_W^\top 
            + \hat{B}_W  M \dot{V} \dot{V}^\top (B_W - \hat{B}_W)^\top 
            + (\hat{B}_W -  B_W) M (\hat{B}_W)^\top + B_W M (\hat{B}_W -  B_W)^\top
    \end{align*}
    Therefore, according to our assumption, we can upper bound the above as
    \begin{align*}
        \dot{B}_W M \dot{B}_W^\top - B_W M (B_W)^\top
        & \leq 2 \| B_W - \hat{B}_W\| 
        \left(\|\hat{B}_W\| + \|B_W\|\right) M \\
        & \leq \left(4\| B_W - \hat{B}_W\| \|B_W \| +  2\| B_W - \hat{B}_W\|_2^2\right) M \\
        & \leq 8\| B_W - \hat{B}_W\| \|B_W \| M \\
        & \leq \frac{1}{2}\lambda_{\min}(B_W B_W^\top) M
        \leq \frac{1}{2}  B_W M B_W^\top
    \end{align*}
    Similarly, it can be lower bounded by $-\frac{1}{2}  B_W M B_W^\top$. Therefore we can get the target result by rearranging.
\end{proof}

\subsubsection{Detail proofs for task-agnostic exploration strategy}
\label{sec: stage2 (appendix)}

First, we upper bound two $ \|B_W- \hat{B}_W \| $ terms. 
From section~\ref{sec: training oracle 1 (appendix)}, as long as $\calE_\text{offline 1}$ holds, we have
\begin{align*}
    \sin(\hat{B}_X, B_X) \leq \otil\left( k\sqrt{\frac{ d_X}{n_1}} \|B_W^\text{source}\|\right)
\end{align*}
Therefore, by Lemma~\ref{lem: B_W est error}, we have w.h.p at least $1-(k+d_W^\text{target})\delta$
\begin{align*}
    \|\hat{B}^\text{source}_W - B_W^\text{source} \| 
    &\leq 2\sqrt{k}\sin(\hat{B}_X, B_X) \|B_W^\text{source}\| + \sqrt{\frac{k}{n_1}} (k+ \log(2/\delta)^\frac{1}{4} k^{\frac{1}{4}} \\
    &\leq 2k^\frac{3}{2} \sqrt{\frac{d_X}{n_1}} \|B_W^\text{source}\|^2\\
    \|\hat{B}^\text{target}_W - B_W^\text{target} \| 
    &\leq 2k\sin(\hat{B}_X, B_X) \|B_W^\text{target}\| + \sqrt{\frac{1}{\dot{n}_\text{target}}} (k+ \log(2/\delta)^\frac{1}{4} (d_W^\text{target})^{\frac{1}{4}} \\
    &\leq 2k^\frac{3}{2} \sqrt{\frac{d_X}{n_1}} \|B_W^\text{target}\|^2 + 2\sqrt{\frac{1}{\dot{n}_\text{target}}} (k+ \log(2/\delta)^\frac{1}{4}(d_W^\text{target})^{\frac{1}{4}} \\
    & \leq 4k^\frac{3}{2} \sqrt{\frac{d_X}{n_1}} \|B_W^\text{target}\|^2
\end{align*}
where the last equality holds as long as 
$\dot{n}_\text{target} \geq n_1 \frac{\sqrt{(k+ \log(2/\delta)) d_W^\text{target}}}{k^3d_X \|B_W^\text{target}\|^2}$.

Next, we upper bound the $\|W'(W')\|$ according to Lemma~\ref{lem: upper bound of WW' norm}.
\begin{align*}
    \| W' (W')^\top\|_*
    & \lessapprox \frac{1}{\sigma_{\min}^2(B_W^\text{source})}\|B_W^\text{target} \E_{\nu_\text{target}}[w_0 w_0^\top] (B_W^\text{target})^\top\|_* \\
    & \leq \frac{1}{\sigma_{\min}^2(B_W^\text{source})} \|B_W^\text{target}\|^2 \| \E_{\nu_\text{target}}[w_0 w_0^\top]\|_* \\
    & \leq \frac{1}{\sigma_{\min}^2(B_W^\text{source})} \|B_W^\text{target}\|^2 
\end{align*}

Finally, we have, by definition
\begin{align*}
     & \Bar{\gamma} \leq 2 \|\hat{B}_W^\text{target} -  B_W^\text{target}\| \|B_W^\text{target}\| \\
     & \|\square\| 
     \leq \frac{k}{n_1\sigma_{\min}^2(\dot{B}_W)}
     \lessapprox \frac{k}{n_1\sigma_{\min}^2(B_W^\text{source})}
\end{align*}

Combine all above, we have the upper bound
\begin{align*}
    & \|\square \| \left(\|\E[w_0 w_0^T]\|_*  \|B_W^\text{target}- \hat{B}_W^\text{target}\| \|B_W^\text{target}\| 
        +  \| \|W' (W')^\top\|_* \|\dot{B}_W- \hat{B}_W^\text{source} \| \|B_W^\text{source}\|
        + k \overline{\gamma} \right) \\
    &\lessapprox \frac{k}{n_1\sigma_{\min}^2(B_W^\text{source})}*k^\frac{3}{2} \sqrt{\frac{d_X}{n_1}}
        *\left( k\|B_W^\text{target}\|^3 + \|B_W^\text{source}\|\|B_W^\text{target}\|^2\bar{\kappa}^2\right) \\
    & \leq k^\frac{5}{2}d_X^\frac{1}{2} n_1^{-\frac{3}{2}}\|B_W^\text{target}\|^2 \left( \frac{k\|B_W^\text{target}\|}{\underline{\sigma^2}} + \frac{\bar{\kappa}^3}{\underline{\sigma}} \right)   \\
    & \leq k^\frac{5}{2}d_X^\frac{1}{2} n_1^{-\frac{3}{2}} d_W \left(k\sqrt{d_W}\underline{\sigma}^{-2} + \Bar{\kappa}^3 \underline{\sigma}^{-1}\right)
\end{align*}

As long as $n_1 \geq \epsilon_j^{-\frac{4}{3}}k^{\frac{5}{3}}d_W^{\frac{2}{3}}d_X^\frac{1}{3}
    \left( k^{\frac{2}{3}}d_W^{\frac{1}{3}}\underline{\sigma}^{-\frac{4}{3}} + \Bar{\kappa}^2\underline{\sigma}^{-\frac{2}{3}}\right)$,  we have the final bound $\epsilon_j^2$.

\subsubsection{Auxillary lemmas}

\begin{lemma}
\label{lem: B_W est error}
    Consider any $t$ regression tasks parameterized by $\{\dot{v}_i\}_{i \in [n]}$. Denote $\dot{V} = [\dot{v}_1, \dot{v}_2, \ldots, \dot{v}_t]$ and $|X_{\dot{v}_i}| = n$ for all $i \in [t]$, define
    \begin{align*}
        \hat{B}_W = \sum_{i \in k} \hat{w}_i \dot{v}_i^\top,
        \text{ where }
        \hat{w}_i =\argmin_{w \in \fR^k} \|X_{\dot{v}_i}\hat{B}_X^\top w  - Y_{\dot{v}_i}\|_2, 
    \end{align*}
    then we have with probability at least $1-\delta$,
    \begin{align*}
        \|\hat{B}_W - \dot{B}_W\| = \|\hat{B}_W - B_W \dot{V} \dot{V}^\top\| 
        \leq 2\sqrt{k}\sin(\hat{B}_X, B_X) \|\dot{B}_W\| + \sqrt{\frac{1}{n}} (k+ \log(2/\delta)^\frac{1}{4} |\dot{V}|^{\frac{1}{4}}
    \end{align*}
\end{lemma}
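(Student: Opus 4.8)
The plan is to follow the same closed-form decomposition that underlies Lemma~\ref{lem: helper base est} and the warm-up analysis. First I would write down the explicit solution of each per-task ridge-free least squares problem: since $Y_{\dot v_i} = X_{\dot v_i} B_X^\top B_W \dot v_i + \xi_i$, the minimizer is $\hat w_i = (\hat B_X X_{\dot v_i}^\top X_{\dot v_i} \hat B_X^\top)^{-1}\hat B_X X_{\dot v_i}^\top X_{\dot v_i} B_X^\top B_W \dot v_i + (\hat B_X X_{\dot v_i}^\top X_{\dot v_i}\hat B_X^\top)^{-1}\hat B_X X_{\dot v_i}^\top \xi_i$. Summing $\hat w_i \dot v_i^\top$ over $i$ and inserting $I = \hat B_X^\top\hat B_X + \hat B_{X,\perp}^\top\hat B_{X,\perp}$ between $B_X^\top$ and the rest, as in the commented-out Lemma, gives $\hat B_W = \hat B_X B_X^\top B_W \dot V\dot V^\top + \text{(misalignment term)} + \text{(noise term)}$. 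Subtracting $\dot B_W = B_W\dot V\dot V^\top$ then produces three pieces: $(\hat B_X B_X^\top - I_k)B_W\dot V\dot V^\top$, the cross term involving $\hat B_{X,\perp}$, and the $\xi$-noise term.

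Second, I would bound the first two (deterministic, conditioned on $\hat B_X$) pieces by the subspace angle $\sin(\hat B_X, B_X)$. The term $\|(\hat B_X B_X^\top - I_k)B_W\dot V\dot V^\top\|$ is at most $\|(\hat B_X B_X^\top - I_k)\|\cdot\|\dot B_W\|$; since $\hat B_X, B_X$ are both orthonormal, $\|\hat B_X B_X^\top - I_k\| \le \sqrt k\,\sin(\hat B_X,B_X)$ (this is exactly the kind of bound invoked via ``Lemma~\ref{lem: rep learning 1}/\ref{lem: rep learning 2}'' in the struck-out proof). The cross term, after using the near-isometry of the empirical covariance (the $1.1/0.9$ constants that appear whenever $n$ exceeds a polynomial threshold), is likewise controlled by $\|\hat B_{X,\perp} B_X^\top\| = \sin(\hat B_X,B_X)$ times $\|\dot B_W\|$. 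Together these give the $2\sqrt k\,\sin(\hat B_X,B_X)\|\dot B_W\|$ contribution. Here I would be a little careful that the statement writes $\|\dot B_W\|$ and not $\|B_W\|$, so I would keep the projection $\dot V\dot V^\top$ attached throughout rather than dropping it and over-counting; since $\|\dot B_W\| \le \|B_W\|$ this is harmless but keeps the constant honest.

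Third, for the stochastic piece $\sum_i (\hat B_X X_{\dot v_i}^\top X_{\dot v_i}\hat B_X^\top)^{-1}\hat B_X X_{\dot v_i}^\top \xi_i\,\dot v_i^\top$, I would invoke the standard self-normalized / sub-Gaussian bound (Lemma~18 of \cite{tripuraneni2021provable}, already cited in the excerpt): for a single task with $n$ samples and isotropic Gaussian inputs, $\|(\hat B_X X^\top X\hat B_X^\top)^{-1}\hat B_X X^\top\xi\| \lesssim \sigma\sqrt{(k+\log(1/\delta))/n}$ with probability $1-\delta$. Then I would bound the spectral norm of the sum $\sum_i z_i \dot v_i^\top$ (where $\|z_i\|$ is so controlled) by something like $\max_i\|z_i\|\cdot\|\dot V\|\cdot\sqrt{t}$ or, more crudely, by $\sqrt{t}\max_i\|z_i\|$ after a union bound over the $t$ tasks; matching the stated form $\sqrt{1/n}\,(k+\log(2/\delta))^{1/4}|\dot V|^{1/4}$ suggests the intended counting is $\|\sum_i z_i\dot v_i^\top\| \le (\sum_i\|z_i\|^2)^{1/2}\|\dot V\|$ with $|\dot V|$ denoting a count/norm of $\dot V$ and the fourth-root arising from a Cauchy–Schwarz of the form $(t\cdot\tfrac{k+\log}{n})^{1/2}$ reorganized — I would reconcile the exact exponents against the places this lemma is applied (Sections~\ref{sec: warm-up(appendix)} and \ref{sec: stage2 (appendix)}) and adopt whichever bookkeeping makes those downstream inequalities go through.

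The main obstacle is precisely that last reconciliation: getting the stochastic term into the stated $\sqrt{1/n}\,(k+\log(2/\delta))^{1/4}|\dot V|^{1/4}$ shape requires pinning down what $|\dot V|$ means (number of columns $t$, or $\|\dot V\|$, or $\|\dot V\|_F$) and why a fourth power rather than a square root of the log factor appears — this looks like a deliberate but slightly nonstandard aggregation, and the cleanest route is to prove the natural bound $\|\hat B_W - \dot B_W\| \le 2\sqrt k\,\sin(\hat B_X,B_X)\|\dot B_W\| + C\sigma\sqrt{t(k+\log(t/\delta))/n}$ and then verify it dominates (or is dominated by, as needed) the printed expression under the regimes where the lemma is actually used, so that the downstream epoch-length choices $n_0, n_1$ remain valid.
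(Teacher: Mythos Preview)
Your decomposition into three pieces and your treatment of the two deterministic terms match the paper exactly: the paper writes out the closed-form $\hat w_i$, inserts $I=\hat B_X^\top\hat B_X+\hat B_{X,\perp}^\top\hat B_{X,\perp}$, and bounds $(\hat B_X B_X^\top-I_k)\dot B_W$ and the $\hat B_{X,\perp}$ cross term via Lemmas~\ref{lem: rep learning 1} and~\ref{lem: rep learning 2} to obtain $2\sqrt k\sin(\hat B_X,B_X)\|\dot B_W\|$.

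Where you diverge is the noise term, and this is exactly the reconciliation obstacle you flag. The paper does \emph{not} bound each $z_i=(\hat B_X X_i^\top X_i\hat B_X^\top)^{-1}\hat B_X X_i^\top\xi_i$ separately and then aggregate. Instead it squares the spectral norm, uses the orthonormality of the $\dot v_i$ (which holds in both applications---standard basis in Stage~1, right singular vectors in Stage~2) to collapse
\[
\Bigl\|\sum_i z_i\dot v_i^\top\Bigr\|^2
=\lambda_{\max}\Bigl(\sum_{i,j} z_i\,\dot v_i^\top\dot v_j\,z_j^\top\Bigr)
\le \lambda_{\max}\Bigl(\sum_i z_iz_i^\top\Bigr),
\]
observes that each $z_i\sim\calN\bigl(0,(\hat B_X X_i^\top X_i\hat B_X^\top)^{-1}\bigr)\approx\calN(0,\tfrac1n I_k)$ under the isotropic Gaussian input assumption, and then applies a covariance-matrix concentration bound to $\sum_{i=1}^{|\dot V|} z_iz_i^\top$ to get $\lambda_{\max}\le \tfrac1n\sqrt{(k+\log(2/\delta))\,|\dot V|}$. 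Taking the square root yields the stated $\sqrt{1/n}\,(k+\log(2/\delta))^{1/4}|\dot V|^{1/4}$, and $|\dot V|$ simply means the number of columns $t$.

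Your per-task route via Lemma~18 of \cite{tripuraneni2021provable} would land you at $\sqrt{t(k+\log(t/\delta))/n}$, which is looser by a factor of roughly $(t(k+\log))^{1/4}$; it would still suffice for the downstream uses after adjusting constants, but it does not reproduce the stated bound. The missing idea is to treat the collection $\{z_i\}$ jointly as samples of a (nearly) isotropic Gaussian and invoke matrix concentration once, rather than union-bounding per task.
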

\begin{proof}
    From \cite{chen2022active}, we get that the explicit form of $\hat{w}_i$, which is the estimation of actual $B_w \dot{v}_i$ as
    \begin{align*}
        \left(\hat{B}_X X_{\dot{v}_i}^\top X_{\dot{v}_i} \hat{B}_X^\top\right)^{-1} \hat{B}_X X_{\dot{v}_i}^\top X_{\dot{v}_i} B_X^\top B_X \dot{v}_i
            + \left(\hat{B}_X X_{\dot{v}_i}^\top X_{\dot{v}_i} \hat{B}_X^\top\right)^{-1} \hat{B}_X X_{\dot{v}_i}^\top \xi_w
    \end{align*}
    By abusing notation a little bit, here we use subscription $i$ to denote the items that associate the task encoded by $\dot{v}_i$. Therefore, we have 
    \begin{align*}
        \hat{B}_W
        & = \sum_{i =1}^t \left(\hat{B}_X X_i^\top X_i \hat{B}_X^\top\right)^{-1} \hat{B}_X X_i^\top X_i B_X^\top B_W \dot{v}_i \dot{v}_i^\top
            + \sum_{i =1}^t\left(\hat{B}_X X_i^\top X_i \hat{B}_X^\top\right)^{-1} \hat{B}_X X_i^\top \mathbf{\xi}_i \dot{v}_i^\top \\
        & = \sum_{i =1}^t \left(\hat{B}_X X_i^\top X_i \hat{B}_X^\top\right)^{-1} \hat{B}_X X_i^\top X_i \left(\hat{B}_X^\top \hat{B}_X + \hat{B}_{X,\perp}^\top \hat{B}_{X,\perp}\right)  B_X^\top B_W \dot{v}_i \dot{v}_i^\top \\
            & \quad + \sum_{i =1}^t \left(\hat{B}_X X_i^\top X_i \hat{B}_X^\top\right)^{-1} \hat{B}_X X_i^\top \mathbf{\xi}_i \dot{v}_i^\top \\
        & = \hat{B}_X B_X^\top \dot{B}_W 
            + \sum_{i =1}^t \left(\hat{B}_X X_i^\top X_i \hat{B}_X^\top\right)^{-1} \hat{B}_X X_i^\top X_i \hat{B}_{X,\perp}^\top \hat{B}_{X,\perp}  B_X^\top B_W \dot{v}_i \dot{v}_i^\top\\
            & \quad + \sum_{i =1}^d\left(\hat{B}_X X_i^\top X_i \hat{B}_X^\top\right)^{-1} \hat{B}_X X_i^\top \mathbf{\xi}_i \dot{v}_i^\top
    \end{align*}
    And the estimation difference between $B_W, \hat{B}_W $ can be decomposed into three parts
    \begin{align*}
        \| \dot{B}_W -  \hat{B}_W \|
         & \leq \|\left( \hat{B}_X B_X^\top - I_k \right) \dot{B}_W \| \\
            & \quad + \| \sum_{i =1}^t \left(\hat{B}_X X_i^\top X_i \hat{B}_X^\top\right)^{-1} \hat{B}_X X_i^\top X_i \hat{B}_{X,\perp}^\top \hat{B}_{X,\perp}  B_X^\top B_W \dot{v}_i \dot{v}_i^\top \|\\
            & \quad + \|  \sum_{i =1}^t\left(\hat{B}_X X_i^\top X_i \hat{B}_X^\top\right)^{-1} \hat{B}_X X_i^\top \mathbf{\xi}_i \dot{v}_i^\top\| \\
        & \leq \|\left( \hat{B}_X B_X^\top - I_{k}\right)\| \|\dot{B}_W \| \\
           & \quad + \max_i \| \left(\hat{B}_X X_i^\top X_i \hat{B}_X^\top\right)^{-1} \hat{B}_X X_i^\top X_i \hat{B}_{X,\perp}^\top \hat{B}_{X,\perp} B_X^\top\| \|  \sum_{i=1}^t B_W \dot{v}_i \dot{v}_i^\top \|\\
            & \quad + \|  \sum_{i=1}^t\left(\hat{B}_X X_i^\top X_i \hat{B}_X^\top\right)^{-1} \hat{B}_X X_i^\top \mathbf{\xi}_i \dot{v}_i^\top\| 
    \end{align*}
    By using Lemma~\ref{lem: rep learning 1}  and Lemma~\ref{lem: rep learning 2}, we can bound the first two terms by 
    \begin{align*}
        2\sqrt{k}\sin(\hat{B}_X, B_X) \|\dot{B}_W\|
    \end{align*}
    Now we are going to bound the last term which is the noise term.  
    \begin{align*}
        &\|  \sum_{i =1}^{|\dot{V}|} \left(\hat{B}_X X_i^\top X_i \hat{B}_X^\top\right)^{-1} \hat{B}_X X_i^\top \mathbf{\xi}_i \dot{v}_i^\top\|^2 \\
        &=  \lambda_{\max} \left( \sum_{i =1}^{|\dot{V}|} \left(\hat{B}_X X_i^\top X_i \hat{B}_X^\top\right)^{-1} \hat{B}_X X_i^\top \mathbf{\xi}_i \dot{v}_i^\top\right) \left( \sum_{i =1}^{|\dot{V}|} \left(\hat{B}_X X_i^\top X_i \hat{B}_X^\top\right)^{-1} \hat{B}_X X_i^\top \mathbf{\xi}_i \dot{v}_i^\top\right)^\top \\
        & \leq \lambda_{\max} \left( \sum_{i =1}^{|\dot{V}|} \left(\hat{B}_X X_i^\top X_i \hat{B}_X^\top\right)^{-1} \hat{B}_X X_i^\top \xi_i \xi_i^\top X_i \hat{B}_X^\top \left(\hat{B}_X X_i^\top X_i \hat{B}_X^\top\right)^{-1} \right) 
    \end{align*}
    Note that, $x_i \sim \calN(0, I_d)$ and
    \begin{align*}
        \left(\hat{B}_X X_i^\top X_i \hat{B}_X^\top\right)^{-1} \hat{B}_X X_i^\top \xi_i 
        & \sim \calN \left(0, \left( \left(\hat{B}_X X_i^\top X_i \hat{B}_X^\top\right)^{-1} \hat{B}_X X_i^\top X_i \hat{B}_X^\top \left(\hat{B}_X X_i^\top X_i \hat{B}_X^\top\right)^{-1} \right) \right)\\
        & \sim \calN \left(0,  \left(\hat{B}_X X_i^\top X_i \hat{B}_X^\top\right)^{-1} \right)
    \end{align*}
    Therefore, by the concentration inequality of the covariance matrix, we have, w.h.p $1-\delta$,
    \begin{align*}
        \lambda_{\max} \left( \sum_{i =1}^{|\dot{V}|} \left(\hat{B}_X X_i^\top X_i \hat{B}_X^\top\right)^{-1} \hat{B}_X X_i^\top \xi_i \xi_i^\top X_i \hat{B}_X^\top \left(\hat{B}_X X_i^\top X_i \hat{B}_X^\top\right)^{-1} \right) 
        \leq \frac{1}{n} \sqrt{(k+ \log(2/\delta) |\dot{V}|}
    \end{align*}
    Combining everything above, we have the final bound.
\end{proof}


\begin{lemma}
\label{lem: rep learning 1}
Given $\hat{B}_X, B_X$ are orthonormal matrices, as well as $\E[x x^T] = I_{d_X}$ for all tasks $w$, we have
\begin{align*}
     \|I_{k}  - \hat{B}_X B_X^\top\|
     \leq \order\left( \sqrt{k} \sin(\hat{B}_X, \hat{B}_X)\right)
\end{align*}
\end{lemma}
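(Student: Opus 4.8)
\textit{Proof proposal.} A preliminary remark on the statement: in the notation used here $\hat B_X$ and $B_X$ both have $k$ orthonormal columns, and the relevant $k\times k$ object (the one actually invoked in the proof of Lemma~\ref{lem: B_W est error}) is $\hat B_X^\top B_X$, which tends to $I_k$ as $\hat B_X \to B_X$; I will also take $\sin(\hat B_X, B_X) = \norm{(I - B_X B_X^\top)\hat B_X}$, the operator norm of the principal-angle sines. My plan is to reduce the claim to a bound on $\norm{\hat B_X - B_X}_F$. The key elementary observation is the identity
\[
 I_k - \hat B_X^\top B_X = \hat B_X^\top \hat B_X - \hat B_X^\top B_X = \hat B_X^\top(\hat B_X - B_X),
\]
so that $\norm{I_k - \hat B_X^\top B_X} \le \norm{\hat B_X^\top}\,\norm{\hat B_X - B_X} \le \norm{\hat B_X - B_X}_F$, using $\norm{\hat B_X^\top} = 1$ and the fact that the spectral norm is dominated by the Frobenius norm.

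It then remains to show $\norm{\hat B_X - B_X}_F = \order(\sqrt{k}\,\sin(\hat B_X, B_X))$. First I would use that the representation is only identified up to a right rotation and assume, without loss of generality, that $\hat B_X$ has been put in Procrustes alignment with $B_X$, i.e.\ that $\hat B_X^\top B_X \succeq 0$; this is harmless because the extra rotation cancels in every downstream use of $\hat B_X$ (the right singular vectors in Line~\ref{line: compute q_1}, the target eigendecomposition in Line~\ref{line: eigdecompo}, and the linear solves defining $\Tilde{W}_j$). Under this alignment the eigenvalues of $\hat B_X^\top B_X$ are precisely the principal cosines $\cos\theta_1,\dots,\cos\theta_k \in [0,1]$, and expanding the square gives
\[
 \norm{\hat B_X - B_X}_F^2 = \tr{\hat B_X^\top \hat B_X} + \tr{B_X^\top B_X} - 2\tr{\hat B_X^\top B_X} = 2\sum_{i=1}^{k}(1 - \cos\theta_i) \le 2\sum_{i=1}^{k}\sin^2\theta_i,
\]
using $1-\cos\theta_i = \sin^2\theta_i/(1+\cos\theta_i) \le \sin^2\theta_i$ (valid since $\cos\theta_i\ge0$). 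Bounding $\sum_i \sin^2\theta_i \le k\max_i \sin^2\theta_i = k\,\sin^2(\hat B_X, B_X)$ then yields $\norm{\hat B_X - B_X}_F \le \sqrt{2k}\,\sin(\hat B_X, B_X)$, which combined with the previous display proves the lemma.

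The step I expect to require the most care is the alignment reduction: for an arbitrary rotation of $\hat B_X$ the literal bound is false (one can have $\hat B_X^\top B_X$ an arbitrary orthogonal matrix while $\sin(\hat B_X,B_X) = 0$), so I would need to argue carefully that fixing the Procrustes rotation costs nothing — either by tracking that it cancels in all subsequent computations as indicated above, or by strengthening the output guarantee of the oracle in Section~\ref{sec: training oracle 1 (appendix)} to produce an aligned $\hat B_X$. A minor, harmless point is the factor $\sqrt{k}$: if $\sin(\hat B_X, B_X)$ is instead read as the Frobenius norm of the angle sines, the $\sqrt{k}$ can be dropped and the dependence is in fact quadratic, $\norm{I_k - \hat B_X^\top B_X} \le \sin^2(\hat B_X,B_X)$, so the stated bound is a safe over-estimate in all cases.
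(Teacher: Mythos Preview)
Your approach is essentially the same as the paper's: both bound the spectral norm by a Frobenius-norm quantity, expand via the trace, and use that $\tr{\hat B_X B_X^\top}$ is close to $k$. The paper works directly with $\|I_k - \hat B_X B_X^\top\|_F^2 = k + \sum_i \cos^2\theta_i - 2\tr{\hat B_X B_X^\top}$ and the same lower bound $\tr{\hat B_X B_X^\top} \ge k - k\sin^2(\hat B_X,B_X)$, whereas you factor through $\|\hat B_X - B_X\|_F$; the resulting inequality is identical. One point worth noting: the alignment issue you flag carefully is real and is left implicit in the paper's argument (the step $\tr{\hat B_X B_X^\top} \ge k\sqrt{1-\sin^2}$ already presumes the Procrustes rotation), so your discussion of why the rotation can be absorbed is a genuine addition rather than extra caution.
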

\begin{proof}
    Denote $B_X \hat{B}_X^\top  = UDV^\top$, by definition, we have
    $D = \text{diag}(\cos{\theta_1}, \cos{\theta_2}, \ldots, \cos{\theta_k})$ from the largest singular value to minimum singular value and
    $\sin{\theta_k} \leq \sin(\hat{B}_X, \hat{B}_X)$. Therefore we have,
    \begin{align*}
       \text{Tr}(\hat{B}_X B_X^\top) 
       \geq k \sqrt{1- \sin^2(\hat{B}_X, B_X)}
       \geq k - k \sin^2(\hat{B}_X, B_X)
    \end{align*}
    And 
    \begin{align*}
        \|I_{k}  - \hat{B}_X B_X^\top\|^2
        & =  \lambda_{\max} \left(I_{k}  - \hat{B}_X B_X^\top\right)^\top \left(I_{k}  - \hat{B}_X B_X^\top\right)\\
        & \leq \trace \left(I_{k}  - \hat{B}_X B_X^\top\right)^\top \left(I_{k}  - \hat{B}_X B_X^\top\right)\\
        & \leq \trace \left( I_k + \left(\hat{B}_X B_X^\top\right)^\top \hat{B}_X B_X^\top
        - \left(\hat{B}_X B_X^\top\right)^\top -  \hat{B}_X B_X^\top\right) \\
        & \leq 2k - 2k + 2k \sin^2(\hat{B}_X, B_X) \leq 2k \sin^2(\hat{B}_X, B_X)
    \end{align*}
\end{proof}

\begin{lemma}[Restate from \cite{tripuraneni2021provable}]
\label{lem: rep learning 2}
    Given $\hat{B}_X, B_X$ are orthonormal matrices, as well as $\E[x x^T] = I_{d_X}$ for any fixed task $w$, we have
    \begin{align*}
        \|\left(\hat{B}_X X_w^\top X_w \hat{B}_X^\top\right)^{-1} \hat{B}_X X_w^\top X_w \hat{B}_{X,\perp}^\top \hat{B}_{X,\perp} B_X^\top\|
        \leq \sin(\hat{B}_X, \hat{B}_X)
    \end{align*}
\end{lemma}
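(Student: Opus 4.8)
The plan is to bound the operator norm by peeling off the factor $\hat{B}_{X,\perp}B_X^\top$ — which is exactly the quantity that measures the misalignment between $\hat{B}_X$ and $B_X$ — and then arguing that the remaining factor is a contraction with high probability under the isotropic Gaussian design. (I read the right-hand side as $\sin(\hat{B}_X,B_X)$; the $\sin(\hat{B}_X,\hat{B}_X)$ in the statement is a typo.)

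First, by submultiplicativity of the operator norm,
\begin{align*}
&\big\|(\hat{B}_X X_w^\top X_w \hat{B}_X^\top)^{-1}\hat{B}_X X_w^\top X_w \hat{B}_{X,\perp}^\top\,\hat{B}_{X,\perp}B_X^\top\big\|\\
&\qquad\le \big\|(\hat{B}_X X_w^\top X_w \hat{B}_X^\top)^{-1}\hat{B}_X X_w^\top X_w \hat{B}_{X,\perp}^\top\big\|\cdot\big\|\hat{B}_{X,\perp}B_X^\top\big\|.
\end{align*}
For the second factor, using $\hat{B}_{X,\perp}^\top\hat{B}_{X,\perp}=I_{d_X}-\hat{B}_X^\top\hat{B}_X$ and $B_XB_X^\top=I_k$, one gets $\|\hat{B}_{X,\perp}B_X^\top\|^2=\lambda_{\max}\!\big(I_k-(B_X\hat{B}_X^\top)(B_X\hat{B}_X^\top)^\top\big)$; if the singular values of $B_X\hat{B}_X^\top$ are the cosines $\cos\theta_i$ of the principal angles, this is $\max_i\sin^2\theta_i=\sin^2(\hat{B}_X,B_X)$, so the second factor equals $\sin(\hat{B}_X,B_X)$ exactly. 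It then suffices to show the first factor is at most $1$.

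For the first factor, set $\Sigma:=X_w^\top X_w$ and note that at the population level $\mathbb{E}\,\Sigma/n=I_{d_X}$, for which the cross term vanishes — $\hat{B}_X I_{d_X}\hat{B}_{X,\perp}^\top=0$ since $\hat{B}_X\hat{B}_{X,\perp}^\top=0$ — while $\hat{B}_X I_{d_X}\hat{B}_X^\top=I_k$. So by Assumption~\ref{ass: isotropic input} and a standard covariance-concentration bound for Gaussian design (requiring only $n\gtrsim d_X+\log(1/\delta)$, which is always met wherever this lemma is invoked), with probability $1-\delta$ we have $\|\Sigma/n-I_{d_X}\|\le 1/3$; this gives $\|(\hat{B}_X\Sigma\hat{B}_X^\top)^{-1}\|\le\tfrac{3}{2n}$ and $\|\hat{B}_X\Sigma\hat{B}_{X,\perp}^\top\|=\|\hat{B}_X(\Sigma-nI_{d_X})\hat{B}_{X,\perp}^\top\|\le\tfrac n3$, so the first factor is at most $\tfrac12\le1$. (Equivalently, by rotational invariance of the Gaussian design, $Z:=X_w\hat{B}_X^\top$ and $W:=X_w\hat{B}_{X,\perp}^\top$ are independent Gaussian matrices, the product is $(Z^\top Z)^{-1}Z^\top W$, and whitening $Z$ shows its norm is $\lesssim\sqrt{d_X/n}\le1$ w.h.p.) Multiplying the two factors gives the claim.

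The only nontrivial point — the main obstacle — is the first factor: $\|(\hat{B}_X\Sigma\hat{B}_X^\top)^{-1}\hat{B}_X\Sigma\hat{B}_{X,\perp}^\top\|$ is \emph{not} bounded by $1$ for an arbitrary PSD $\Sigma$ (a $2\times2$ example with a large off-diagonal block already breaks it), so one genuinely needs the isotropy of the design and the resulting cancellation of the population cross term; everything else is the principal-angle identity and submultiplicativity. This is precisely the argument of \cite{tripuraneni2021provable}, restated here.
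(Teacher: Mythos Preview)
Your proof is correct and follows the same two-step structure as the paper: peel off $\|\hat{B}_{X,\perp}B_X^\top\|=\sin(\hat{B}_X,B_X)$ via submultiplicativity, then bound the remaining factor $\|(\hat{B}_X X_w^\top X_w \hat{B}_X^\top)^{-1}\hat{B}_X X_w^\top X_w\hat{B}_{X,\perp}^\top\|$ by $1$. The paper's proof is terse---it simply cites Lemma~19 of \cite{tripuraneni2021provable} for the second step---whereas you spell out the covariance-concentration argument (using $\hat{B}_X\hat{B}_{X,\perp}^\top=0$ to kill the population cross term) that actually drives that bound; your observation that the bound is high-probability rather than deterministic, and genuinely requires the isotropic design (a worst-case PSD $\Sigma$ would break it), is exactly the content the paper is outsourcing to the citation.
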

\begin{proof}
    Here we follow the same proof step as in \cite{tripuraneni2021provable}. (Bound on the second error term in Lemma 19)
    \begin{align*}
        &\|\left(\hat{B}_X X_w^\top X_w \hat{B}_X^\top\right)^{-1} \hat{B}_X X_w^\top X_w \hat{B}_{X,\perp}^\top \hat{B}_{X,\perp} B_X^\top\| \\
        & \leq \|\left(\hat{B}_X X_w^\top X_w \hat{B}_X^\top\right)^{-1} \hat{B}_X X_w^\top X_w \hat{B}_{X,\perp}^\top \| \sin(\hat{B}_X, \hat{B}_X) \\
        & \leq \sin(\hat{B}_X, \hat{B}_X)
    \end{align*}
\end{proof}

\subsection{Lemmas about the properties of $W^\prime$}
\begin{lemma}
\label{lem: est rank}
    \begin{align*}
         \text{rank}(\hat{B}_W W W' \hat{B}_W^\top) 
         \leq \text{rank}(B_W \E_{\nu_\text{target}}[w_0 w_0^\top] B_W^\top) 
    \end{align*}
\end{lemma}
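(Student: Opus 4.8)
The plan is to control the left-hand side by a pure column count for $W'$, and then show that this count is at most $k^\ast=\operatorname{rank}(B_W\,\E_{\nu_\text{target}}[w_0w_0^\top]\,B_W^\top)$. Since $\hat B_W W'(W')^\top\hat B_W^\top=(\hat B_W W')(\hat B_W W')^\top$, we have $\operatorname{rank}(\hat B_W W'(W')^\top\hat B_W^\top)=\operatorname{rank}(\hat B_W W')\le \#\{\text{columns of }W'\}$. By the construction in Line~\ref{line: eigdecompo}, $W'$ contains exactly one column $w_i'$ for each index $i$ with $\Lambda_i\ge\bar\gamma_j$, where $\bar\gamma_j=8(kd_W)^{3/2}\sqrt{d_X/n_1}$ and $U,\Lambda$ are the eigenpairs of $\hat M_j:=\hat B_{W,j}^\text{target}\,\Sigma_0\,(\hat B_{W,j}^\text{target})^\top$ with $\Sigma_0:=\E_{w_0\sim\nu_\text{target}}[w_0w_0^\top]$. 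Hence it suffices to show that $\hat M_j$ has at most $k^\ast$ eigenvalues strictly exceeding the clipping threshold $\bar\gamma_j$.

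Let $M:=B_W^\text{target}\,\Sigma_0\,(B_W^\text{target})^\top$ be the population counterpart of $\hat M_j$. Because every $w_0$ in the support of $\nu_\text{target}$ lies in $\calW_\text{target}$ and $B_W^\text{target}$ is precisely the block of $B_W$ acting on those coordinates, $B_Ww_0=B_W^\text{target}w_0$ for all such $w_0$; therefore $M=B_W\Sigma_0 B_W^\top$, so $\operatorname{rank}(M)=k^\ast$ and $\lambda_{k^\ast+1}(M)=0$. To compare $\hat M_j$ with $M$, the plan is to expand $\hat M_j-M=\hat B_{W,j}^\text{target}\Sigma_0(\hat B_{W,j}^\text{target}-B_W^\text{target})^\top+(\hat B_{W,j}^\text{target}-B_W^\text{target})\Sigma_0(B_W^\text{target})^\top$ and use $\|\Sigma_0\|\le 1$ (since $\|w_0\|_2\le 1$), $\|B_W^\text{target}\|\le\bar\kappa\le\sqrt{d_W}$, and the Stage-2 bound $\|\hat B_{W,j}^\text{target}-B_W^\text{target}\|\le 4k^{3/2}\sqrt{d_X/n_1}\,\|B_W^\text{target}\|^2$ from Section~\ref{sec: stage2 (appendix)} (valid on the event $\calE_\text{offline 1}$), which together give
\[
\|\hat M_j-M\|\lessapprox\|\hat B_{W,j}^\text{target}-B_W^\text{target}\|\,\|B_W^\text{target}\|\lessapprox k^{3/2}\sqrt{d_X/n_1}\,\|B_W^\text{target}\|^3\le (kd_W)^{3/2}\sqrt{d_X/n_1},
\]
so $\|\hat M_j-M\|$ is dominated by $\bar\gamma_j$ — this is exactly why that threshold is chosen. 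Weyl's inequality then gives $\lambda_{k^\ast+1}(\hat M_j)\le\lambda_{k^\ast+1}(M)+\|\hat M_j-M\|\le\bar\gamma_j$, so at most $k^\ast$ eigenvalues of $\hat M_j$ exceed $\bar\gamma_j$. Combining with the first paragraph yields $\operatorname{rank}(\hat B_W W'(W')^\top\hat B_W^\top)\le\#\{\text{columns of }W'\}\le k^\ast=\operatorname{rank}(B_W\Sigma_0 B_W^\top)$, as claimed.

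The only real obstacle is the calibration step: one must verify that $\bar\gamma_j$ as set in Line~\ref{line: eigdecompo} (up to absolute constants) upper-bounds $\|\hat M_j-M\|$ on $\calE_\text{offline 1}$, so that the spurious small eigenvalues of $\hat M_j$ caused by the estimation error of $\hat B_W^\text{target}$ are discarded while the rank-$k^\ast$ signal is retained; the identity $M=B_W\Sigma_0 B_W^\top$ and the reduction to a column count are then immediate. One harmless caveat: the algorithm keeps eigenvalues with $\Lambda_i\ge\bar\gamma_j$, so to exclude a boundary tie one either takes a strict comparison in the clip or inflates $\bar\gamma_j$ by an arbitrarily small constant factor, neither of which affects any other bound in the analysis.
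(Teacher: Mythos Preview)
Your argument is correct and follows essentially the same route as the paper: both proofs bound $\|\hat B_W^{\text{target}}\Sigma_0(\hat B_W^{\text{target}})^\top-B_W^{\text{target}}\Sigma_0(B_W^{\text{target}})^\top\|$ via $\|\hat B_W^{\text{target}}-B_W^{\text{target}}\|\,\|B_W^{\text{target}}\|$ using the Stage-2 estimation error, then apply Weyl's inequality so that all eigenvalues beyond the $k^\ast$-th fall below the clipping threshold $\bar\gamma_j$. Your write-up is slightly more explicit about the column-count reduction and the calibration of $\bar\gamma_j$, but the idea is identical.
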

\begin{proof}
    By using Welys inequality, we have for any eigenvalue $i \in [k]$,
    \begin{align*}
        & |\lambda_i\left(\hat{B}_W^\text{target} \E[w_0 w_0^\top] (\hat{B}_W^\text{target})^\top \right)
        - \lambda_i\left(B_W^\text{target} \E[w_0 w_0^\top] (B_W^\text{target})^\top \right)| \\
        & \leq \|\hat{B}_W^\text{target} \E[w_0 w_0^\top] (\hat{B}_W^\text{target})^\top
            - B_W^\text{target} \E[w_0 w_0^\top] (B_W^\text{target})^\top\| \\
        & \leq \|\hat{B}_W^\text{target}  (\hat{B}_W^\text{target})^\top
            - B_W^\text{target} (B_W^\text{target})^\top\| \\
        & \leq 2 \|\hat{B}_W^\text{target} -  B_W^\text{target}\| \|B_W^\text{target}\| \\
        & \leq  \left(2k^{\frac{3}{2}} \sqrt{\frac{d_X}{n_1}} \|B_W^\text{source}\|^2 + 2\sqrt{\frac{k}{\dot{n}_\text{target}}} \right)\|B_W^\text{target}\|
    \end{align*}
    where the last inequality comes from Lemma~\ref{lem: B_W est error} and the fact $
    \sin(\hat{B}_X, B_X) \leq \otil\left( k\sqrt{\frac{ d_X}{n_1}} \|B_W^\text{source}\|\right)
    $. Therefore, for all the $i \geq k^*$,
    \begin{align*}
        \lambda_i\left(\hat{B}_W^\text{target} \E[w_0 w_0^\top] (\hat{B}_W^\text{target})^\top \right) \geq \left(2k \sqrt{\frac{d_X}{n_1}} \|B_W^\text{source}\|^2 + 2\sqrt{\frac{k}{\dot{n}_\text{target}}} \right)\|B_W^\text{target}\|
    \end{align*}
    Clipping those non-significant directions leads to the result.
\end{proof}

\begin{lemma}
\label{lem: bound on max W_i}
Define $W_i^* = \argmin_{v} \|v\|_2 \quad, \text{s.t.} \hat{B}_W^\text{source} v = \hat{u}_i \hat{\Lambda}_i$, we have
    \begin{align*}
        \max_i \|W_i^\prime\| \leq \min\{k^*, \kappa^2(B_W^\text{source})\} \max_i \|W_i^*\|
    \end{align*}
\end{lemma}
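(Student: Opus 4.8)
The plan is to reduce everything to the closed form of the minimum-$\ell_2$-norm solution of an underdetermined system: $W_i^\prime=(\hat B_W^{\text{source}})^\dagger u_i\sqrt{\lambda_i}$, where $(u_i,\lambda_i)$ ranges over the surviving (unclipped) eigenpairs of $\E_{w_0\sim\nu_\text{target}}[\hat B_W^{\text{target}} w_0 w_0^\top (\hat B_W^{\text{target}})^\top]$, and likewise $W_i^*=(B_W^{\text{source}})^\dagger u_i^*\sqrt{\lambda_i^*}$ for the true matrices. I would invoke two facts established earlier as black boxes: (i) once $n_0,n_1$ meet the stated thresholds, $\hat B_W^{\text{source}}$ is spectrally equivalent to $B_W^{\text{source}}$ up to universal constants (Lemma~\ref{lem: helper base est}, Section~\ref{sec: stage2 (appendix)}), so their $\sigma_{\min},\sigma_{\max},\kappa$ agree up to constants; and (ii) $\|\hat B_W^{\text{target}}-B_W^{\text{target}}\|$ is small, so Weyl's inequality (as used in Lemma~\ref{lem: est rank}) makes $\lambda_i$ and $\lambda_i^*$ agree up to the lower-order additive error, and the clipping in Line~\ref{line: eigdecompo} retains exactly the $\Theta(k^*)$ directions whose true eigenvalue is bounded below.

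\textbf{Step 1 (the $\kappa^2$ bound).} From the pseudoinverse, $\|W_i^\prime\|_2\le\sqrt{\lambda_i}/\sigma_{\min}(\hat B_W^{\text{source}})$; on the other hand, letting $j$ index the largest surviving eigenvalue, $\|W_j^*\|_2\ge\sqrt{\lambda_j^*}/\sigma_{\max}(B_W^{\text{source}})$ since $u_j^*$ lies in the rank-$k$ column space of $B_W^{\text{source}}$. Dividing, then using $\lambda_i\le\max_\ell\lambda_\ell\le(1+o(1))\max_\ell\lambda_\ell^*$ and fact (i), gives $\max_i\|W_i^\prime\|_2=O(\kappa(B_W^{\text{source}}))\max_i\|W_i^*\|_2\le\kappa^2(B_W^{\text{source}})\max_i\|W_i^*\|_2$, the extra $\kappa\ge1$ absorbing the constants.

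\textbf{Step 2 (the $k^*$ bound).} Here I would avoid $\sigma_{\min}$. The estimated eigenvector $u_i$ need not track any single $u_\ell^*$, but a Davis--Kahan argument — using fact (ii) together with the clipping threshold to lower bound the eigengap separating the retained block from the discarded one — confines $u_i$ (up to lower-order error) to the $\le k^*$-dimensional span of the surviving $u_\ell^*$, with the $u_\ell^*$ of non-negligible weight all satisfying $\lambda_\ell^*=\Theta(\lambda_i)$. Writing $u_i=\sum_\ell c_\ell u_\ell^*$ with $\sum_\ell c_\ell^2\le1+o(1)$ and applying Cauchy--Schwarz ($\sum_\ell|c_\ell|\le\sqrt{k^*}$) gives $\|(\hat B_W^{\text{source}})^\dagger u_i\|_2=O(\sqrt{k^*})\max_\ell\|(B_W^{\text{source}})^\dagger u_\ell^*\|_2$; multiplying by $\sqrt{\lambda_i}=\Theta(\sqrt{\lambda_\ell^*})$ over the cluster yields $\|W_i^\prime\|_2=O(\sqrt{k^*})\max_\ell\|W_\ell^*\|_2$, which I state loosely as $\le k^*\max_i\|W_i^*\|_2$ (equivalently, carry the estimation error of $W^\prime(W^\prime)^\top$ against $W^*(W^*)^\top$ through $\trace{\cdot}$ and use $\|W_i^\prime\|_2^2\le\trace{W^\prime(W^\prime)^\top}\le k^*\max_\ell\|W_\ell^*\|_2^2$ plus lower-order terms). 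Taking the minimum of Steps 1 and 2 is the claim.

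\textbf{Main obstacle.} The delicate part is Step 2: Weyl gives eigenvalue closeness for free, but eigenvectors can swing arbitrarily inside a near-degenerate block, so the argument must lean on the clipping of Line~\ref{line: eigdecompo} to guarantee that every retained eigenvalue — hence the gap to the discarded block — is bounded below by a known quantity, so that Davis--Kahan applies at the $\dot{n}_\text{target},n_1$ rates of Section~\ref{sec: stage2 (appendix)}; and it must only claim membership of $u_i$ in a $\le k^*$-dimensional span (never a single coordinate), which is exactly what produces the $\sqrt{k^*}$ (loosened to $k^*$) factor rather than a constant. Everything else is routine operator-norm bookkeeping on pseudoinverses that reuses the spectral-equivalence estimates already proved.
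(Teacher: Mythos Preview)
Your proposal is correct and follows essentially the same approach as the paper: both start from the pseudoinverse closed form $W_i' = (\hat B_W^{\text{source}})^\dagger \hat u_i\sqrt{\hat\lambda_i}$, invoke the spectral equivalence of $\hat B_W^{\text{source}}$ and $B_W^{\text{source}}$ from Lemma~\ref{lem: helper base est}, and split into a $\kappa^2$ case (via singular-value ratios/Weyl) and a $k^*$ case (via Davis--Kahan to confine $\hat u_j$ to the true $k^*$-dimensional eigenspace). The only stylistic difference is in the $k^*$ step: the paper uses pigeonhole to find a single $u_m$ with $|u_m^\top \hat u_j|\ge\Omega(1/\sqrt{k^*})$ and then a rank-one projection inequality, whereas you expand $\hat u_i=\sum_\ell c_\ell u_\ell^*$ and apply Cauchy--Schwarz---your parenthetical trace alternative ($\|W_i'\|^2\le\trace{W'(W')^\top}$, then compare to $\trace{W^*(W^*)^\top}$) is in fact the cleanest route and sidesteps the eigenvalue-comparability worry you flag.
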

\begin{proof}
    By definition of $W'$, we have, for any $W'_i$,
    \begin{align*}
        W'_i = \argmin_{v} \|v\|_2 \quad, \text{s.t.} \hat{B}_W^\text{source} v = \hat{u}_i \hat{\Lambda}_i
    \end{align*}
    where $\hat{U}, \hat{\Lambda}_i \leftarrow \text{Eig}(\E[\hat{B}_W^\text{target}w_0w_0^\top(\hat{B}_W^\text{target})^\top 
    ])$.
    By solving this optimization, we get 
    \begin{align*}
        W'_i
        =  (\hat{B}_W^\text{source})^\top \left( \hat{B}_W^\text{source}  (\hat{B}_W^\text{source})^\top\right)^{-1} \hat{u}_i \sqrt{\hat{\Lambda}_i}
    \end{align*}
    and therefore, 
    \begin{align*}
        \max_i \|W'_i\|^2
        & = \max_i \sqrt{\hat{\lambda}_i} \hat{u}_i^\top \left( \hat{B}_W^\text{source}  (\hat{B}_W^\text{source})^\top\right)^{-1} (\hat{B}_W^\text{source}) (\hat{B}_W^\text{source})^\top \left( \hat{B}_W^\text{source}  (\hat{B}_W^\text{source})^\top\right)^{-1} \hat{u}_i \hat{\Lambda}_i \\
        & = \max_i \sqrt{\hat{\lambda}_i} \hat{u}_i^\top \left( \hat{B}_W^\text{source}  (\hat{B}_W^\text{source})^\top\right)^{-1} \hat{u}_i \sqrt{\hat{\Lambda}_i} \\
        & \lessapprox \max_i \hat{\lambda}_i \hat{u}_i^\top \left( B_W^\text{source}  (B_W^\text{source})^\top\right)^{-1} \hat{u}_i 
    \end{align*}
    where the last inequality comes from Lemma~\ref{lem: helper base est}. Similarly, the ground truth $W^*$ can be represented as 
    \begin{align*}
        \max_i \|W_i^*\|^2
        & = \max_i \lambda_i u_i^\top \left( B_W^\text{source}  (B_W^\text{source})^\top\right)^{-1} u_i \\
        \text{where, }
        & \E_{w_0} \left[B_W^\text{target}w_0 w_0^\top (B_W^\text{target})^\top\right].
    \end{align*}
    and denote $H = \hat{U} \hat{\Lambda} \hat{U}^\top - \E_{w_0} \left[B_W^\text{target}w_0 w_0^\top (B_W^\text{target})^\top\right] $. 
    
     Now we are now going to upper bound $\max_i\|W'_i\|$ in terms of $\max_i\|W_i^*\|$. Suppose $j = \argmax\|W_i'\|$ and $B_W^\text{target} = U \Lambda U^\top$. 
     
     Firstly, we will lower bound the $\hat{\lambda}_i$.  Given $\|\E_{w_0} \left[B_W^\text{target}w_0 w_0^\top (B_W^\text{target})^\top\right]\| \leq \frac{1}{2k}$, we can always found an $\|W_i^\prime\|^2 \geq \frac{1}{2k\sigma^2_{\max}(B_W^\text{source})}$. Therefore, we have
    \begin{align*}
        \hat{\lambda}_j \geq \frac{1}{2k\kappa(B_W^\text{source})}
    \end{align*}
    Then we consider the following two cases.

    \textbf{(Case 1) When $\kappa(B_W^\text{source})$ is small: }
    By Wely's inequality, there always exists some $u_m, \lambda_m$ that $\hat{\lambda}_j \leq \order(\lambda_m)$. Therefore, 
    \begin{align*}
        \hat{\lambda}_j \hat{u}_j^\top \left( B_W^\text{source}  (B_W^\text{source})^\top\right)^{-1} \hat{u}_j 
        & \leq \hat{\lambda}_j u_m^\top \left( B_W^\text{source}  (B_W^\text{source})^\top\right)^{-1} u_m  \kappa(B_W^\text{source})^2 \\
        & \leq \lambda_m u_m^\top \left( B_W^\text{source}  (B_W^\text{source})^\top\right)^{-1} u_m  \kappa(B_W^\text{source})^2 \\
        & \leq \max_i \|W^*_i\|^2 \kappa(B_W^\text{source})^2
    \end{align*}

    \textbf{(Case 2) When $\kappa(B_W^\text{source})$ is large: } Decompose $\hat{B}_W^\text{target} W' (W')^\top (\hat{B}_W^\text{target})^\top$ as follows
    \begin{align*}
        \hat{B}_W^\text{target} W' (W')^\top (\hat{B}_W^\text{target})^\top 
        &= \hat{U}_0 \hat{\Lambda}_0 \hat{U}_0^\top + \hat{U}_1 \hat{\Lambda}_1 \hat{U}_1^\top\\
        \text{where, }
        & \hat{u}_j \in \hat{U}_0
        \text{ and } \lambda_{\min}(\hat{\Lambda}_0) - \lambda_{\min}(\hat{\Lambda}_0) \geq \frac{1}{4} \hat{\lambda}_j
    \end{align*}
    Correspondingly, we can decompose $\E_{w_0} B_W^\text{target} w_0 w_0^\top (B_W^\text{target})^\top $ as the same shape
    \begin{align*}
        \E_{w_0} \left[ B_W^\text{target} w_0 w_0^\top (B_W^\text{target})^\top \right]
        = U_0 \Lambda_0 U_0^\top + U_1 \Lambda_1 U_1^\top
    \end{align*}    
    By using Davis-Kahn theorem, we have
    \begin{align*}
        \| U_1^\top \hat{u}_j \| 
        \leq \| U_1^\top \hat{U}_0 \| 
        \leq \frac{\| U_1^\top H \hat{U}_0\| }{\frac{1}{4}\hat{\lambda}_j}
        \lessapprox k\|H\| \kappa(B_W^\text{source})
    \end{align*}
    Since
    \begin{align*}
        \|H\|
        & \leq \Bar{\gamma} 
        + \left\|\E_{w_0} \left[B_W^\text{target}w_0 w_0^\top (B_W^\text{target})^\top\right]
            - \E_{w_0} \left[\hat{B}_W^\text{target}w_0 w_0^\top (\hat{B}_W^\text{target})^\top\right] \right\|\\
        & \leq 2 \left\|\E_{w_0} \left[B_W^\text{target}w_0 w_0^\top (B_W^\text{target})^\top\right] 
            - \E_{w_0} \left[\hat{B}_W^\text{target}w_0 w_0^\top (\hat{B}_W^\text{target})^\top\right] \right\|\\
        & \leq 2 \left(2k \sqrt{\frac{d_X}{n_1}} \|B_W^\text{source}\|^2 + 2\sqrt{\frac{k}{\dot{n}_\text{target}}} \right)\|B_W^\text{target}\|
    \end{align*}
    then we have
    \begin{align*}
        \| U_1^\top \hat{u}_j \| 
        \lessapprox 8 k \left(2k \sqrt{\frac{d_X}{n_1}} \|B_W^\text{source}\|^2 + 2\sqrt{\frac{k}{\dot{n}_\text{target}}} \right) \|B_W^\text{target}\| \kappa(B_W^\text{source})
        \leq \frac{1}{2}
    \end{align*}
    which suggests
    $
        \|U_0^\top \hat{u}_j\|
        = \|[U_0, U_1]^\top \hat{u}_j - [0, U_1]^\top \hat{u}_j \|
        \geq 1 - \| U_1^\top \hat{u}_j \| 
        \geq \frac{1}{2}.
    $
    Therefore, there exists some $u_m$ as one of the columns of $U_0$ that such $u_m^\top \hat{u}_j \leq \order(\sqrt{\frac{1}{k^*}})$. And therefore, we have
    \begin{align*}
        \hat{\lambda}_j \hat{u}_j^\top \left( B_W^\text{source}  (B_W^\text{source})^\top\right)^{-1} \hat{u}_j 
        & \leq k^* \lambda_m (\hat{u}_m^\top \hat{u}_j)  \hat{u}_j^\top \left( B_W^\text{source}  (B_W^\text{source})^\top\right)^{-1} \hat{u}_j ( \hat{u}_j^\top \hat{u}_m) \\
        & \leq k^* \lambda_m \hat{u}_m^\top \left( B_W^\text{source}  (B_W^\text{source})^\top\right)^{-1} \hat{u}_m\\
        & \leq k^*  \max_i \|W^*_i\|^2
    \end{align*}
\end{proof}

\begin{lemma}
\label{lem: upper bound of WW' norm}
\begin{align*}
    \| W' W'^\top\|_*
    \leq \order\left(\frac{1}{\sigma_{\min}^2(B_W^\text{source})}\|B_W^\text{target} \E_{\nu_\text{target}}[w_0 w_0^\top] (B_W^\text{target})^\top\|_*\right)
\end{align*}
\end{lemma}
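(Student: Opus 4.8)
The plan is to write the minimum-$\ell_2$-norm solutions that define the columns of $W'$ in closed form, stack them into a matrix identity for $W'(W')^\top$, reduce its nuclear norm to a trace by exploiting positive-semidefiniteness, cancel one power of $\hat B_W^\text{source}(\hat B_W^\text{source})^\top$ via cyclicity, and finally replace the estimates $\hat B_W$ by the true $B_W$ using the perturbation bounds already established. For the first step, note that $\hat B_W^\text{source}\in\R^{k\times d_W^\text{source}}$ has full row rank once $n_0$ is large enough (Lemma~\ref{lem: helper base est}), so for each surviving eigenpair $(\hat u_i,\hat\lambda_i)$ of $\E_{\nu_\text{target}}[\hat B_W^\text{target}w_0w_0^\top(\hat B_W^\text{target})^\top]$ — i.e., those with $\hat\lambda_i$ above the clipping threshold of Line~\ref{line: eigdecompo} — the constraint $\hat B_W^\text{source}W_i' = \hat u_i\sqrt{\hat\lambda_i}$ has minimum-norm solution $W_i' = (\hat B_W^\text{source})^\top\big(\hat B_W^\text{source}(\hat B_W^\text{source})^\top\big)^{-1}\hat u_i\sqrt{\hat\lambda_i}$, exactly as in the proof of Lemma~\ref{lem: bound on max W_i}. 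Writing $\hat U\hat\Lambda\hat U^\top$ for the resulting clipped spectral truncation, this says $W' = (\hat B_W^\text{source})^\top\big(\hat B_W^\text{source}(\hat B_W^\text{source})^\top\big)^{-1}\hat U\hat\Lambda^{1/2}$.

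Since $W'(W')^\top\succeq 0$, its nuclear norm equals its trace, and by cyclicity of trace (the factor $\hat B_W^\text{source}(\hat B_W^\text{source})^\top$ occurring at the two ends cancels one inverse),
\[
\|W'(W')^\top\|_* = \trace\Big(\hat U\hat\Lambda\hat U^\top\big(\hat B_W^\text{source}(\hat B_W^\text{source})^\top\big)^{-1}\Big) \le \frac{\trace(\hat\Lambda)}{\sigma_{\min}^2(\hat B_W^\text{source})},
\]
where the inequality uses $\trace(MN)\le\lambda_{\max}(N)\trace(M)$ for positive-semidefinite $M,N$. As $\hat\Lambda$ collects a subset of the nonnegative eigenvalues of $\E_{\nu_\text{target}}[\hat B_W^\text{target}w_0w_0^\top(\hat B_W^\text{target})^\top]$, we get $\trace(\hat\Lambda)\le \trace\big(\E_{\nu_\text{target}}[\hat B_W^\text{target}w_0w_0^\top(\hat B_W^\text{target})^\top]\big) = \big\|\E_{\nu_\text{target}}[\hat B_W^\text{target}w_0w_0^\top(\hat B_W^\text{target})^\top]\big\|_*$, the last equality again by positive-semidefiniteness.

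Finally I would pass from $\hat B_W^\text{source},\hat B_W^\text{target}$ to $B_W^\text{source},B_W^\text{target}$. The perturbation bounds of Section~\ref{sec: stage2 (appendix)} give that $\|\hat B_W^\text{source}-B_W^\text{source}\|$ and $\|\hat B_W^\text{target}-B_W^\text{target}\|$ are small relative to $\sigma_{\min}(B_W^\text{source})$ and $\|B_W^\text{target}\|$ respectively once the sample sizes meet the conditions used elsewhere; Weyl's inequality then yields $\sigma_{\min}^2(\hat B_W^\text{source})\ge\tfrac{1}{2}\sigma_{\min}^2(B_W^\text{source})$, and a triangle inequality on the rank-$\le k^*$ difference of the two target second-moment matrices yields $\big\|\E_{\nu_\text{target}}[\hat B_W^\text{target}w_0w_0^\top(\hat B_W^\text{target})^\top]\big\|_* \le 2\big\|B_W^\text{target}\E_{\nu_\text{target}}[w_0w_0^\top](B_W^\text{target})^\top\big\|_*$. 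Substituting into the displayed bound absorbs the constants into $\order(\cdot)$ and gives the claim. The computation is essentially routine linear algebra; I expect the only mildly delicate point to be making this last step quantitative — ensuring the additive perturbation of the target second-moment matrix is genuinely lower order than $\|B_W^\text{target}\E_{\nu_\text{target}}[w_0w_0^\top](B_W^\text{target})^\top\|_*$ — which follows from the sample-complexity conditions already imposed in the analysis.
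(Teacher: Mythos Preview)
Your proposal is correct and follows essentially the same route as the paper: reduce $\|W'(W')^\top\|_*$ to a trace involving $(\hat B_W^\text{source}(\hat B_W^\text{source})^\top)^{-1}$ and the truncated target second-moment matrix, bound via $\sigma_{\min}^{-2}$, and then replace the estimates by the true $B_W^\text{source},B_W^\text{target}$ using the perturbation bounds. The only organizational difference is that the paper interleaves the perturbation steps into a four-line chain of nuclear-norm inequalities (first passing through $B_W^\text{source}W'(W')^\top(B_W^\text{source})^\top$, then to $\hat B_W^\text{source}$, then to $\hat B_W^\text{target}$, then to $B_W^\text{target}$), whereas you work entirely with the hatted quantities via the explicit pseudoinverse formula for $W'$ and postpone both perturbations to the end; your ordering is arguably cleaner since it sidesteps the need to justify the paper's first inequality (which implicitly uses that the columns of $W'$ lie in the row space of $\hat B_W^\text{source}$, hence approximately in that of $B_W^\text{source}$).
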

\begin{proof}
    \begin{align*}
        \| W' W'^\top\|_*
        & \leq \frac{1}{\sigma_{\min}^2(B_W^\text{source})}\|B_W^\text{source} W' (W')^\top (B_W^\text{source})^\top\|_* \\
        & \leq \frac{1}{\sigma_{\min}^2(B_W^\text{source})}\|\hat{B}_W^\text{source} W' (W')^\top (\hat{B}_W^\text{source})^\top\|_* \\
        & \leq \frac{1}{\sigma_{\min}^2(B_W^\text{source})}\|\hat{B}_W^\text{target} \E_{\nu_\text{target}}[w_0 w_0^\top] (\hat{B}_W^\text{target})^\top\|_* \\
        & \leq \frac{1}{\sigma_{\min}^2(B_W^\text{source})}\|B_W^\text{target} \E_{\nu_\text{target}}[w_0 w_0^\top] (B_W^\text{target})^\top\|_* 
    \end{align*}
\end{proof}

\subsection{Sample complexity analysis -- Formal version of Theorem~\ref{them: main}}

\begin{theorem}[Formal theorem]
\label{them: sample complexity (appendix)}
    By running Algo.~\ref{algo: main}, in order to let $ER(\hat{\phi}_X, \nu_\text{target}) \leq \varepsilon^2$ with probability $1-\delta$, where $\delta \geq (d_X k)^{10}$, then the number of source samples $n_\text{source}$ is at most
    \begin{align*}
       & \otil\left(\sigma^2 (k^*)^2 \min\{\kappa(B_W^\text{source},k^*\}\max_i \|W_j^*(i)\|_2^2kd_X \varepsilon^{-2}\right)\\
        & \quad  + \otil\left(\varepsilon^{-\frac{4}{3}}k^{\frac{7}{3}} d_W^{\frac{2}{3}}d_X
        \left( k^{\frac{2}{3}}d_W^{\frac{1}{3}}\underline{\sigma}^{-\frac{4}{3}} + \Bar{\kappa}^2\underline{\sigma}^{-\frac{2}{3}}\right)\right)\\
        & \quad + \otil\left(\Bar{\kappa}^2 \sqrt{k} \left(k^2 d_X\Bar{\kappa}^2 + \frac{d_W^{\frac{3}{2}}}{\underline{\sigma}^2}\sqrt{k + \log(d_W/\delta)}\right)\right)
    \end{align*}
    Here $k^* = \text{rank}(\E_{w_0 \sim \nu_\text{target}}B_W w_0 w_0^\top B_W^\top)$ represents the effective dimension of target and 
    \begin{align*}
        W^*_i = \argmin_{w \in \calW_\text{source}} \|w\|_2 
        \quad \text{s.t} \quad B_W^\text{source} w = u_i \sqrt{\lambda_i} 
        \text{ where } U, \Lambda \leftarrow \text{Eig}(\E_{w_0 \sim \nu_\text{target}}B_W w_0 w_0^\top B_W^\top).
    \end{align*}
    as long as,
    \begin{align*}
        & \dot{n}_\text{target} 
        \geq \widetilde{\Omega}\left(\varepsilon^{-\frac{4}{3}} (k^*)^{\frac{2}{3}}
        \left( d_W^{\frac{1}{2}}\underline{\sigma}^{-\frac{4}{3}} 
        + k^{-\frac{2}{3}}d_W^\frac{1}{6}\Bar{\kappa}^2\underline{\sigma}^{-\frac{1}{3}}\right)
        \sqrt{k+ \log(d_W/\delta)}
        \right) \\
        &n_\text{target} \geq \widetilde{\Omega}\left((k+\log(1/\delta))\varepsilon^{-2} \right)
    \end{align*}
\end{theorem}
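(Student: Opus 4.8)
The plan is to obtain the sample-complexity bound as a direct corollary of the per-epoch excess-risk guarantee (Theorem~\ref{them: excess risk (appendix)}) by choosing the accuracy target $\epsilon_j$ in the $\epsilon$-greedy schedule small enough that the risk bound $\otil(\sigma^2 k d_X k^* \epsilon_j^2)$ drops below $\varepsilon^2$, and then summing the samples consumed by the three stages up to that epoch. First I would fix the stopping epoch: since $\epsilon_j = 2^{-j}$, let $J$ be the first index with $\epsilon_J \le \varepsilon / \sqrt{C\sigma^2 k d_X k^*}$, where $C$ is the (poly-log) constant hidden in Theorem~\ref{them: excess risk (appendix)}; then $\epsilon_J \asymp \varepsilon/\sqrt{k d_X k^*}$. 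Conditioned on the event $\calE_\text{offline 1}$ that all MLLAM calls meet the guarantees recorded in Section~\ref{sec: training oracle 1 (appendix)}, Theorem~\ref{them: excess risk (appendix)} applied at epoch $J$ gives $\textsc{ER}(\tilde B_X,\nu_\text{target}) \le \varepsilon^2$, provided the two low-order requirements on $\dot n_\text{target}$ and $n_\text{target}$ of that theorem hold at $\epsilon_j = \epsilon_J$. Substituting $\epsilon_J^{-4/3} \asymp (k d_X k^*)^{2/3}\varepsilon^{-4/3}$ and $\epsilon_J^{-2}\asymp k d_X k^* \varepsilon^{-2}$ into those two conditions reproduces precisely the $\dot n_\text{target}$ and $n_\text{target}$ bounds claimed in the statement, up to the polylog factors absorbed into $\otil$ (here $k^* \le k$ is used to tidy the exponents).

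Next I would account the total source budget $n_\text{source} = n_0 + \sum_{j=1}^{J}(n_1^j + n_2^j)$ read off from Algorithm~\ref{algo: main}. Because $n_1^j \propto \epsilon_j^{-4/3}$ and $n_2^j \propto \epsilon_j^{-2}$ grow geometrically in $j$, each sum is within a constant factor of its last term at $j = J$; this is where the $\epsilon$-greedy coupling $\epsilon_j^{-4/3}=(n_2^j)^{2/3}$ matters, ensuring the stage-2 exploration is cheap relative to the final stage-3 budget. The warm-up term $n_0 = \beta_1\bar\kappa^2(k^3 d_X\bar\kappa^2 + d_W^{3/2}\underline\sigma^{-2}\sqrt{k+\log(1/\delta)})$ is $\varepsilon$-independent and yields the third line of the bound. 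Plugging $\epsilon_J$ into the stage-2 formula for $n_1^j$ gives the second line. For stage 3, I would use $\#\mathrm{col}(\tilde W_J) \le k^*$ together with Lemma~\ref{lem: bound on max W_i}, namely $\max_i \|W_J'(i)\|_2^2 \le \min\{k^*,\kappa^2(B_W^\text{source})\}\max_i\|W_i^*\|_2^2$, and $\epsilon_J^{-2}\asymp k d_X k^*\varepsilon^{-2}$, producing the leading term $\otil\!\big(\sigma^2 (k^*)^2\min\{k^*,\kappa^2\}\max_i\|W_i^*\|_2^2\, k d_X\,\varepsilon^{-2}\big)$.

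For the probability bookkeeping I would take a union bound over: the $\otil(\log(1/\varepsilon))$ MLLAM calls whose joint success defines $\calE_\text{offline 1}$ (each failing with probability $K/(d_X k)^{10}$, which is exactly why the stated lower bound on $\delta$ in terms of $(d_X k)$ is needed); the $\otil(d_W\log(1/\varepsilon))$ invocations of $\calO_\text{offline}^W$, each contributing a $\delta$-probability term through Lemma~\ref{lem: B_W est error}; and the final ERM/fine-tuning step bounded by Claim~\ref{claim: ERM oracle}. Rescaling $\delta$ by the number of epochs and oracle calls keeps all failure contributions inside $\otil$ and preserves the $1-\delta$ guarantee.

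The main obstacle is purely the bookkeeping: I must verify that the $\dot n_\text{target}$ requirement inherited from Theorem~\ref{them: excess risk (appendix)} at the stopping epoch $J$ coincides with the one stated in Theorem~\ref{them: sample complexity (appendix)} (this hinges on the $\epsilon$-greedy exponent matching so that stage-2 sampling is dominated by stage-3), and that none of the geometric sums over epochs inflate any polynomial exponent. No genuinely new idea beyond Theorem~\ref{them: excess risk (appendix)}, Lemma~\ref{lem: bound on max W_i}, and the oracle guarantees of Section~\ref{sec: training oracle 1 (appendix)} should be required.
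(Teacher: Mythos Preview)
Your proposal is correct and follows essentially the same route as the paper: invoke Theorem~\ref{them: excess risk (appendix)} at the epoch where $\epsilon_j$ is small enough to make the risk bound drop below $\varepsilon^2$, read off the $n_\text{target}$ and $\dot n_\text{target}$ requirements by substitution, sum the per-stage budgets geometrically (so the total is dominated by the last epoch), handle $n_2^j$ via $\#\mathrm{col}(\tilde W_j)\le k^*$ and Lemma~\ref{lem: bound on max W_i}, and finish with a union bound over epochs and oracle calls. Your write-up is in fact more explicit about the geometric summation and the probability bookkeeping than the paper's own argument, which compresses all of this into ``rearranging'' and ``sum over the epoch.''
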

\begin{proof}
    By setting the target excess risk $\varepsilon^2$ and the generalization guarantees in Theorem~\ref{them: excess risk (appendix)}, we have
    \begin{align}
        \sigma^2 \left(kd_X\log( (\kappa N_i )/d_W) + \log\frac{1}{\delta} \right) k^* \epsilon_j^2 = \varepsilon^2
    \end{align}
    After some rearrangement, we can directly have the guarantees for $n_1^j,n_0, \dot{n}_\text{target},n_\text{target}$. Sum over the epoch gives our desired result. Now we will focus on $n_2^j$.
    \begin{align*}
        n_2^j
        & \leq \otil(k^* \max_i \|W_j^\prime(i)\|_2^2 \epsilon_j^{-2}) \\
        & \leq \otil(k^* (\kappa(B_W^\text{source} + k^*))\max_i \|W_j^*(i)\|_2^2 \epsilon_j^{-2}) \\
        & \leq  \otil\left(\sigma^2 (k^*)^2 \min\{\kappa(B_W^\text{source},k^*\}\max_i \|W_j^*(i)\|_2^2\left(kd_X+ \log(1/\delta)) \right) \varepsilon^{-2}\right)
    \end{align*}
    where the first inequality comes from the definition and the second inequality comes from the Lemma~\ref{lem: bound on max W_i}. 

    Finally, by union bounding on the $1-\delta$ from Theorem~\ref{them: excess risk (appendix)} and the event $\calE_\text{offline 1}$ over all the epochs, we get the target result.
\end{proof}

\subsection{More interpretation on results}

\begin{lemma}
\label{lem: uniform case}
    When = the target task is uniformly spread $\|\E_{w_0 \sim \nu_\text{target}}B_W w_0 w_0^\top B_W^\top \| = \frac{1}{k}$ and the task representation is well-conditioned $\sigma_{\min}^2(B_W^\text{source}) = \frac{d_W}{k}$, we have
    \begin{align*}
        \|W_i^*\|_2^2 = \frac{1}{d_W}
    \end{align*}
\end{lemma}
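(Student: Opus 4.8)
\textit{Proof plan.} The plan is to translate the two hypotheses into exact spectral identities and then solve the minimum‑norm constrained least‑squares problem in closed form.

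First I would record what the two conditions mean. "The target task is uniformly spread" with $\|\E_{w_0 \sim \nu_\text{target}}B_W w_0 w_0^\top B_W^\top\| = \tfrac 1k$ means $\E_{w_0 \sim \nu_\text{target}}B_W w_0 w_0^\top B_W^\top = \tfrac 1k I_k$, so in the eigendecomposition $U,\Lambda \leftarrow \text{Eig}(\E_{w_0 \sim \nu_\text{target}}B_W w_0 w_0^\top B_W^\top)$ every eigenvalue satisfies $\lambda_i = \tfrac 1k$ and $\{u_i\}_{i\in[k]}$ is an orthonormal basis of $\fR^k$. Likewise, "the task representation is well‑conditioned" ($\kappa(B_W^\text{source})=\Theta(1)$) together with $\sigma_{\min}^2(B_W^\text{source}) = \tfrac{d_W}{k}$ and the $\Theta(1)$ column‑norm normalization of $B_W$ (so $\|B_W^\text{source}\|_F^2 = d_W$ and the $k$ eigenvalues of $B_W^\text{source}(B_W^\text{source})^\top$ are all $\geq \tfrac{d_W}{k}$ and sum to $d_W$) forces $B_W^\text{source}(B_W^\text{source})^\top = \tfrac{d_W}{k}I_k$. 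In particular $B_W^\text{source}$ has full row rank $k$, so the affine constraint $B_W^\text{source}w = u_i\sqrt{\lambda_i}$ is consistent.

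Next I would solve $W_i^* = \argmin_{w \in \calW_\text{source}}\|w\|_2$ subject to $B_W^\text{source}w = u_i\sqrt{\lambda_i}$ by first ignoring the unit‑ball restriction: the minimum‑$\ell_2$‑norm solution of the linear system is the pseudoinverse solution $w_i^\circ = (B_W^\text{source})^\top\big(B_W^\text{source}(B_W^\text{source})^\top\big)^{-1}u_i\sqrt{\lambda_i}$, whence
\[
\|w_i^\circ\|_2^2 = \lambda_i\, u_i^\top\big(B_W^\text{source}(B_W^\text{source})^\top\big)^{-1}u_i = \frac 1k\cdot\frac k{d_W}\,\|u_i\|_2^2 = \frac 1{d_W}.
\]
The only point that needs checking is feasibility for the ball: since $\|w_i^\circ\|_2^2 = \tfrac 1{d_W}\le 1$, $w_i^\circ$ lies in $\calW_\text{source} = \fB^{d_W^\text{source}}(1)$, so the ball constraint is inactive at the optimum and $W_i^* = w_i^\circ$. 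This yields $\|W_i^*\|_2^2 = \tfrac 1{d_W}$, as claimed.

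The computation is essentially routine; the only genuine care needed is (i) reading the two diagonal identities $\E_{w_0 \sim \nu_\text{target}}B_W w_0 w_0^\top B_W^\top = \tfrac1k I_k$ and $B_W^\text{source}(B_W^\text{source})^\top = \tfrac{d_W}{k}I_k$ out of the phrases "uniformly spread" and "well‑conditioned", so that one gets exact equality rather than only a $\Theta(\cdot)$ estimate, and (ii) confirming that the unit‑ball constraint in the definition of $\calW_\text{source}$ is slack at the optimum, which is what licenses the plain pseudoinverse formula; both are immediate here.
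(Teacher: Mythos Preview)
Your proposal is correct and follows essentially the same approach as the paper: both read the two hypotheses as exact spectral identities (all singular values of $B_W^\text{source}$ equal to $\sqrt{d_W/k}$, all target eigenvalues equal to $1/k$) and then compute the minimum-$\ell_2$-norm solution of the linear constraint. The paper writes the SVD $B_W^\text{source}=\sqrt{d_W/k}\,U_1V_1^\top$ explicitly and reads off $\|W_i^*\|_2^2=\|V_1^\top w\|_2^2=1/d_W$, which is exactly your pseudoinverse computation $\lambda_i\,u_i^\top(B_W^\text{source}(B_W^\text{source})^\top)^{-1}u_i$; you are a bit more careful than the paper in verifying that the unit-ball constraint is slack at the optimum.
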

\begin{proof}
Do a svd decomposition on the $B_W^\text{source}$ gives $\sqrt{\frac{d_W}{k}} U_1 V_1^\top $. For any $i$, let $w$ satisfies
\begin{align*}
    \sqrt{\frac{d_W}{k}} U_1 V_1^\top w = \sqrt{\frac{1}{k}} u_i  
\end{align*}
Rearranging the above equality gives
$
    V_1^\top w = \sqrt{\frac{1}{d_W}} U_1^\top u_i.
$
Because $W_i = \argmin \|w\|_2$ satisfy the above constraints, we have
\begin{align*}
    \|W_i^*\|_2^2 = \| V_1^\top w \|_2^2 = \left\|\sqrt{\frac{1}{d_W}} U_1^\top u_i \right\|_2^2 = \frac{1}{d_W}
\end{align*}
\end{proof}

\begin{lemma}
\label{lem: compare L_1 with L_2}
Let 
\begin{align*}
    & \nu_1 = \argmin_{\nu} \|\nu\|_1 \text{ s.t } B_W \nu = B_W w_0 \\
    & \nu_2 = \argmin_{\nu} \|\nu\|_2 \text{ s.t } B_W \nu = B_W w_0
\end{align*}
Then $\|\nu_1\|_1^2 \geq \| \nu_2\|_2^2$.
\end{lemma}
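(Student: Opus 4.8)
The plan is to exploit the two elementary facts that (i) $\nu_1$ is itself a feasible point for the $\ell_2$-minimization problem defining $\nu_2$, and (ii) for any finite-dimensional vector $v$ one has $\|v\|_2 \le \|v\|_1$. Chaining these two inequalities immediately yields the claim after squaring.

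First I would observe that both optimization problems have the same affine constraint set $\{\nu : B_W \nu = B_W w_0\}$, which is nonempty (it contains $w_0$). In particular $\nu_1$ satisfies $B_W \nu_1 = B_W w_0$, so $\nu_1$ is admissible in the program $\min_\nu \|\nu\|_2$ s.t. $B_W \nu = B_W w_0$. Since $\nu_2$ is by definition a minimizer of that program, this gives
\begin{align*}
    \|\nu_2\|_2 \le \|\nu_1\|_2 .
\end{align*}
Next I would invoke the standard norm comparison $\|v\|_2 \le \|v\|_1$, valid for every $v$ (it follows from $\|v\|_2^2 = \sum_i v_i^2 \le \big(\sum_i |v_i|\big)^2 = \|v\|_1^2$ because all cross terms $|v_i||v_j|$ are nonnegative). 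Applying this with $v = \nu_1$ gives $\|\nu_1\|_2 \le \|\nu_1\|_1$, so that
\begin{align*}
    \|\nu_2\|_2 \le \|\nu_1\|_2 \le \|\nu_1\|_1 .
\end{align*}
Squaring the outer inequality (both sides being nonnegative) produces $\|\nu_2\|_2^2 \le \|\nu_1\|_1^2$, which is the desired statement.

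There is essentially no hard step here; the only thing to be careful about is making sure the feasible sets of the two programs genuinely coincide so that the "plug the $\ell_1$-optimal point into the $\ell_2$ program" argument is legitimate — but this is immediate since both constraints are literally $B_W \nu = B_W w_0$. One could also remark (though it is not needed for the inequality) that this is exactly the continuous-source analogue of the discrete separation discussed after Theorem~\ref{them: main}: in a fixed discrete dictionary the $\ell_1$-regularized witness can be strictly larger than the $\ell_2$-regularized one, which is why the continuous bound $\otil(kd_X\|w^*\|_2^2\varepsilon^{-2})$ improves on the discrete $\otil(kd_X\|\nu\|_1^2\varepsilon^{-2})$. \qed
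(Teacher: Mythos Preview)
Your proof is correct and is exactly the paper's argument: chain $\|\nu_1\|_1 \ge \|\nu_1\|_2$ with $\|\nu_1\|_2 \ge \|\nu_2\|_2$ (the latter from feasibility of $\nu_1$ in the $\ell_2$ program), then square. The paper writes this as a single one-line display $\|\nu_1\|_1^2 \geq \|\nu_1\|_2^2 \geq \|\nu_2\|_2^2$; you have merely expanded the justifications.
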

\begin{proof}
    \begin{align*}
        \|\nu_1\|_1^2 \geq \|\nu_1\|_2^2 \geq \|\nu_2\|^2
    \end{align*}
\end{proof}

\section{Results and analysis for target-agnostic}
\label{sec: target-agnostic (appendix)}

\subsection{Algorithm for target-agnostic}

\begin{algorithm}[ht]
\caption{Target-agnostic algorithm for benign source space}
\label{algo: main-targetAgnostic}
\begin{algorithmic}[1]
\STATE \textbf{Inputs:} Target probability $\delta$, $\bar{\kappa}, \underline{\sigma}$. Some constant $\beta_1, \beta_2, \beta_3$. Others same as Algo.~\ref{algo: main-general}.
\STATE Set $q_0$ as
$
    q_0(e_t) = \frac{1}{d_W}, \forall t \in d_W, \text{and} q_0(w) = 0 \text{ otherwise}
$
\STATE Set $n_0 = \beta_1\beta_1 \Bar{\kappa}^2 \left(k^3d_X\Bar{\kappa}^2 + d_W^{\frac{3}{2}}\underline{\sigma}^{-2}\sqrt{k + \log(1/\delta)}\right)$. Collect $n_0 q_0(w)$ data for each task denoted as $\{Z_w\}_{w | q(w) \neq 0}$
\STATE Update $\hat{B}_X \leftarrow \calO_\text{offline 1}^X(\{Z_w\}_{w | q_0(w) \neq 0 })$ and $\hat{B}_W^\text{source} \leftarrow \calO_\text{offline}^W(\{Z_w\}_{w | q_0(w) \neq 0 }, \hat{B}_X)$ 
\STATE Compute $q_1$ as 
$q_1(v_i) = \frac{1}{k}, \forall i \in k$, and $q_0(w) = 0$ otherwise.
\text{Here} $v_i$ is the $i$-th vector of $V$,  where $U,D,V \leftarrow \text{SVD}(\hat{B}_W^\text{source})$ 
\STATE For any given budget $n_1$, collect $n_1 q_1(w)$ data for each task denoted as $\{Z_w\}_{w | q_1(w) \neq 0}$.
\STATE Update $\hat{B}_X \leftarrow \calO_\text{offline 2}^X(\{Z_w\}_{w | q_1(w) \neq 0})$,  $\hat{B}_W^\text{source} \leftarrow \calO_\text{offline}^W(\{Z_w\}_{w | q_1(w) \neq 0}, \hat{B}_X)$
\STATE \textbf{Return }  $\tilde{B}_X$
\end{algorithmic}
\end{algorithm}

\subsection{Results and analysis}

\begin{theorem}
    In order to get $\textsc{ER}(\hat{B}_X, \nu_\text{target}) \leq \varepsilon^2$, we have w.h.p $1-\delta$, source samples complexity is at most
    \begin{align*}
        \otil \left( \frac{k^2 d_X \trace(B_W \E[w_0 w_0^\top] B_W^\top)}{\sigma_k^2(B_W^\text{source})} \varepsilon^{-2} \right)
        + \otil\left(\Bar{\kappa}^2 \left(k^2 d_X\Bar{\kappa}^2 + \frac{d_W^{\frac{3}{2}}}{\underline{\sigma}^2}\sqrt{k + \log(d_W/\delta)}\right)\right)
    \end{align*}
    as long as,
    \begin{align*}
        & \dot{n}_\text{target} 
        \geq \widetilde{\Omega}\left(\varepsilon^{-\frac{4}{3}} (k^*)^{\frac{2}{3}}
        \left( d_W^{\frac{1}{2}}\underline{\sigma}^{-\frac{4}{3}} 
        + k^{-\frac{2}{3}}d_W^\frac{1}{6}\Bar{\kappa}^2\underline{\sigma}^{-\frac{1}{3}}\right)
        \sqrt{k+ \log(d_W/\delta)}
        \right) \\
        &n_\text{target} \geq \widetilde{\Omega}\left((k+\log(1/\delta))\varepsilon^{-2} \right)
    \end{align*}
\end{theorem}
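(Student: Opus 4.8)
The target-agnostic bound is the target-aware analysis of Section~\ref{sec: target-aware (appendix)} with its third stage switched off, so the plan is to follow the proof of Theorem~\ref{them: excess risk (appendix)} and discard every term involving $q_2$, $\tilde W_j$ or $W'$. Concretely, since Algorithm~\ref{algo: main-targetAgnostic} only ever collects data from the coarse distribution $q_0$ (warm-up) and the $k$-dimensional exploration distribution $q_1$, the bound decomposition of Section~\ref{sec: bound decomposition} degenerates: there is no ``target-aware exploration error'', and the excess risk reduces, via Claim~\ref{claim: ERM oracle} and the chain of inequalities in Section~\ref{sec: reduce to optimal design (appendix) }, to a single optimal-design quantity evaluated at the \emph{explicitly known} distribution $q_1$.

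First I would record the reduction: on the event $\calE_\text{offline 1}$, and once $n_\text{target}\gtrapprox(k+\log(1/\delta))\varepsilon^{-2}$ (with $\dot n_\text{target}$ meeting the warm-up bound inherited from Section~\ref{sec: warm-up(appendix)}), the same manipulations as in Section~\ref{sec: reduce to optimal design (appendix) } give
\[
  \textsc{ER}(\tilde B_X,\nu_\text{target}) \;\lessapprox\; \sigma^2\bigl(kd_X+\log(1/\delta)\bigr)\,\trace\!\Bigl(\Gamma^{-1}\,B_W\bigl(\E_{\nu_\text{target}}[w_0w_0^\top]\bigr)B_W^\top\Bigr),
\]
where $\calS=\{w:q_1(w)\neq 0\}$ and $\Gamma := B_W^\text{source}\bigl(\sum_{w\in\calS}n_w ww^\top\bigr)(B_W^\text{source})^\top$. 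Because $q_1$ puts mass $1/k$ on each of the $k$ orthonormal columns $v_i$ of $V$ (so each receives $n_1/k$ samples), $\sum_{w\in\calS}n_w ww^\top = (n_1/k)\,VV^\top$ and hence $\Gamma = (n_1/k)\,\dot B_W\dot B_W^\top$ with $\dot B_W := B_W^\text{source}VV^\top$ exactly as in Lemma~\ref{lem: helper base est}.

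The next step is to control $\Gamma^{-1}$ by the warm-up guarantee. Lemma~\ref{lem: helper base est} — whose hypothesis $8\|B_W^\text{source}-\hat B_W^\text{source}\|\,\|B_W^\text{source}\|\le\tfrac12\lambda_{\min}(B_W^\text{source}(B_W^\text{source})^\top)$ is checked under the stated $n_0$ by the identical computation already done in Section~\ref{sec: warm-up(appendix)} (the coarse stage of Algorithm~\ref{algo: main-targetAgnostic} coincides with that of Algorithm~\ref{algo: main}, via the $\sin(\hat B_X,B_X)$ bound of $\calO_\text{offline 1}^X$ and Lemma~\ref{lem: B_W est error}) — gives, with $M=I$, that $\tfrac12 B_W^\text{source}(B_W^\text{source})^\top \preceq \dot B_W\dot B_W^\top \preceq \tfrac32 B_W^\text{source}(B_W^\text{source})^\top$, so $\dot B_W\dot B_W^\top$ is invertible and $\|(\dot B_W\dot B_W^\top)^{-1}\|\le 2/\sigma_k^2(B_W^\text{source})$. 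Pulling this operator norm out of the trace yields $\trace(\Gamma^{-1}B_W\E[w_0w_0^\top]B_W^\top)\le \tfrac{2k}{n_1\,\sigma_k^2(B_W^\text{source})}\,\trace(B_W\E[w_0w_0^\top]B_W^\top)$, hence $\textsc{ER}\le\varepsilon^2$ as soon as $n_1\gtrapprox \sigma^2(kd_X+\log(1/\delta))\,k\,\trace(B_W\E[w_0w_0^\top]B_W^\top)\big/(\sigma_k^2(B_W^\text{source})\,\varepsilon^2)$, i.e.\ the first term of the claimed $n_\text{source}$ bound.

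Finally, the total source count is $n_0+n_1$ with $n_1$ set to the above threshold: the warm-up budget $n_0$ is precisely the one prescribed in Algorithm~\ref{algo: main-targetAgnostic}, scales only with $\bar{\kappa},\underline{\sigma},k,d_X,d_W$, and carries the $\log(d_W/\delta)$ from the union bound over the $d_W$ coarse tasks $\{e_t\}$ in Lemma~\ref{lem: B_W est error}; adding it to the first term produces the two-term statement, and the lower bounds on $n_\text{target}$ and $\dot n_\text{target}$ are verbatim those of Theorem~\ref{them: excess risk (appendix)}. A union bound over $\calE_\text{offline 1}$, the events of Lemma~\ref{lem: B_W est error}, and the few-shot event of Claim~\ref{claim: ERM oracle} gives the overall probability $1-\delta$. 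The only point requiring genuine verification — rather than bookkeeping — is that the warm-up precondition of Lemma~\ref{lem: helper base est} holds under the stated $n_0$ and that the Stage-2 $\sin(\hat B_X,B_X)$ error is dominated by the optimal-design term at the chosen $n_1$; both have already been established for the target-aware algorithm in Sections~\ref{sec: warm-up(appendix)} and~\ref{sec: stage2 (appendix)}, whose two exploration stages are identical to the ones here, so I do not anticipate a real obstacle.
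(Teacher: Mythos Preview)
Your proposal is correct and follows essentially the same route as the paper: reduce the excess risk to the optimal-design trace via Claim~\ref{claim: ERM oracle} and Section~\ref{sec: reduce to optimal design (appendix) }, then bound $\trace(\Gamma^{-1}B_W\E[w_0w_0^\top]B_W^\top)\le\|\square\|\,\trace(B_W\E[w_0w_0^\top]B_W^\top)$ with $\|\square\|\le k/(n_1\sigma_k^2(B_W^\text{source}))$ coming from the warm-up guarantee on $\dot B_W$ (Lemma~\ref{lem: helper base est}). The paper's own proof is in fact terser than yours---it simply points to Section~\ref{sec: bound decomposition} for the $\|\square\|$ bound---so your explicit invocation of Lemma~\ref{lem: helper base est} and the identification $\Gamma=(n_1/k)\dot B_W\dot B_W^\top$ only make the argument more transparent.
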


\begin{proof}
    Again from Section~\ref{sec: reduce to optimal design (appendix) }, we have w.h.p at least $1-\delta$
    \begin{align*}
        \textsc{ER}(\hat{B}_X, \nu_\text{target})
        &\lessapprox \sigma^2 n_\text{target} \left(kd_X + \log(1/\delta) \right) \trace\left(\left( (B_W^\text{source}) \left(\sum_{w \in \calS} n_w w w^\top \right) (B_W^\text{source})^\top \right)^{-1} B_W \left(\E_{\nu_\text{target} } w w^T \right) B_W^\top \right)\\
            &\quad +\frac{k+ \log(1/\delta)}{n_\text{target}}
    \end{align*}
    then by using similar steps in Section~\ref{sec: bound decomposition}, we have
    \begin{align*}
        &  \trace\left(\left( (B_W^\text{source}) \left(\sum_{w \in \calS} n_w w w^\top \right) (B_W^\text{source})^\top \right)^{-1} B_W \left(\E_{\nu_\text{target} } w w^T \right) B_W^\top \right)\\
        & \leq \|\square\| \trace(B_W \E[w_0 w_0^\top] B_W^\top) \\
        & \leq \frac{k}{n_1 \sigma_k^2(B_W^\text{source})}\trace(B_W \E[w_0 w_0^\top] B_W^\top)
    \end{align*}
    and therefore, 
    \begin{align*}
        \textsc{ER}(\hat{B}_X, \nu_\text{target})
        \leq \otil\left(\frac{k^2d_X}{n_1 \sigma_k^2(B_W^\text{source})}\trace(B_W \E[w_0 w_0^\top] B_W^\top)\right)
            + \otil\left(\frac{k+ \log(1/\delta)}{n_\text{target}}\right)
    \end{align*}
    Rearranging the inequality gives the final bound.
\end{proof}

\subsection{Compare to previous passive learning and the target-aware one}

Again we want to compare this result with the previous one.
\paragraph{Comparison with passive learning.} 
We first consider the cases in their paper that the target task is uniformly spread $\|\E_{w_0 \sim \nu_\text{target}}B_W w_0 w_0^\top B_W^\top \| = \frac{1}{k}$. (See detailed setting in Section~\ref{sec: theory})
\begin{itemize}[leftmargin=*]
    \item When the task representation is well-conditioned $\sigma_{\min}^2(B_W^\text{source}) = \frac{d_W}{k}$. We have a passive one as $ \otil(k d_X \varepsilon^{-2})$ while the target-agnostic active one $\otil(kd_X \frac{k^2}{d_W} \varepsilon^{-2})$. 
    \item Otherwise, we consider the extreme case that  $\sigma_{\min}^2(B_W^\text{source}) = 1$. We have passive one  $\otil(d_X d_W\varepsilon^{-2})$ while the target-agnostic active one $\otil(k^2 d_X \varepsilon^{-2})$. Note this is better than the  $\otil(k^3 d_X \varepsilon^{-2})$ in the target-aware case.
\end{itemize}
These two results indicate that when the targets are uniformly spread, target-agnostic AL can perform even better than the target-aware. But we want to emphasize that whether it is uniformly spread or not is unknown to the learner. Even $\kappa\left(\E_{w_0 \sim \nu_\text{target}}[w_0 w_0^\top]\right) = 1$ can leads to ill-conditioned $B_W\E_{w_0 \sim \nu_\text{target}}[w_0 w_0^\top]B_W^\top$. 

We then consider the single target $w_0$ case.
\begin{itemize}[leftmargin=*]
    \item With well-conditioned $B_W$, the passive one now has sample complexity $\order(k^2 d_X \varepsilon^{-2})$ while the active gives a strictly improvement $\order(\frac{k^3 d_X}{d_W}  \varepsilon^{-2})$.
    \item With ill-conditioned $B_W$ where $\sigma_{\min}(B_W) = 1$ and $\max_i \|W_i^*\| = 1$, that is, only a particular direction in source space contributes to the target. The Passive one now has sample complexity $\order(k d_X d_W \varepsilon^{-2})$ while our target-agnostic active one has $k^2d_X \varepsilon^{-2}$.
\end{itemize}
These two results indicate that the target-agnostic approach gives a worse bound when the targets are not well-spread, which meets our intuition since the target-agnostic tends to learn uniformly well over all the levels. But it can still perform better than the passive one under the discrete case, which again indicates the necessity of considering the continuous setting.

\paragraph{Save task number.} Again when ignoring the short-term  initial warm-up stage, we only require maintaining $\otil(k)$ number of source tasks.

\section{Limitations from the theoretical perspective}

Here we list some open problems from the theoretic perspective. We first list some room for improvements under the current setting
\begin{itemize}
    \item \textbf{Not adaptive to noise $\sigma$: } From Section~\ref{sec: training oracle 1 (appendix)}, we get $\sin(\hat{B}_X, B_X)$ scales with the noise $\sigma$, which suggests less sample number $n_0,n_1$ is requires to get a proper estimation of $B_X$. In our algorithm, however, we directly treat $\sigma = \Theta(1)$ and therefore may result in unnecessary exploration.
    \item \text{Bound dependence on $\min\{\kappa^2(B_X^\text{source}),k\}$: } This extra dependence comes from the instability (or non-uniqueness) of eigendecomposition. For example, when $\E_{\nu_\text{target}}[B_W w_0 w_0^\top B_W^\top] = \frac{1}{k}I_d$, there are infinite number of eigenvector sets. On the other hand, given a fixed $B_W^\text{source}$, current methods of obtaining $W'$ are highly sensitive to the eigenvector sets from the target. A direct method is of course constructing a confidence bound around the estimated $\hat{B}_W^\text{target}$ and finding the best $W'$ under such set. But this method is inefficient. Whether there exists some efficient method, like a regularized optimization, remains to be explored in the future.
    \item \textbf{Require prior knowledge of $\Bar{\kappa}, \underline{\sigma}$: } Finally, can we estimate and use those parameters during the training remains to be open?
\end{itemize}

Besides that specific problem, it is always meaningful to extend this setting into more complicated geometries and non-linear/non-realizable models. Specifically,
\begin{itemize}
    \item \textbf{More complicated geometry.} One open problem is to get guarantees when $\calW_\text{source}, \calW_\text{target}$ is no longer a unit ball. (e.g., eclipse). Another problem is, instead of considering the geometry of $\calW$, we should consider the geometry of $\psi_W(\calW)$. 
    \item \textbf{Nonlinear models.} Consider nonlinear $\phi_X, \phi_W$ is always challenging. In \cite{du2021fewshot, tripuraneni2020theory}, they provide some guarantees under the passive by using kernel methods or considering a general model. Can we extend this to the active setting?
    \item \textbf{Non-realizable model.} Like many representation learning papers, we assume the existence of a shared representation, which suggests more source tasks always help. In practice, however, such representation may not exist or is more over-complicated than the candidate models we assume. Under such a misspecification setting, choosing more tasks may lead to negative transfer as shown in Figure~\ref{fig: pendulum 1} in the experiments. Can we get any theoretical guarantees under such a non-realizable setting?
\end{itemize}

\section{Experiment details}
\label{sec: experiments (appendix)}

Here we provide detailed settings of three experiments -- synthetic data, pendulum simulator, and the real-world drone dataset, as well as more experimental results as supplementary. All the experiments follow a general framework proposed in Section~\ref{sec: framework} with different implementation approaches according to different settings, which we will specify in each section below. Note that in all these experiments, we only focus on a single target. 

\subsection{Synthetic data}
\label{sec: synthetic data (appendix)}

\subsubsection{Settings}
\begin{table}[ht]
\begin{center}
\begin{tabular}{|l|l|l|l|}
\hline
                     & bilinear     & nonlinear $\psi_X$ & nonlinear $\phi_X$ \\ \hline
target number        & 800, 8000    & 800, 8000          & 800, 8000               \\ \hline
$d_X$                & 200          & 10                 & 20                 \\ \hline
$d_{\psi_X}$         & 200          & 200                & 20                 \\ \hline
$d_W$                & 80           & 80                 & 80                 \\ \hline
$k$                  & 4            & 4                  & 4                  \\ \hline
$\phi$ structure     & random matrix     & random matrix           & MLP with layers {[}20, 20, 4{]}    \\ \hline
inputs distribution  & $\calN(0,I)$ & $\calN(0,I)$       & $\calN(0,I)$       \\ \hline
label noise variance & 1            & 1                  & 1                  \\ \hline
\end{tabular}
\end{center}
\caption{Model used to generate the synthetic data.}
\label{table: Synth data generation}
\end{table}
\vspace{-15px}
\paragraph{Data generation} We show the model and corresponding parameters used to generate the synthetic data in Table.~\ref{table: Synth data generation}. Some additional details include, 1) When generating random matrix $B_X$ for bi-linear and unknown non-linear $\psi_X$, we tried different seeds (denoted as \textit{embed\_matrix\_seed} in the codes) and deliberately make the matrix ill-conditioned (so $\kappa(B_W)$ is large). Because most of them behave similarly so we only present partial results here. 2) When generating random MLP for nonlinear $\phi_X$, we only use the unbiased linear layer and ReLU layers.

In the main paper Table~\ref{table: synthetic}, we use target number = 8000 cases to show more contrast.

The nonlinear Fourier feature kernel $\psi_X$ is defined as 
$\psi_X(x) = \cos(A x + B)$, where $ A \in \fR^{d_{\psi_X} \times d_X}, B \in \fR^{d_{\psi_X}} $ and each entry of $A,B$ is i.i.d. Gaussian.

\paragraph{Training models and optimizer} Here we state the details of the model used during the learning, which might be different from the model used to generate the data. Specifically, for the bi-linear and unknown non-linear $\psi_X$, we use the exact $\fR^{d_{\psi_X} \times k}$ matrix structure as stated in the theorem. For the  nonlinear $\phi_X$, we use a slightly larger MLP with layers {[}20, 20, 20, 4{]} compared to the model used to generate the data to further test the adaptivity of our algorithm since the exact underlying structure of MLP is usually unknown in reality. As for the joint training approach, we use Adam with $lr=0.1$ for the bi-linear and unknown non-linear $\psi_X$, and SGD with $lr=0.1$ for nonlinear $\phi_X$ as the optimizer (The learning rate is large because this is an easy-to-learn synthetic data) We mixed all the target and source data and do joint GD-based methods on them. Notice that the goal for those experiments is not to achieve the SOTA but to have a fair comparison. So all those hyper-parameters are reasonable but not carefully fine-tuned.

\paragraph{Detailed implementation for AL strategy} Both the input space $\calX$ and the task space $\calW$ of synthetic data lie perfectly in a ball and the underlying model is linear in terms of $w$. Therefore, we can use the almost similar algorithms as proposed in Algo~\ref{algo: main} for target-aware and Algo.~\ref{algo: main-targetAgnostic}. We slightly adjust parameter dependence on $d_X, d_W, k$ but the general scaling between different stages in each epoch remains the same. Another difference is that, instead of using the MLLAM as specified in Section~\ref{sec: training oracle 1 (appendix)}, we do a joint-GD since the implementation of MLLAM in a non-idealistic setting (nonlinear $\phi_X, \psi_X$ is unclear and challenging.)

\paragraph{Metrics}
We consider the worst-case distance between ground truth and estimate columns space $U, \hat{U}$ as 
$
    \text{dis}(U, \hat{U}) = \min_{u} \|u_i^\top \hat{U}\|_2.
$
Such distance will be used in both computing the similarity between ground truth and estimated input space $B_W, \hat{B}_W$. In addition, it will also be used in measuring the change of $q_2$ across each epoch so we can save task numbers by maintaining the same $q_2$ as long as the change is small, which we will specify in the next paragraph.

\paragraph{Saving task number approach.} In addition to the comparison between target-agnostic AL, target-aware AL, and the passive, we also consider the saveTask case, where we reduce the number of times recomputing the $q_1$. Specifically, we denote $W_{j-1},W_{j} \in \fR^{d_W times k}$ as the exploration source tasks in the previous and current epoch. And only switch to the new target-agnostic exploration set when 
$
    \text{dis}(\text{rowSpace}(B_{j-1}), \text{rowSpace}(B_{j})) \leq 0.8 
$
where $0.8$ is some heuristic threshold parameter.

\subsubsection{Results}
\begin{figure}[H]
    \centering
    \includegraphics[scale = 0.42]{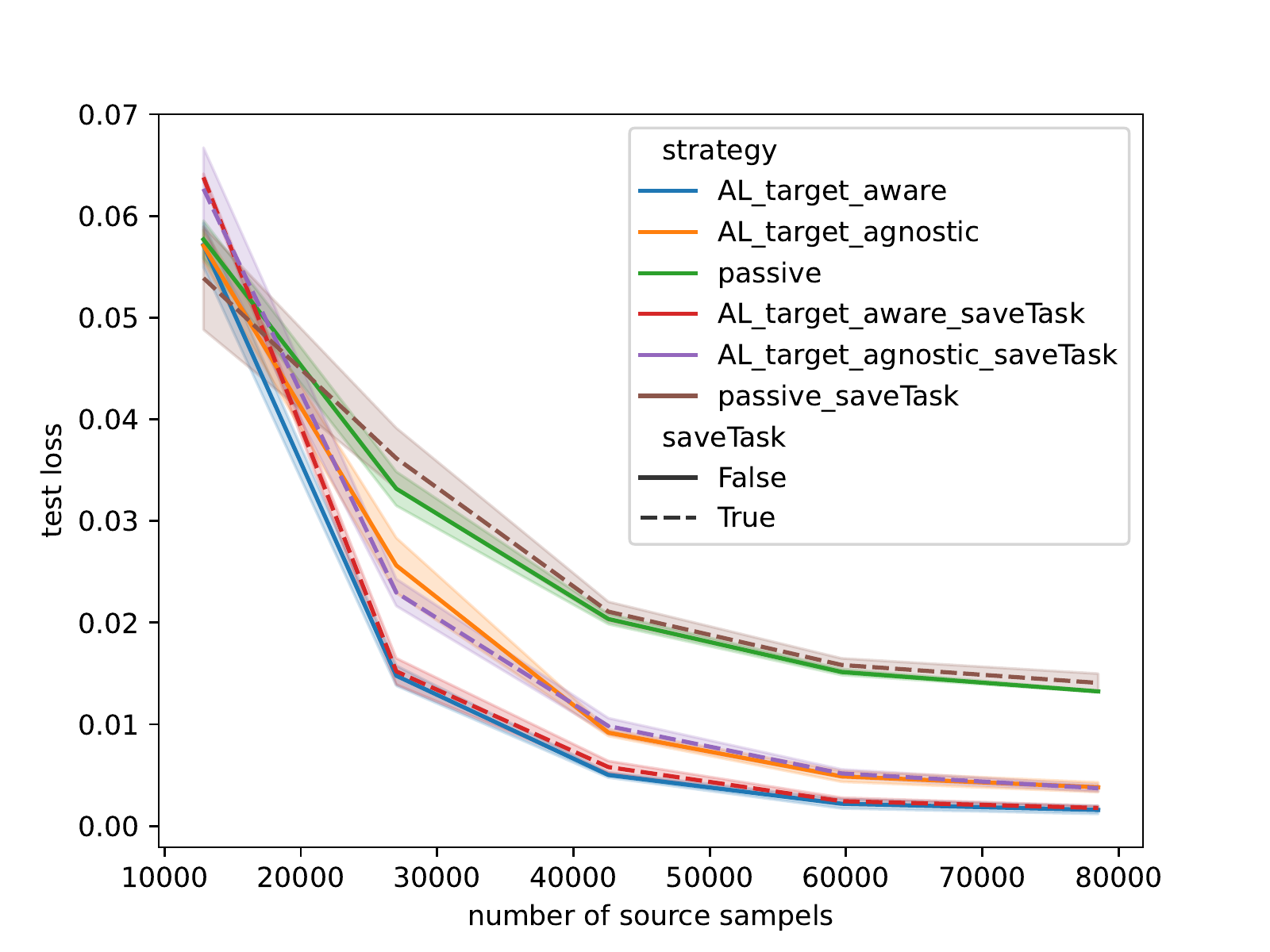}
    \includegraphics[scale = 0.42]{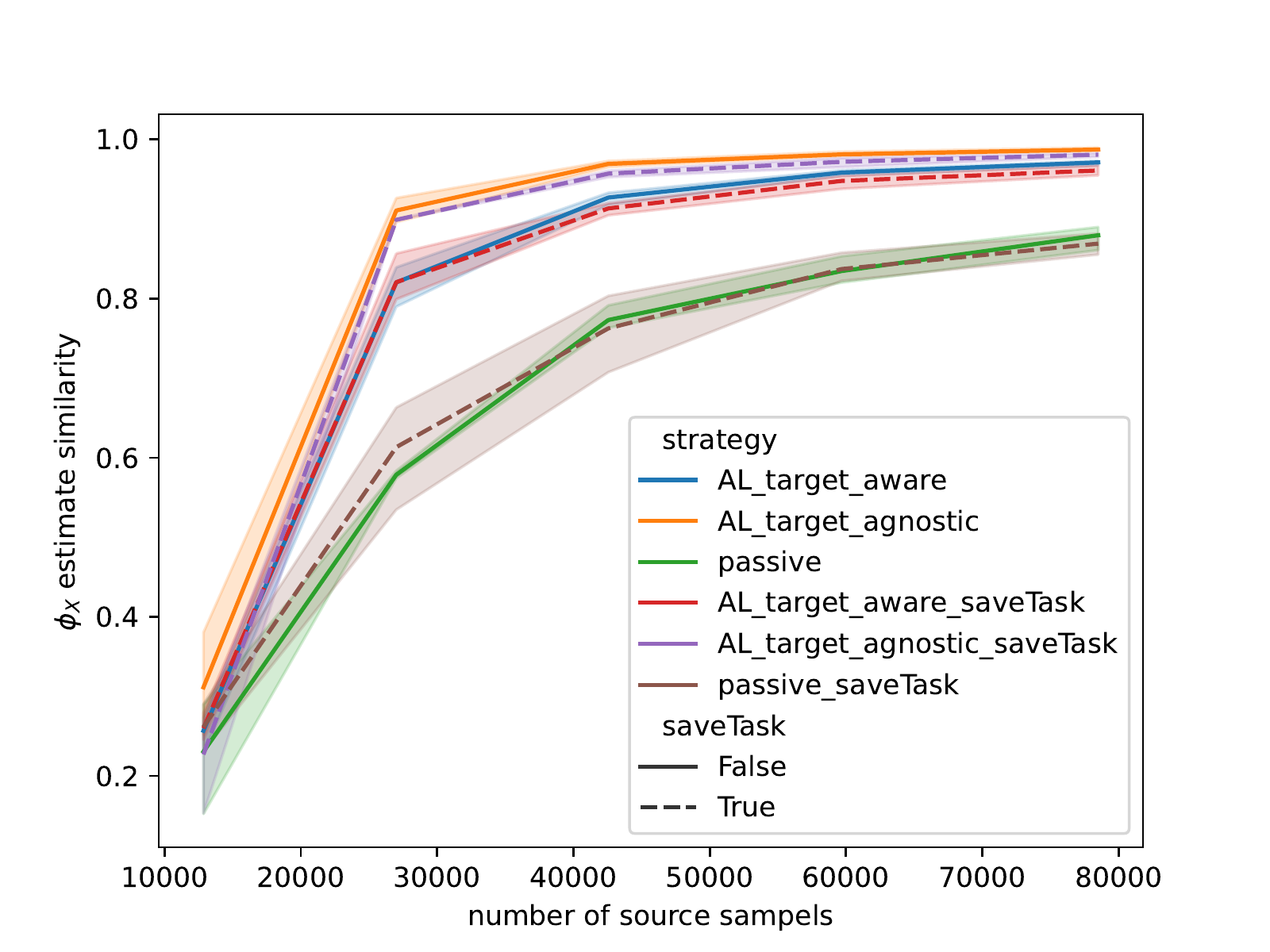}
    \includegraphics[scale = 0.42]{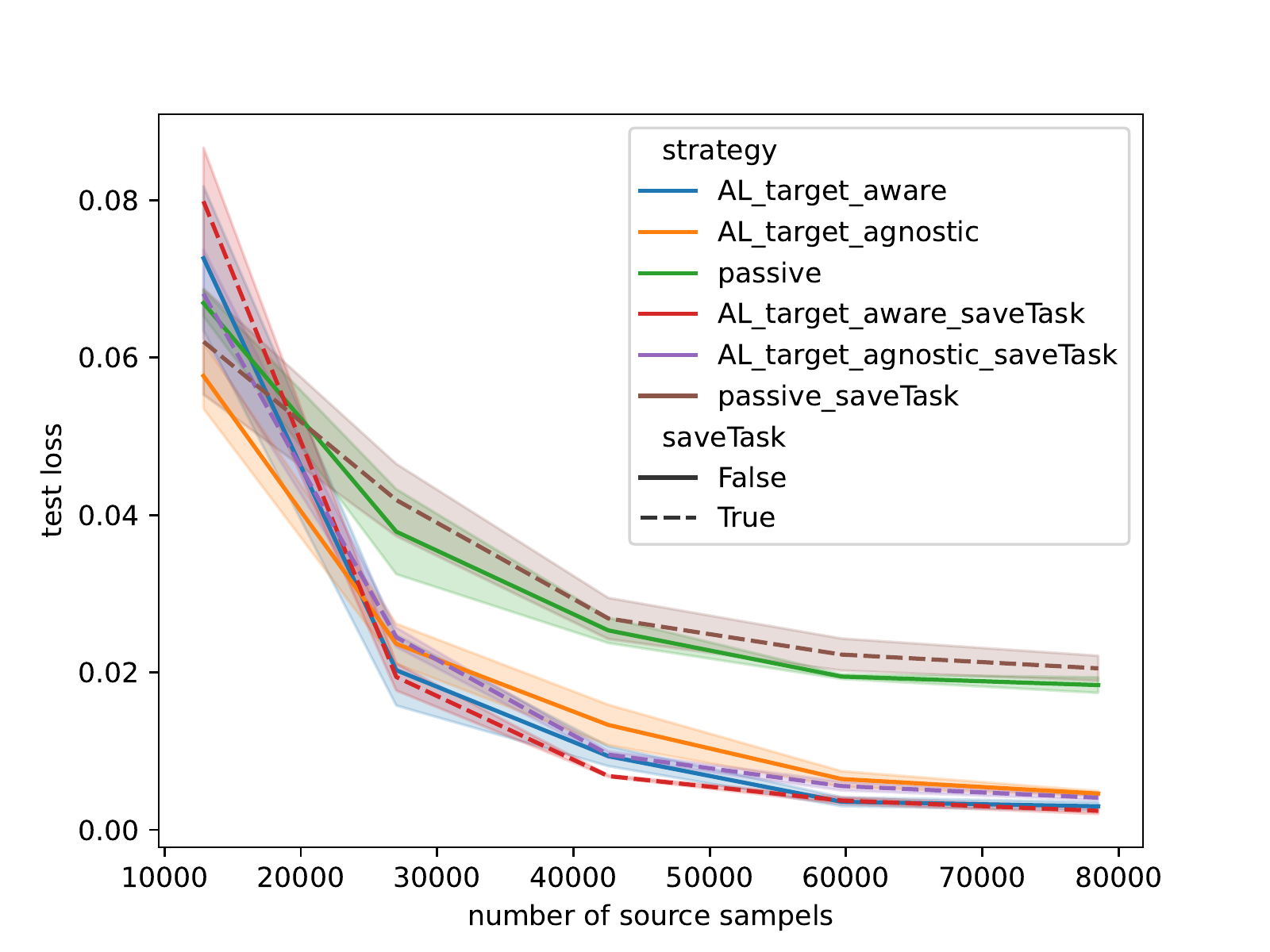}
    \includegraphics[scale = 0.42]{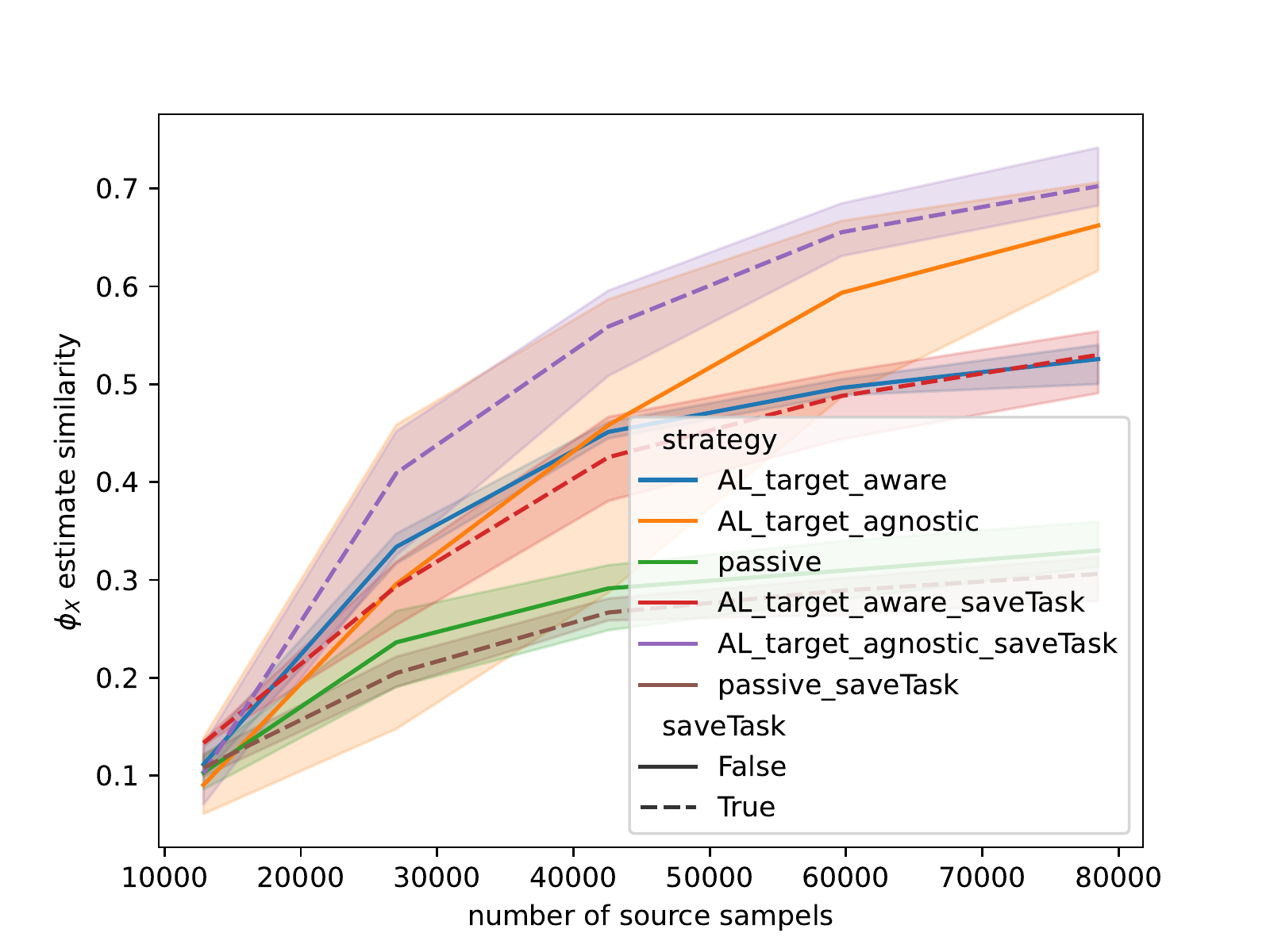}
    \includegraphics[scale = 0.42]{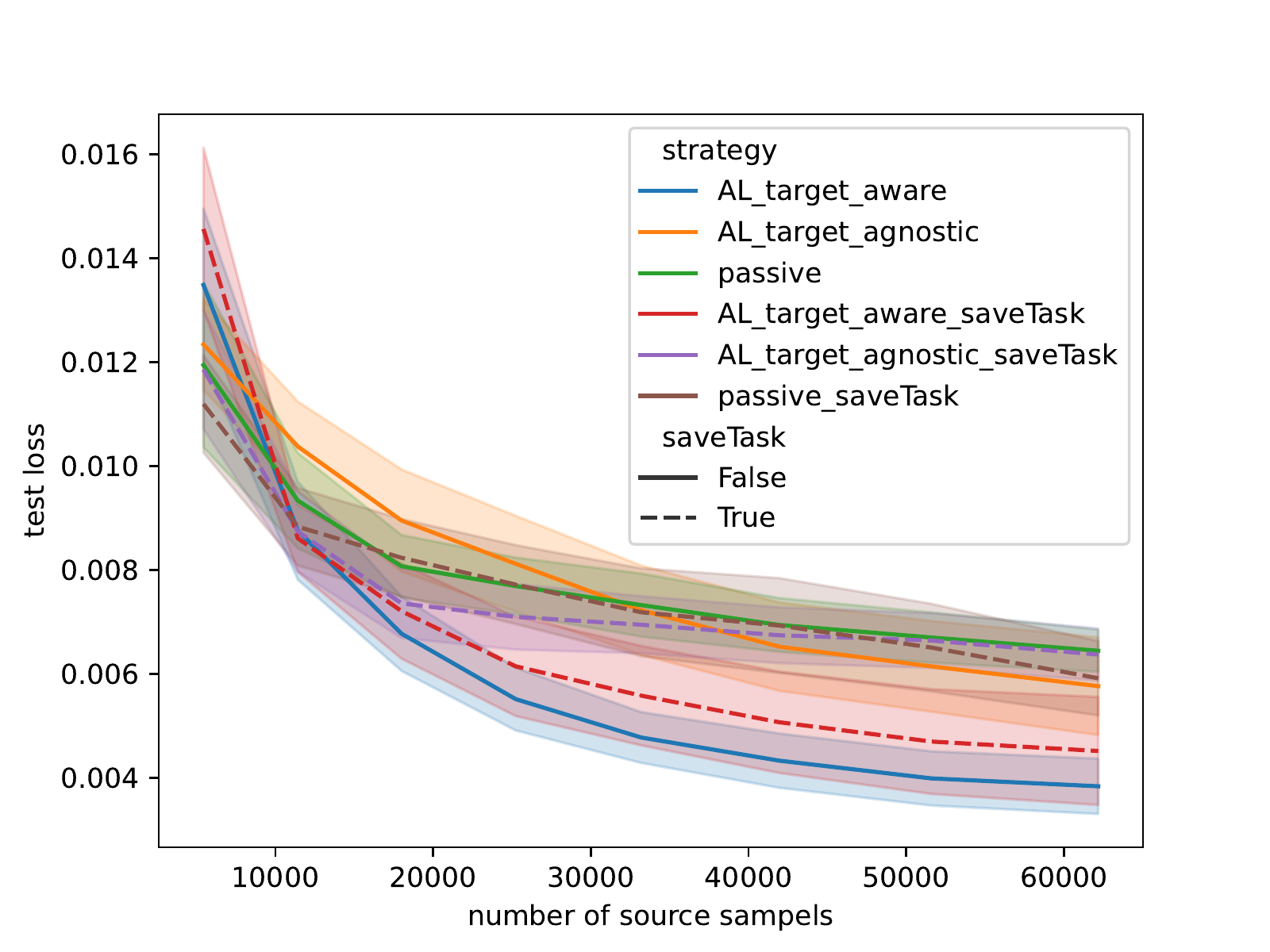}
    \caption{
    \textbf{Results on synthetic data with 8000 target sample} Left side presents the test loss and the right side presents the similarity between the column space of the ground truth $\phi_X$ and the estimated $\hat{\phi}_X$. Notice that how to measure the similarity on neural networks is unclear so we skip this result.
    \textbf{Top and middle:} Results of the nonlinear kernel. The target-aware AL gets the lowest test loss while the passive gets the highest. In terms of saveTask, we notice that reducing task switch number does not affect the performance a lot. From the left figure, the target-agnostic AL gets the best estimation which aligns with our design intuition that target-agnostic AL should have a universal good estimation in all directions. It is a little surprising to us that the passive one performs worst. We conjecture the reason that the GD-based oracle is not that good for joint-task training and should again have better performance when using \cite{thekumparampil2021sample, tripuraneni2020theory}. \textbf{Bottom: } Result of non-linear representation. Here we notice that the saving task strategy leads to slightly worse performance. While the target-aware AL still gives the worst test loss, the difference between passive and target-agnostic AL is small due to the complexity of the shallow net. 
    } 
    \label{fig:synthetic 1}
\end{figure}

\begin{figure}[H]
    \centering
    \includegraphics[scale = 0.42]{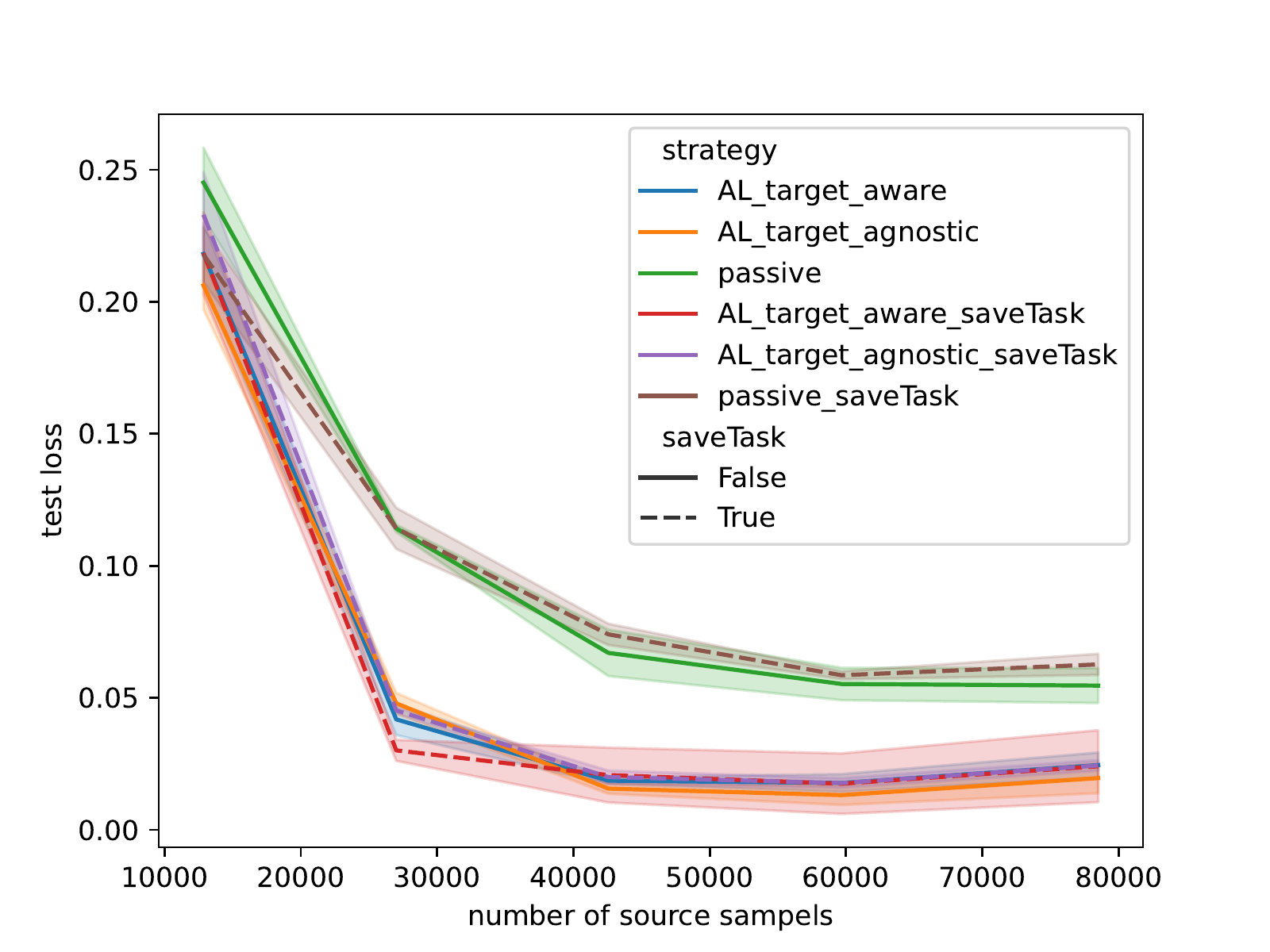}
    \includegraphics[scale = 0.42]{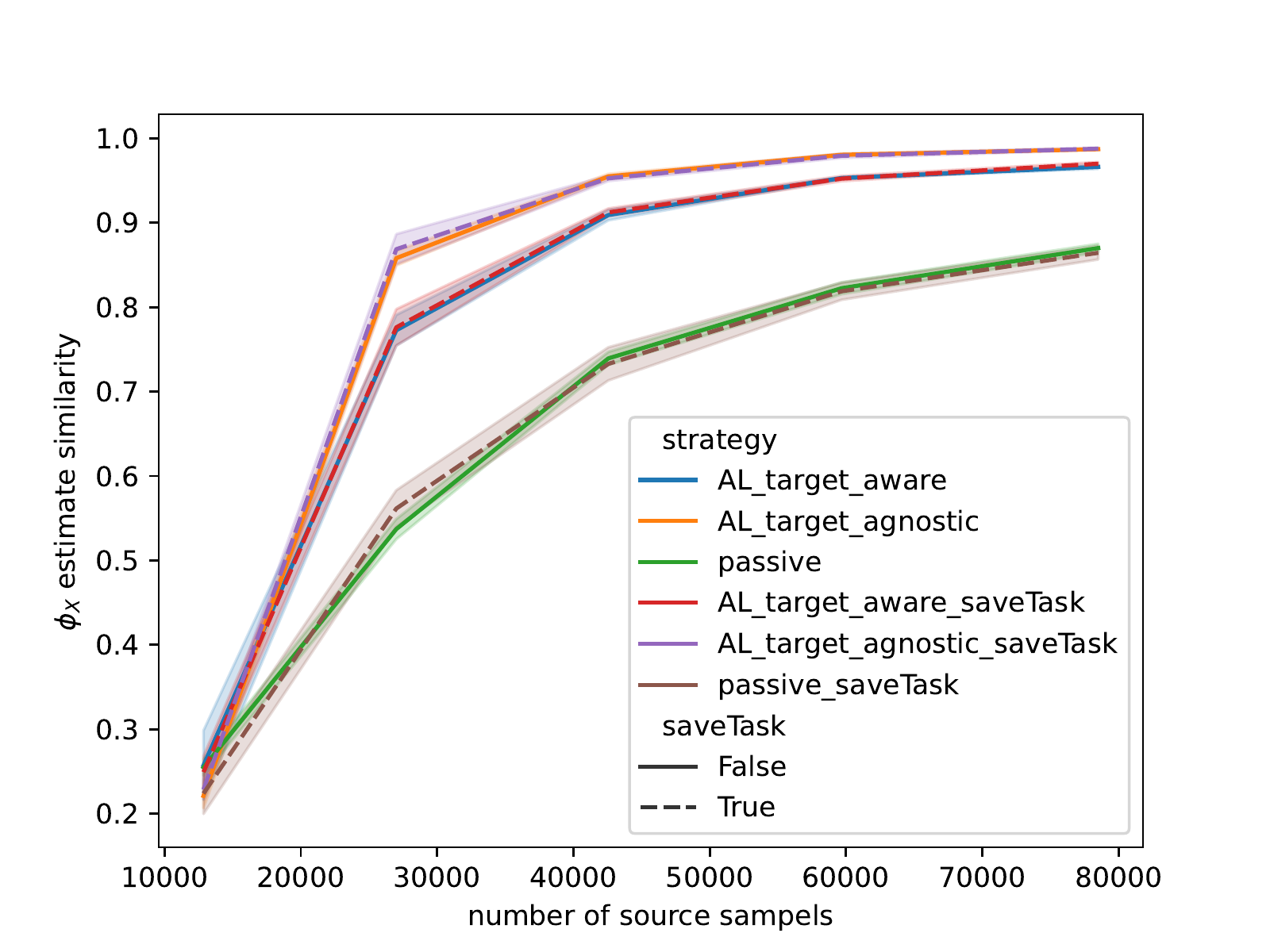}
    \includegraphics[scale = 0.42]{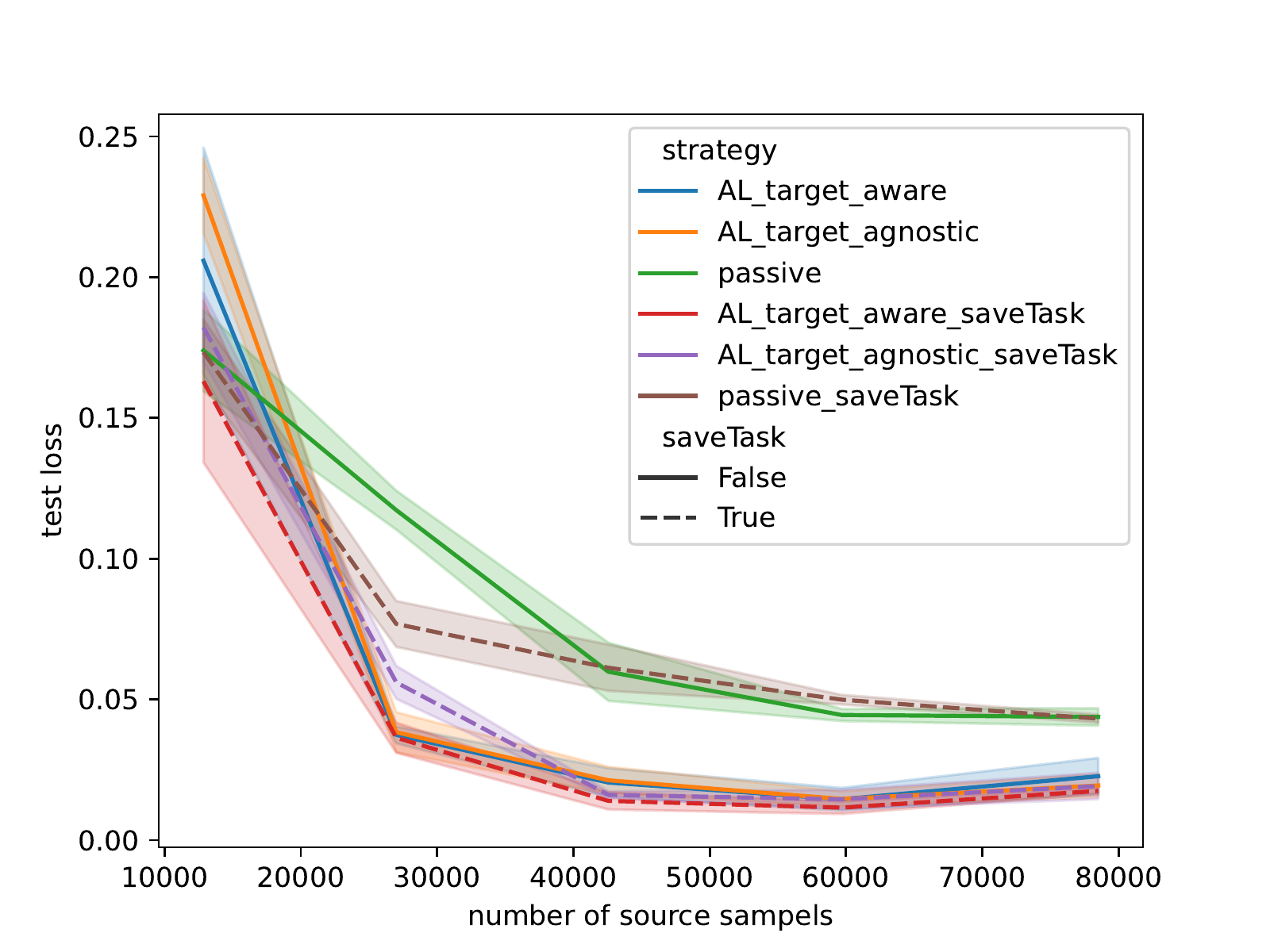}
    \includegraphics[scale = 0.42]{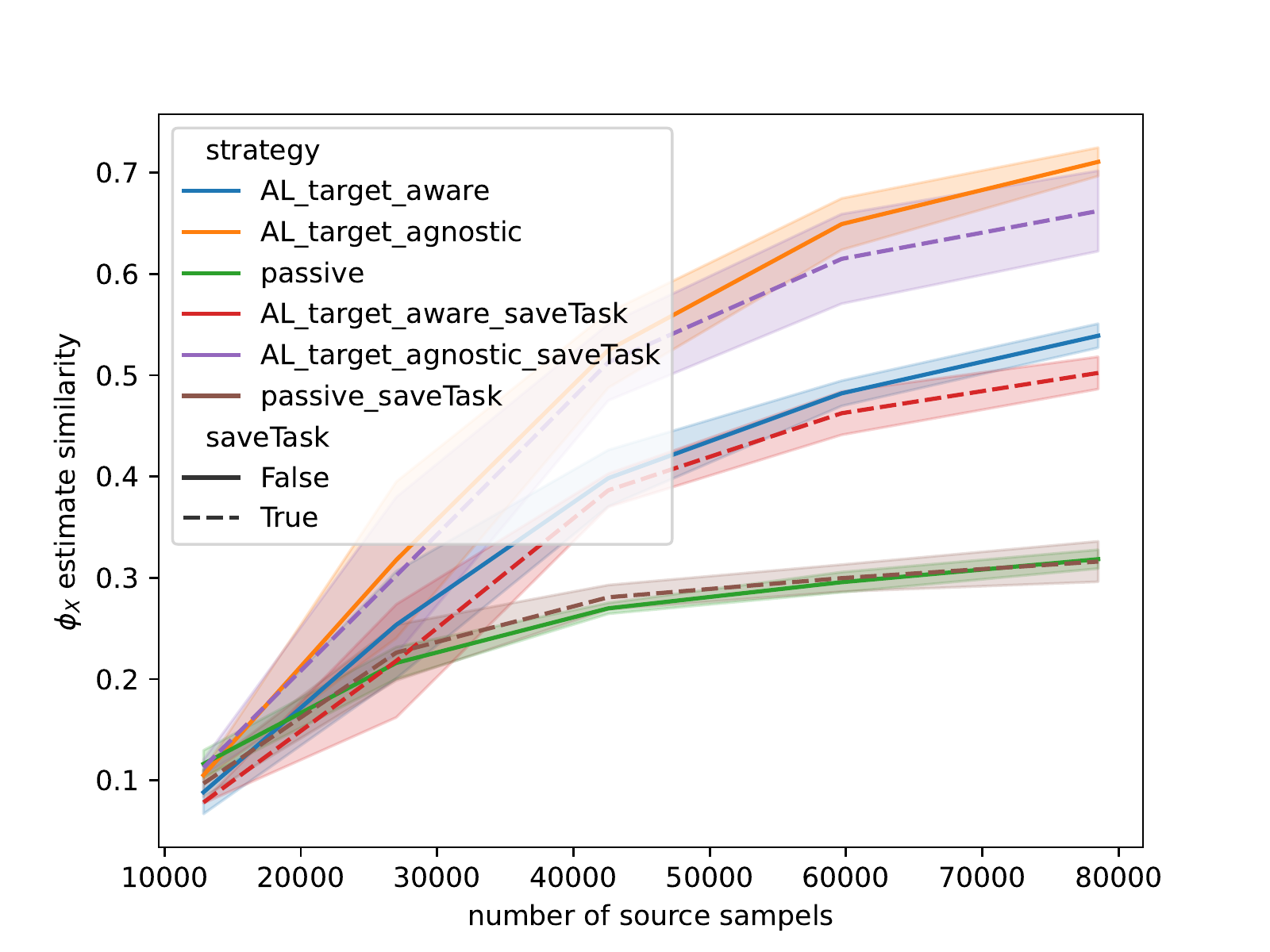}
    \includegraphics[scale = 0.65]{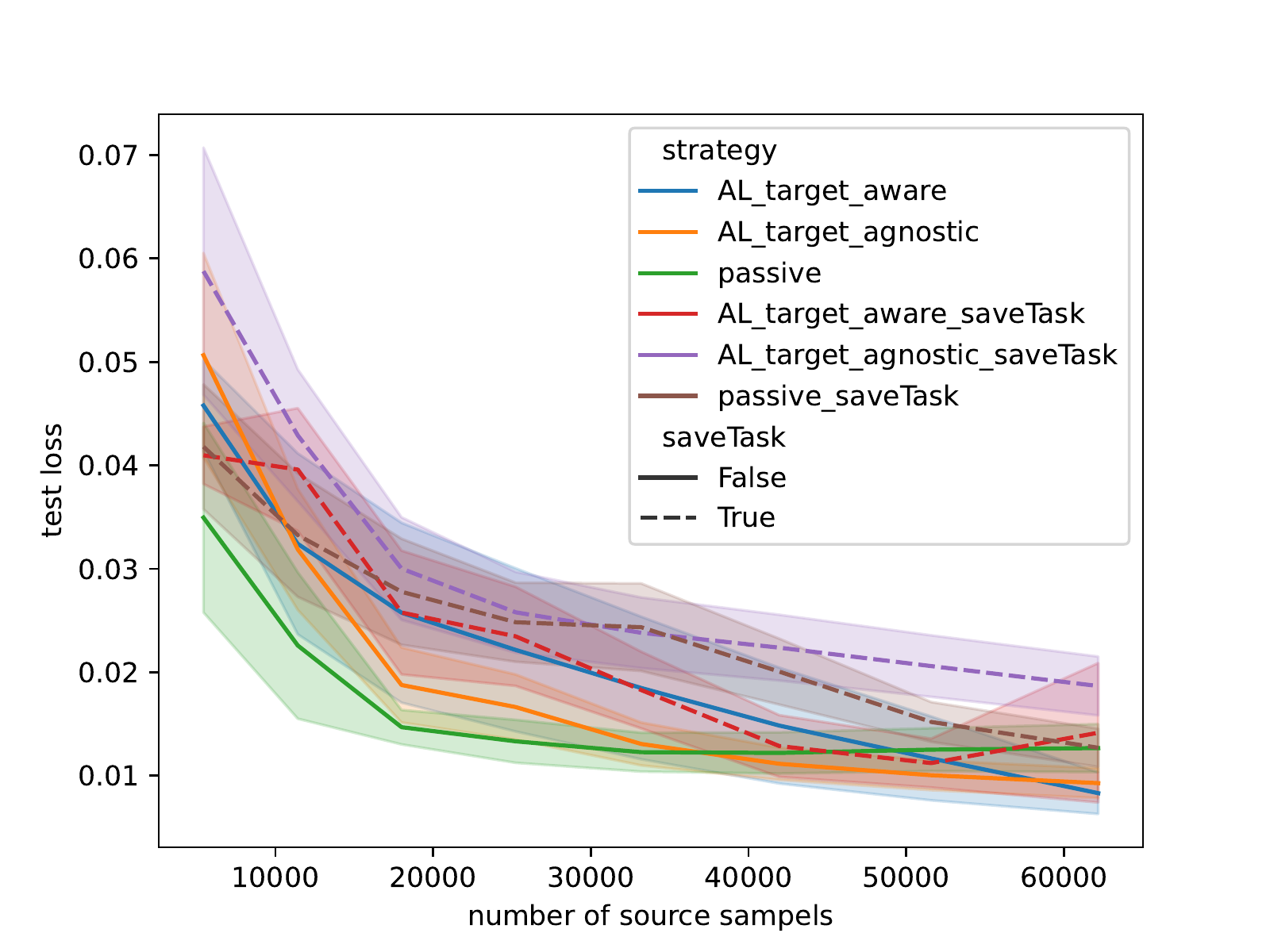}
    \caption{
    \textbf{Results on synthetic data with 800 target sample} \textbf{Top and middle:} The bilinear and nonlinear $\psi_X$ case gives a similar performance as before. \textbf{Bottom: } For $\phi_W$ as a neural net, we notice here the AL does not show an advantage until the very end where the passive stops decreasing. This may suggest for nonlinear representation, more target data may be needed for a beneficial source selection compared to the bilinear $\phi$.
    } 
    \label{fig:synthetic 2}
\end{figure}

\newpage
\subsection{Pendulum simulator}
\label{sec: pendulum (appendix)}

\subsubsection{Settings}
\paragraph{Data generation} 
We consider the following continuous-time pendulum dynamics model adopted from~\cite{shi2021meta}:
\begin{equation*}
    m l^2 \ddot{\theta}-m l \hat{g} \sin \theta=u+f(\theta, \dot{\theta}, w)
\end{equation*}
where $\theta,\dot{\theta},\ddot{\theta},u$ are angle, angular velocity, angular acceleration, and control, $m,l,\hat{g}$ are mass, pole length, and the gravity estimation, and finally, $f$ is the unknown residual dynamics term to be learned with $w$ the environment parameter. The ground truth $f$ is given by
\begin{align*}
    & F = \|R\|_2^2 \cdot R, R=c-\left[\begin{array}{c}
        l \dot{\theta} \cos \theta \\
        -l \dot{\theta} \sin \theta
        \end{array}\right] \\
    f(\theta, \dot{\theta}, w)
    & =\underbrace{\vec{l} \times F}_{\text {air drag }}-\underbrace{\alpha_1 \dot{\theta} - \alpha_2\dot{\theta}|\dot{\theta}|}_{\text {damping }}+\underbrace{m l(g-\hat{g}) \sin \theta}_{\text {gravity mismatch }} \\
    & w = [c_x, c_y, \alpha_1,  \alpha_2, \hat{g}, 0 \text{ or } 1]
\end{align*}
where $c=[c_x,c_y]$ is external wind, $\alpha_1,\alpha_2$ are damping coefficients and $g$ is the true gravity.


We let $x=[\theta,\dot{\theta}]$ denote the input to $f$. Notice here the last element of $w$ is a dummy feature. For the source tasks, we always have $w[6] = 0$ since all the source parameters are known. For the single target task, we have $w_\text{actual\_target}$ to generate the data, so $w_\text{actual\_target}[6] = 0$. But the learner only observes the $w_\text{target}=[0,0,0,0,0,1]$, which indicates the unknown environment of the target. In the simulator, we collect data using a stochastic policy to approximate i.i.d. data distribution. 

It is easy to see that $f$ is highly nonlinear regarding $x,w$. Therefore we use the known nonlinear feature operator $\psi$ to make it close to the linear model with some misspecification: 
\begin{align*}
    & \psi_X \text{ is the Fourier feature kernel which has been defined in the synthetic data section}\\
    & \psi_W(w) = [l_x, l_y, g, \alpha_1, \alpha_2, CxCy, Cx^2, Cx^2C_y, C_x^3, Cy^2, Cy^2C_x, C_y^3, 0 \text{ or } 1]
\end{align*}

Other common parameters are specified in Table.~\ref{table: pendulum model}.
\begin{table}[ht]
\begin{center}
\begin{tabular}{|l|l|l|l|l|l|l|l|l|}
\hline
target number & $d_X$ & $d_{\psi_X}$ & $d_W$ & $d_{\psi_W}$ & $k$ & $\phi$ structure & inputs distribution & label noise variance \\ \hline
4000          & 2     & 60           & 13    & 6            & 8   & bilinear         & (See details above) & 0.5                  \\ \hline
\end{tabular}
\end{center}
\caption{Model parameters for pendulum simulator.}
\label{table: pendulum model}
\end{table}

\paragraph{Training models and optimizer} We again use the bilinear model. For the training methods, we first do joint-GD as before using AdamW with $lr = 0.01, wd=0.05, \text{batch\_size} = 512$. Then after joint training, we freeze the $\phi_X$ parts and only trained on the targets to get the non-shared embed $\phi_W(w_\text{target})$. Another modification is that, since we are in the misspecification setting, using data collected in stage 3 might amplify the errors when estimating the target-related source. To tackle this negative transfer learning, we only use the data collect from stage 2 in previous the epochs to compute $q_3$. While in the synthetic data, all data, including one from stage 3, collected in previous epochs can be used. 

\paragraph{Detailed implementation for AL strategy} The input space $\calX$ and task space $\calW$ of this pendulum data again lie perfectly in a ball after some normalization. Nevertheless, the underlying model is no longer linear in terms of $w$, which adds some extra difficulties to the optimal design on $w$. Here we use the adaptive sampling methods mentioned in the main paper. That is, we will iteratively sample from $\calW_\text{source}$ and find the ones that minimize follows.
\begin{align*}
    \min_{\{w_i\} \in \calW_\text{source}} \|\hat{B}_{W,j}^\text{source} \psi_W(w_i)  - u_i\sqrt{\lambda_i}\|
\end{align*}
where $ u_i\sqrt{\lambda_i}$ is defined in line~\ref{line: eigdecompo}. Other parts of the algorithm can still be implemented as in the synthetic data section.

\paragraph{Using learned $f$ for control} To show that a better dynamics model can transfer to better control performance, we deploy the following nonlinear controller $\pi(x,\hat{f})$ as a function of $\hat{f}$ (prediction result of $f$ in the target task): 
\begin{equation*}
    u = -ml\hat{g}\sin\theta - \hat{f}(\theta,\dot{\theta}) - ml^2(K_P\theta + K_D\dot{\theta})
\end{equation*}
Here we focus on the regulation task, i.e., $\|x\| \rightarrow 0$. It is worth noting that the above controller is guaranteed to be exponentially stable: $\|x\| \rightarrow \eta$ exponentially fast, where $\eta$ is an error ball whose size is proportional to $\|f-\hat{f}\|_\infty$.

\subsubsection{Results}
In the main paper, we use the unobservable actual target as $[0, 0, 1, 0.5, 0, 0]$. Here we give more results in Figure.~\ref{fig: pendulum 1}

\begin{figure}[ht]
    \centering
    \includegraphics[scale = 0.37]{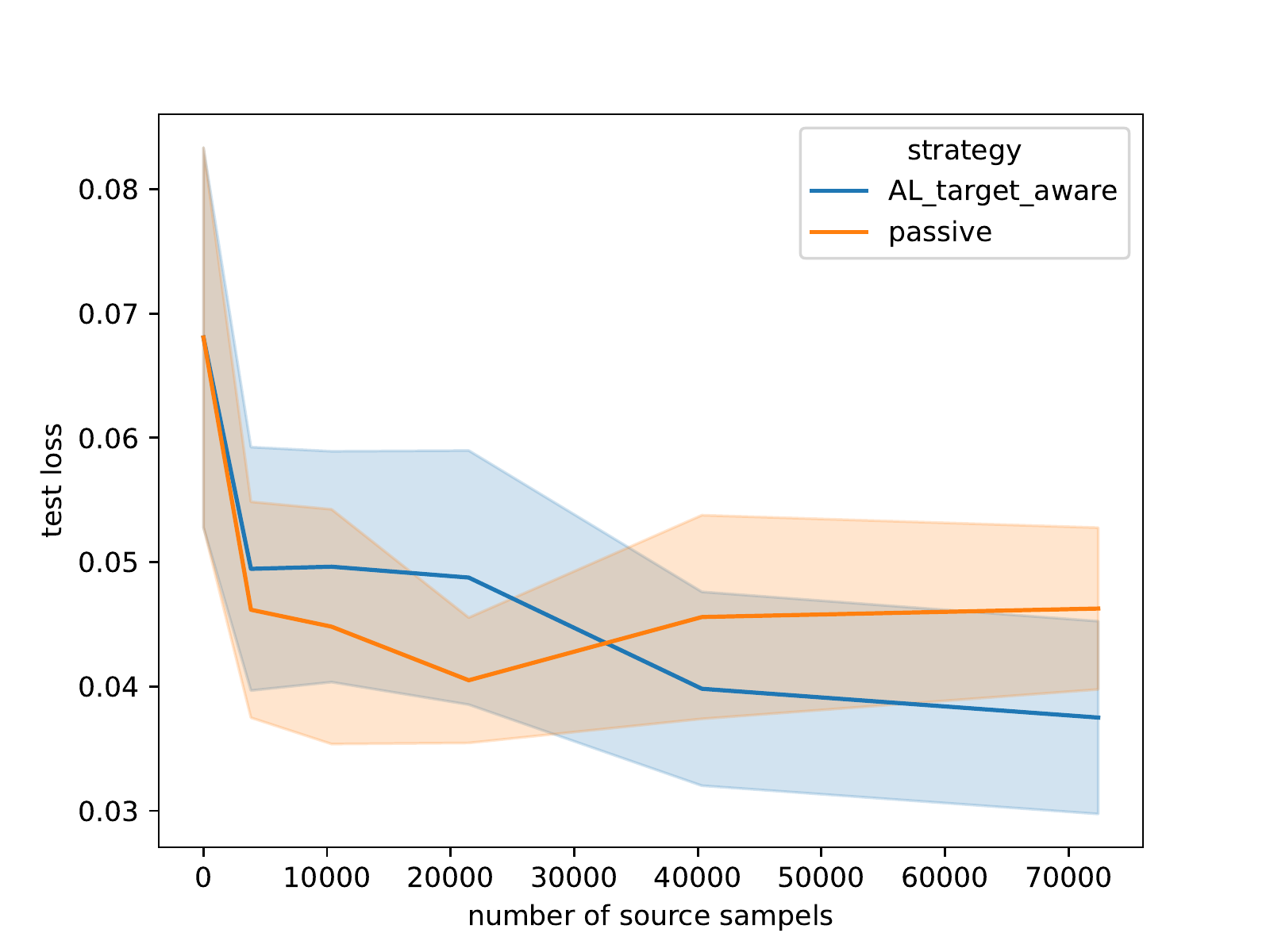}
    \includegraphics[scale = 0.37]{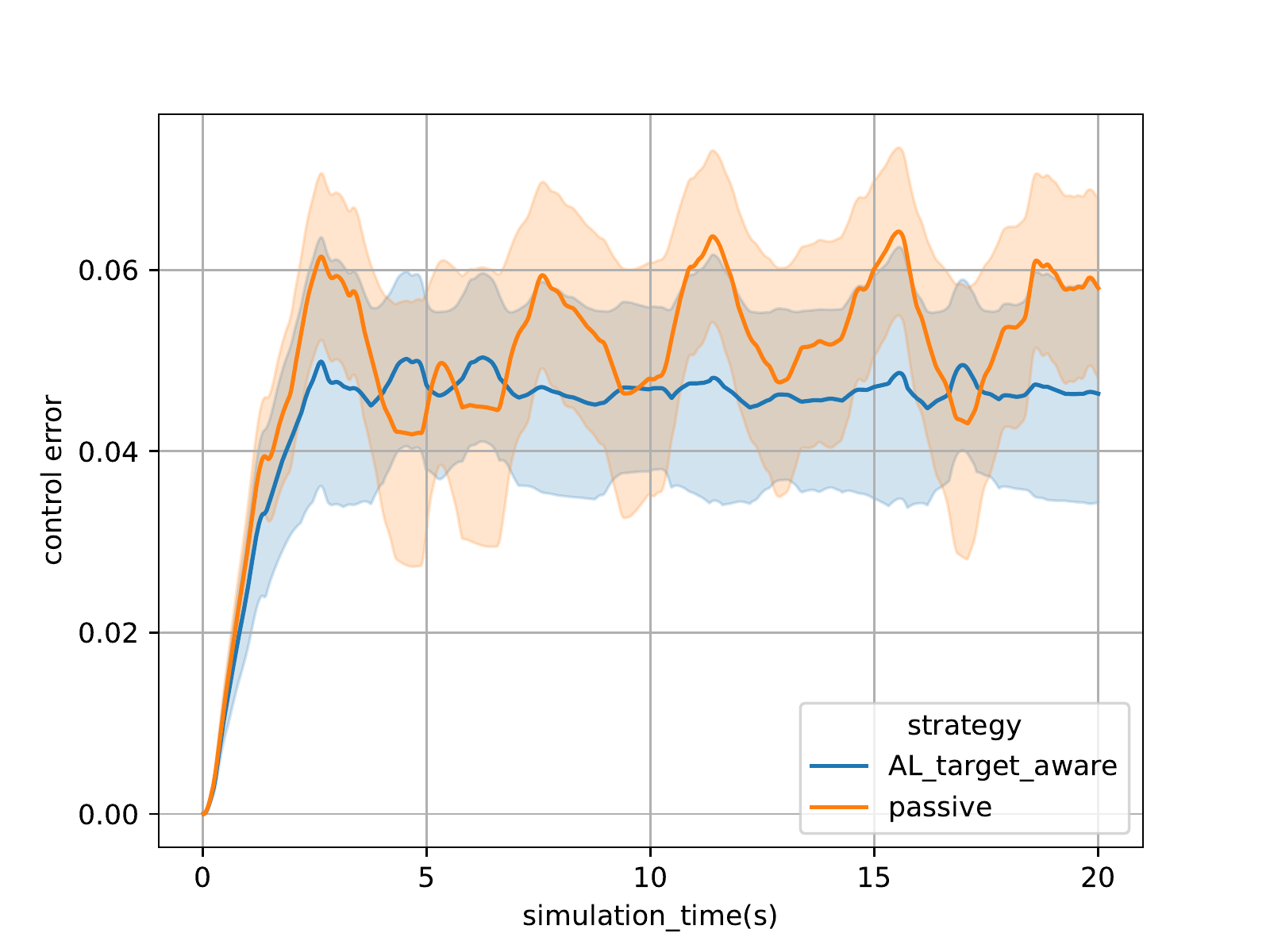}
    \includegraphics[scale = 0.37]{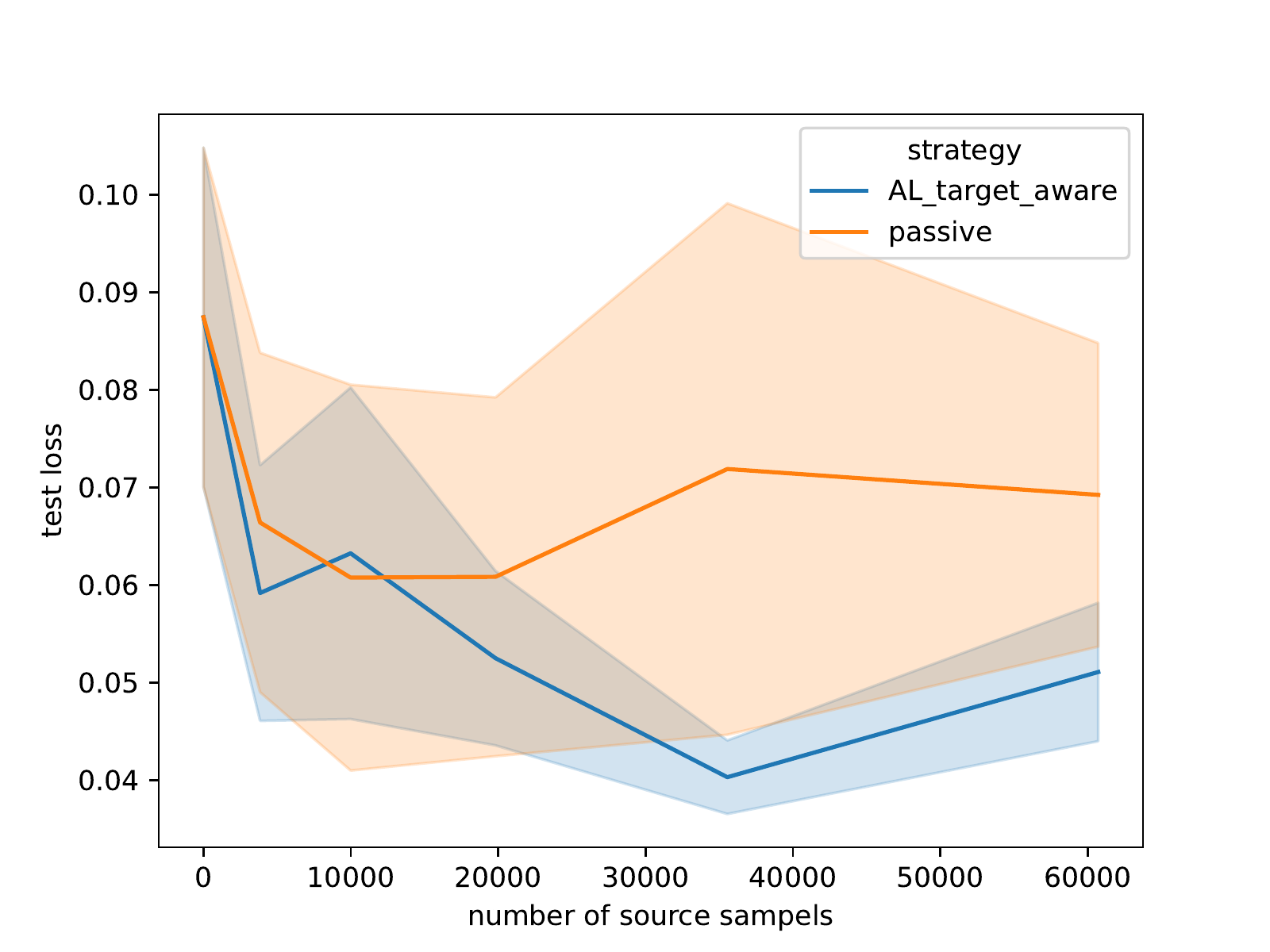}
    \includegraphics[scale = 0.37]{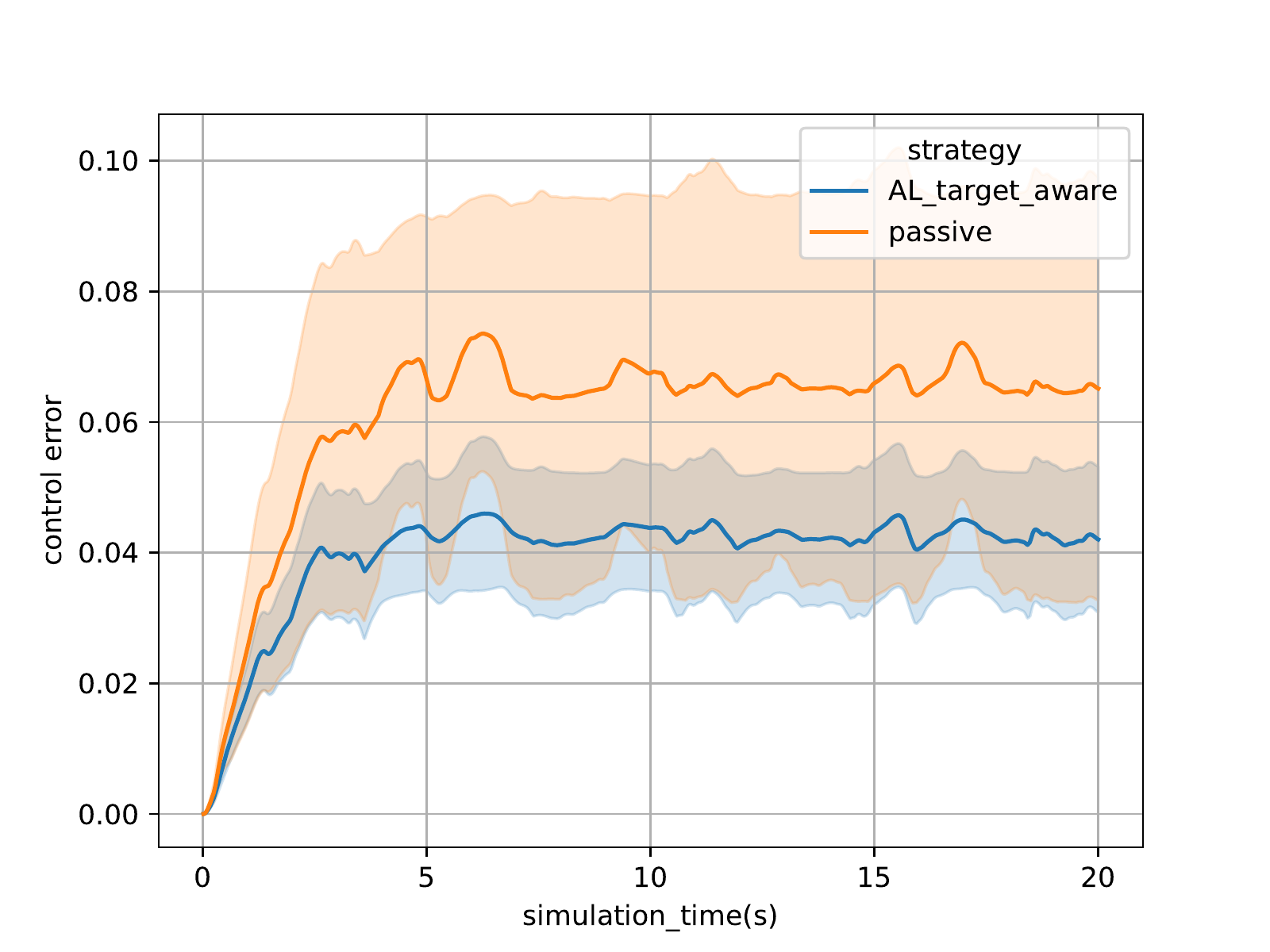}
    \includegraphics[scale = 0.37]{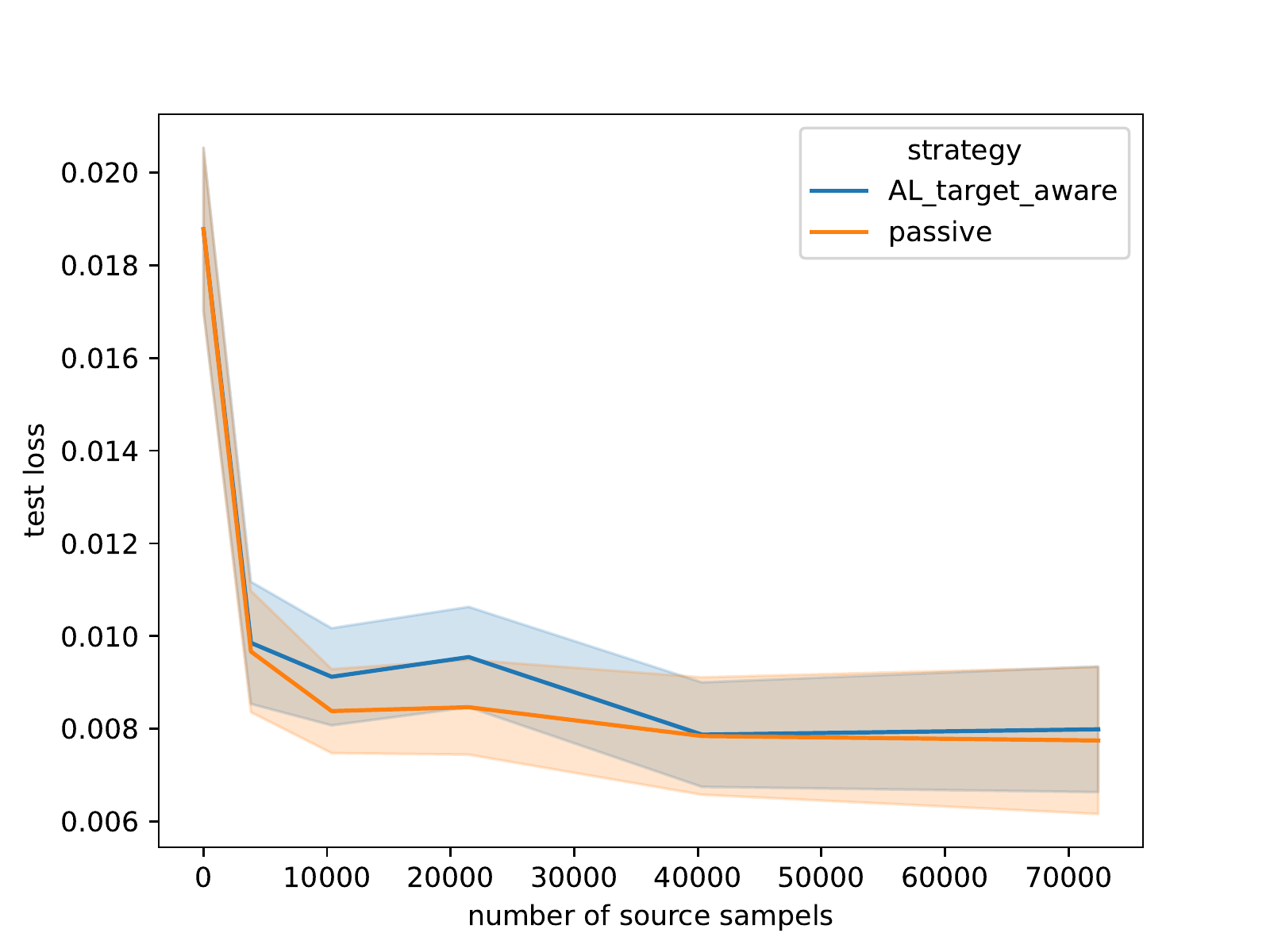}
    \includegraphics[scale = 0.37]{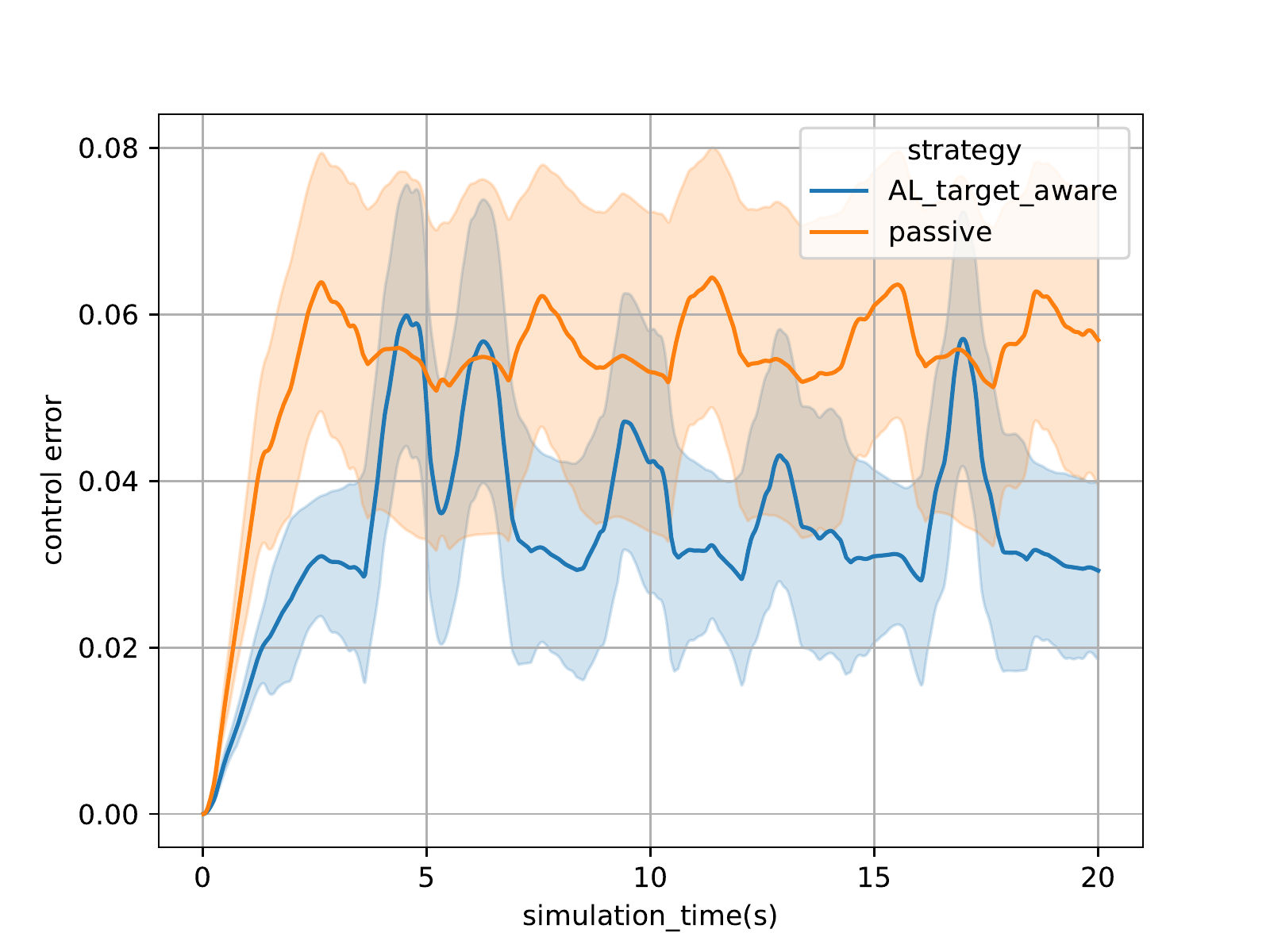}
    \includegraphics[scale = 0.37]{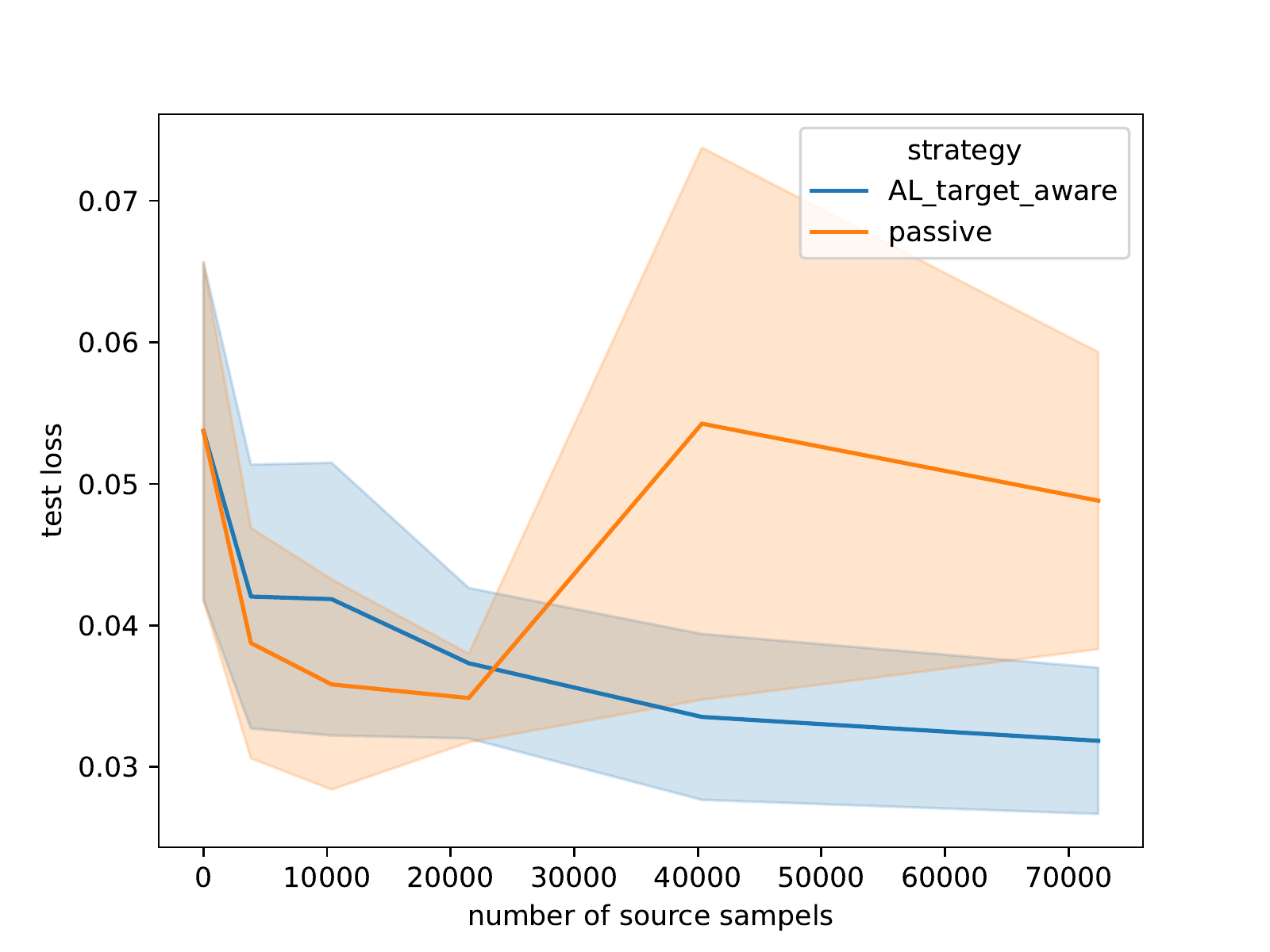}
    \includegraphics[scale = 0.37]{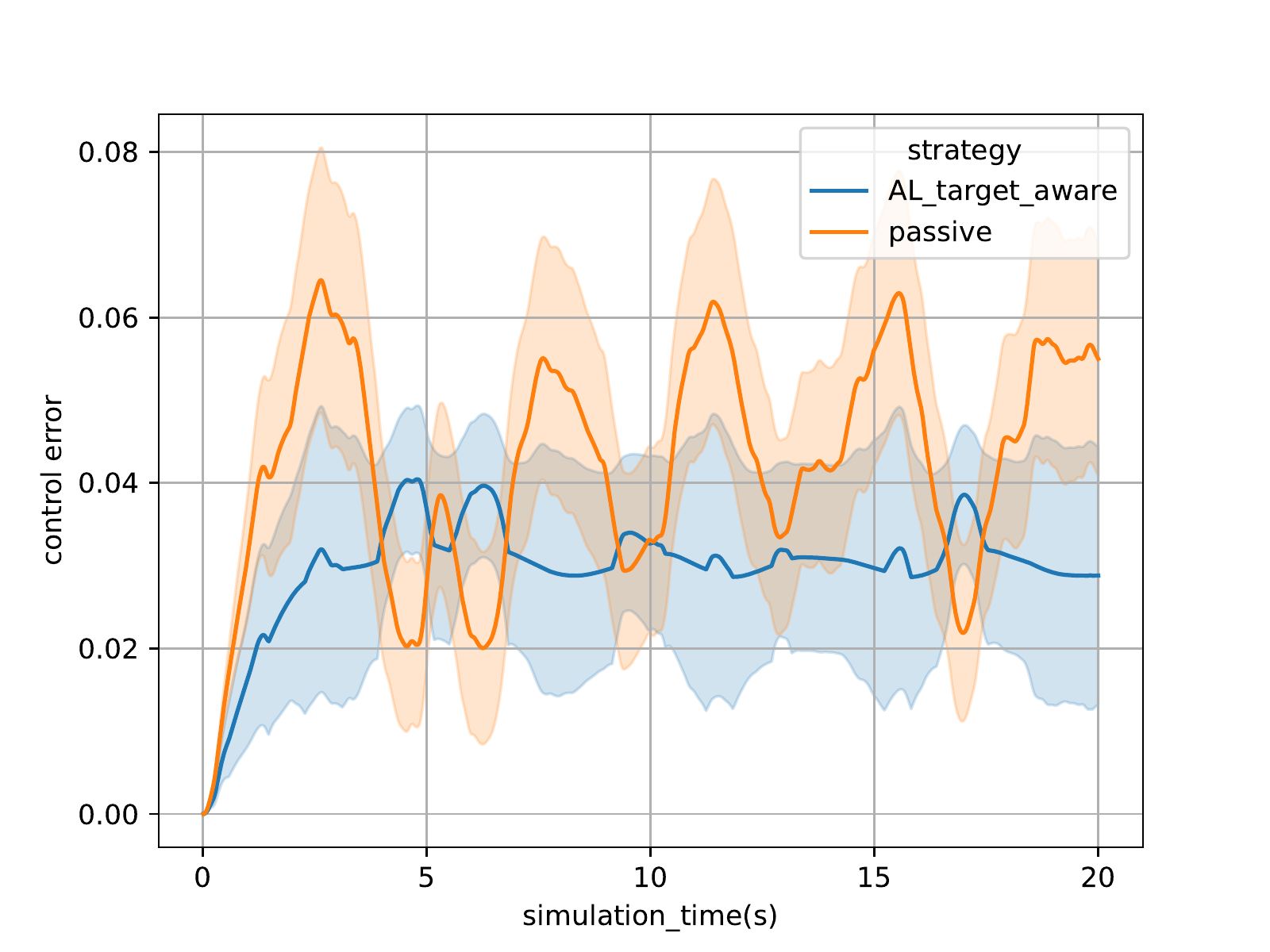}
    \caption{\textbf{Results on pendulum simulator for a specific target}. 
    \textbf{Left: } The test loss of the estimated model $\hat{f}$. The passive strategy suffers from negative transfer while the active strategy steadily decreases. 
    \textbf{Right: } The control error using final output $\hat{f}$. Here we use a model-based nonlinear policy $\pi(x,\hat{f})$. The model learned from active strategy leads to better control performance.
    From top to bottom, we have the unobservable $w_\text{actual\_target}$ as $[0, 0, 0.5, 0, 0.5, 0],[0, 0, 1, 1, -1, 0],[0, -1, 0.5, 0, 0.5, 0],[0, 0.1, 0, -1, 0.5, 0]$. Overall, although AL does not always have a dominating advantage, most times it is more stable and can gain better test loss at the end.
    }
    \label{fig: pendulum 1}
\end{figure}

\subsection{Real-world drone flight dataset}
\label{sec: drone (appendix}

\subsubsection{Settings}

\paragraph{The training model and optimizer} Here we use two layer MLP model as specified below. For the training methods, we do joint-GD as before using AdamW with $lr = 0.005$ and batch\_size$=1000$. Other common parameters are specified in Table.~\ref{table: drone}.
\begin{table}[ht]
\begin{tabular}{|l|l|l|l|l|l|l|l|l|}
\hline
target number & $d_X$  & $d_{\psi_X}$ & $d_W$                & $d_{\psi_W}$   & $k$ & $\phi$ structure                          \\ \hline
500           & 11     & 11           & 18 \text{one-hot}    & 18             & 2   & MLP with hidden layers $[11, 2]$           \\ \hline
\end{tabular}
\caption{Model parameters for drone dataset.}
\label{table: drone}
\end{table}

\paragraph{Data generation} We use the same data as stated in the main paper. 

\paragraph{Detailed implementation for AL strategy} Unlike the previous two settings where the task space $\calW$ is continuous, here we consider a discrete task space. Therefore the Algo.~\ref{algo: main} no longer works. Therefore, here we use a similar technique as the Algorithm proposed in \cite{chen2022active}, which can be seen as a special case under the general Algo.~\ref{algo: main-general}. We want to emphasize that this choice is due to the limitation of real-world datasets, i.e., we can not arbitrarily query $w$ to sample, and the main purpose is to show the potential of such a framework in real-world robotics applications.

\subsubsection{Results}
In the main paper, we provide the result when assuming a bilinear underlying model. Here we further show the effectiveness of our methods under nonlinear $\phi_X$. 

\begin{figure}[ht]
    \centering
    \includegraphics[scale = 0.6]{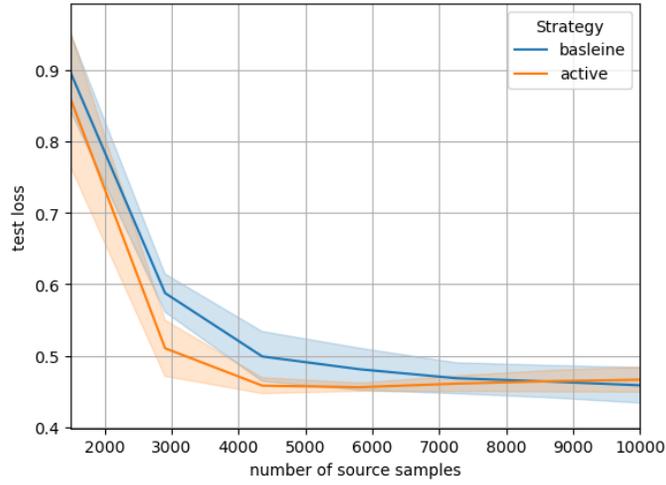}
    \caption{\textbf{Results on the real drone dataset}  with target \texttt{drone\_type\_A\_30\_z} by using a neural net model. Our active strategy could converge faster than the passive strategy in the neural net model setting. Active strategy is able to converge faster than uniform sampling with smaller variances in the latter stage.} 
    \label{fig: drone 1}
\end{figure}

\begin{figure}[ht]
    \centering
    \includegraphics[scale = 0.5]{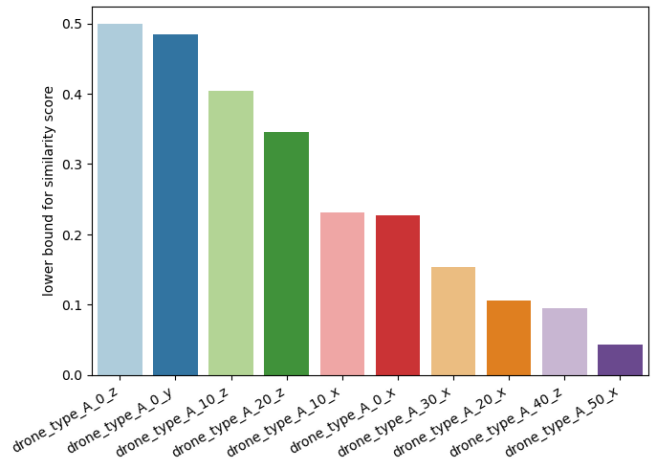}
    \caption{ Top 10 the most similar source tasks. Again, given the target environment, the algorithm successfully finds the other \texttt{drone\_type\_A} environments as relevant sources, which aligns with our observation in the main paper.} 
    \label{fig: drone 2}
\end{figure}


\end{document}